\newtheorem{assumption}[]{Assumption}
\newtheorem{remark}{\textbf{Remark}}
\newtheorem{theorem}{Theorem}[]
\newtheorem{corollary}{Corollary}[theorem]
\newtheorem{lemma}[]{Lemma}
\DeclareMathOperator{\E}{\mathbb{E}}
\def\E{\mathbb{E}}
\newcommand{\RNum}[1]{\uppercase\expandafter{\romannumeral #1\relax}}
\renewcommand{\fnum@figure}{Fig.~\thefigure}
\title{\LARGE \bf
Improved Convergence Analysis and SNR Control Strategies for Federated Learning in the Presence of Noise
}
\author{Antesh Upadhyay and Abolfazl Hashemi
\thanks{Antesh Upadhyay and Abolfazl Hashemi are with the School of Electrical and Computer Engineering, Purdue University, West Lafayette, IN 47907, USA.}
}
\begin{document}

\maketitle
\thispagestyle{empty}
\pagestyle{empty}

\begin{abstract}
We propose an improved convergence analysis technique that characterizes the distributed learning paradigm of federated learning (FL) with imperfect/noisy uplink and downlink communications. Such imperfect communication scenarios arise in the practical deployment of FL in emerging communication systems and protocols. The analysis developed in this paper demonstrates, for the first time, that there is an asymmetry in the detrimental effects of uplink and downlink communications in FL. In particular, the adverse effect of the downlink noise is more severe on the convergence of FL algorithms. Using this insight, we propose improved Signal-to-Noise (SNR) control strategies that, discarding the negligible higher-order terms, lead to a similar convergence rate for FL as in the case of a perfect, noise-free communication channel while incurring significantly less power resources compared to existing solutions. In particular, we establish that to maintain the $\mathcal{O}(\frac{1}{\sqrt{K}})$ rate of convergence like in the case of noise-free FL, we need to scale down the uplink and downlink noise by  $\Omega({\sqrt{k}})$ and $\Omega({k})$ respectively, where $k$ denotes the communication round, $k=1,\dots, K$. Our theoretical result  is further characterized by two major benefits: firstly, it does not assume the somewhat unrealistic assumption of \textit{bounded client dissimilarity}, and secondly, it only requires smooth non-convex loss functions, a function class better suited for modern machine learning and deep learning models.  We also perform extensive empirical analysis to verify the validity of our theoretical findings.
\end{abstract}

\section{Introduction}
\label{sec:introduction}
\PARstart{T}{he} advancements in the field of Machine Learning (ML) are attributable to the increasing ability to generate and process data from various edge devices such as sensors, mobile phones, and the internet of things (IoT) devices. However, traditional ML approaches that rely on storing data on a server can be problematic in terms of the privacy of the user and scalability. To address these issues, we see the eminent shift towards distributed and collaborative ML approaches, such as consensus-based distributed optimization and Federated learning (FL) \cite{ren2005consensus,jadbabaie2003coordination,nedic2009distributed,nedic2014distributed,nedic2016stochastic,kairouz2019advances, lim2020federated}, which allow for learning to occur without the need for central data storage. In this paper, our primary focus is on the FL setting. The algorithm popularly known as Federated Averaging (\texttt{FedAvg}) presented in \cite{mcmahan2017communication}, provided the foundation for FL. Breaking the quintessential model of traditional ML, \texttt{FedAvg}, preserves the privacy of the agent (typically referred to as clients in FL) by allowing them to retain their data. In \texttt{FedAvg}, a set of agents (referred to as clients in FL), based on their local data,  perform the Stochastic Gradient Descent (SGD) iteratively for a certain number of local steps and then transmits their updated model parameters to a central server, which then averages these updates and, in turn, updates the global model. Iterative communication between servers and clients and the collaborative nature of FL shows the importance of communication vis-\`a-vis, \texttt{FedAvg} and other FL and consensus-based methods, and it has been an active area of research in terms of improving the efficiency and resiliency of such algorithms \cite{konevcny2016federated,das2021privacy,tang2018communication,wen2017terngrad,zhang2017zipml,savas2021physical,das2022faster,raja2021communication}.
\subsection{Realted Works}
Several recent results, where improving communication efficiency is the core, focus mainly on reducing the number of communication rounds \cite{mcmahan2017communication,li2020federated}, or the size of information during transmission \cite{reisizadeh2020fedpaq,du2020high,zheng2020design,hashemi2021benefits,chen2021communication,chen2021decentralized}. However, in most of these studies, the process of communication from the server to clients (\textit{downlink}) and then from clients to the server (\textit{uplink}), a perfect communication link is often assumed. Now, some literature investigates the impact of having a noisy transmission channel but only studies the effect of noisy uplink transmission \cite{amiri2020federated,zhu2019broadband,xia2021fast,sery2021over,guo2020analog}. However, only a few articles in the literature deal with the impact of only downlink noise or both noises \cite{wei2022federated}. A major consideration among all these works is their somewhat restrictive  assumptions that typically are not satisfied in practical settings or are hard to verify. For instance, in \cite{amiri2020federated}, where they analyze the effect of downlink noise, they assume a perfect uplink communication channel. 
Additionally, existing research that studies both uplink and downlink noise focuses on the modification of the training of ML models. Reference \cite{ang2020robust} aims to counter the effect of noise by modifying the loss function to consider the addition of noise as a regularizer. Similarly \cite{tang2019doublesqueeze,yu2019double,chen2020scalecom, hashemi2021benefits} focus on compressing the gradients to counter the effect of noisy transmission channels. Since compression inherently adds noise to the message communicated, it poses an adverse impact on model convergence. The limitations of these works serve as our primary motivation as we aim to understand the impact of the uplink and downlink communication on the performance of \texttt{FedAvg} by developing an improved analysis to reduce the burden of intensive power consumption while relaxing the assumptions of convexity and bounded client dissimilarity (BCD) required by the existing works.

Recently, \cite{wei2022federated}  studies the impact of both uplink and downlink noises with restrictive assumptions of strong-convexity and Bounded Client Dissimilarity (BCD) \cite{karimireddy2020scaffold}. To avoid the client-drift\cite{karimireddy2020scaffold}, a standard assumption used in FL is BCD (refer to \cref{eq:bcd}). This drift occurs due to multiple local SGD updates on clients with non-IID data distribution, which prohibits the algorithm from converging to the global optimum. Nevertheless, the result of \cite{wei2022federated} has an important shortcoming: the analysis is not tight due to which while the model converges, the dominant terms in the convergence error depend on noise characteristics and as a result, the Signal-to-Noise Ratio (SNR) scaling policy requires more power compared to our results.
\subsection{Contributions}
The contributions of our work are motivated towards mitigating the restrictions imposed in previous literature. Unlike the previous work, \cite{wei2022federated}, we propose an analysis of smooth non-convex \texttt{FedAvg} with noisy (both uplink and downlink) communication channels and without the BCD assumption. 
We leverage the non-negativity of the typical loss functions in optimization/ML and their smoothness in conjunction with a novel sampling technique to avoid using BCD while establishing  our improved convergence results. 
We present the results of our analysis in Theorem \ref{thm-noisy-fedavg} and \Cref{coro-noisy-fedavg}, which shows that the effect of downlink noise, i.e., $\mathcal{O}(1)$, is more degrading than uplink noise, i.e., $\mathcal{O}(\frac{1}{\sqrt{K}})$ where $K$ is the number of communication rounds. Hence, following these results, we draw an inference that as long as we control the effect of downlink and uplink noise such that they do not dominate the inherent noisy communication aspect of SGD, the convergence of the model can be achieved while limiting the adverse effect of noise to negligible higher-order terms. In particular, we demonstrate both theoretically and empirically that in order to maintain the convergence rate of $\mathcal{O}(\frac{1}{\sqrt{K}})$ for the case of noise-free \texttt{FedAvg}, we need to scale down the downlink noise by $\Omega({k})$ and uplink noise by $\Omega({\sqrt{k}})$, or, equivalently scale the downlink noise by $\Omega(\frac{1}{k})$ and uplink noise by $\Omega(\frac{1}{{\sqrt{k}}})$. These scaling rates ensure that the noise appears as a higher-order term, not as a dominant term\footnote{Refer \cref{thm-noisy-fedavg} and \cref{coro-noisy-fedavg} for more details about the effect of noises.}.

To summarize, the contributions of this paper are as follows:
\begin{itemize}
    \item We provide an improved analysis alongside the complete proof of FL under the presence of uplink and downlink noise without using any constraining assumption which results in tighter convergence analysis. 
    \item In reference to \cref{coro-noisy-fedavg}, we provide empirical results that show the asymmetric effect of both uplink and downlink noises. We provide plots that verify that for a constant number of communication rounds, uplink noise scales as $\mathcal{O}(\frac{1}{E^{2}\sqrt{r}})$, while the term corresponding to downlink noise is $\mathcal{O}(1)$. Here $r$ denotes the number of clients participating in each round.
    \item In this paper, we present a method for controlling the SNR ratio in order to mitigate the deleterious effects of noise on both the uplink and downlink communication channels. The proposed scaling policy is more accommodating and robust towards higher noise concentrations as well. Our analysis demonstrates that this approach leads to improved performance.
    \item We provide empirical results on both synthetic and real-world deep learning experiments on datasets such as MNIST, Fashion-MNIST, CIFAR-10, and FEMNIST datasets, to establish the efficacy and validity of the proposed technique. 
\end{itemize}

\section{Preliminaries and System-Model}\label{sec:prelim}
The setting of the problem follows the traditional FL scenario presented in \cite{mcmahan2017communication} (see also \Cref{fig:noisy-fedavg-diagram}).
In a standard FL setting, we have a central server and a set of $n$ clients, each having their local training data. The $i^{th}$ client stores their local data sampled from a distribution $\mathcal{D}_i$. The central server aims to train a machine learning model on the client's local data, parameterized by $\bm{w} \in \mathbb{R}^{d}$. Then, $\bm{f}_i(\bm{w})$ is the expected loss over a sample $\bm{x}$ drawn from $\mathcal{D}_i$ with respect to a loss function $\ell$ for the $i^{th}$ client. The primary objective of the central server is to minimize the loss $\bm{f}(\bm{w})$ over $n$ clients, i.e., 
\begin{equation}
    \label{eq:objective-function}
    \bm{f}(\bm{w}) := \frac{1}{n}\sum_{i=1}^{n}\bm{f}_i(\bm{w})\  \text{\&} \ \bm{f}_i(\bm{w}) = \mathbb{E}_{\bm{x}\sim \mathcal{D}_{i}}[\ell(\bm{x}, \bm{w})].
\end{equation}

Also, to emulate an FL setting in practice, we consider partial client participation, i.e., a set of $r$ clients chosen uniformly at random without replacement
from a set of $n$ clients, whereas in the case of full participation, $r = n$. Such an assumption is motivated by the consideration that the clients may have limited communication capabilities and not all will be able to collaborate at every communication round. We assume that these clients have access to the unbiased stochastic gradient of their individual losses which is denoted by $\widetilde{\nabla}f_i(\bm{w};\mathcal{B})$ computed at $\bm{w}$ over a batch of samples $\mathcal{B}$. In addition, $\bm{K}$ denotes the communication rounds, and $\bm{E}$ represents the number of local iterations for each communication round.

The FL process can be thought of as an iterative, three-step pipeline: 1) global model update from the central server to the clients over a noisy channel, i.e., noisy downlink communication, 2) client-level computation, and 3) sending updated model parameters from the clients to the server over a noisy channel, i.e, noisy uplink communication. We will discuss these steps next.
\subsection{Noisy downlink communication}
The central server sends the global model parameter, $\bm{w}_k$, to the set of $r$ clients denoted by $\mathcal{S}_k$ chosen uniformly at random without replacement. Now due to disturbances and distortion in the communication channel, these clients receive a noisy version of the global model parameter, i.e.,
\begin{equation}
    \label{eq:noisy-global-update}
    \bm{w}_{k, 0}^{(i)} = \bm{w}_k + \bm{\nu}_k^{(i)}\text{,}
\end{equation}
where $\bm{\nu}_k^{(i)} \in \mathbb{R}^{d}$ is the zero mean random downlink noise and $\bm{w}_{k, 0}^{(i)}$ is the received model to the $i^{th}$ client.
Subsequently, the SNR we get for the $i^{th}$ client for $k^{th}$ downlink communication round can be written as,
\begin{equation}
    \label{eq:SNR-downlink}
    \mathrm{SNR}_{k, (i)}^D = \frac{\mathbb{E}[||\bm{w}_k||^2]}{\mathbb{E}[||\bm{\nu}_k^{(i)}||^2]}.
\end{equation}
Since we assumed that $\bm{\nu}_k^{(i)}$ is a  zero mean noise, the variance can be written as: 
\begin{equation}
    \label{eq:var-downlink}
    \bm{N}^{2}_{k,i}:=\mathbb{E}\Big[\Big\|\bm{\nu}_k^{(i)}\Big\|^2\Big].
\end{equation}
\begin{figure}[t]
\includegraphics[width=1\linewidth]{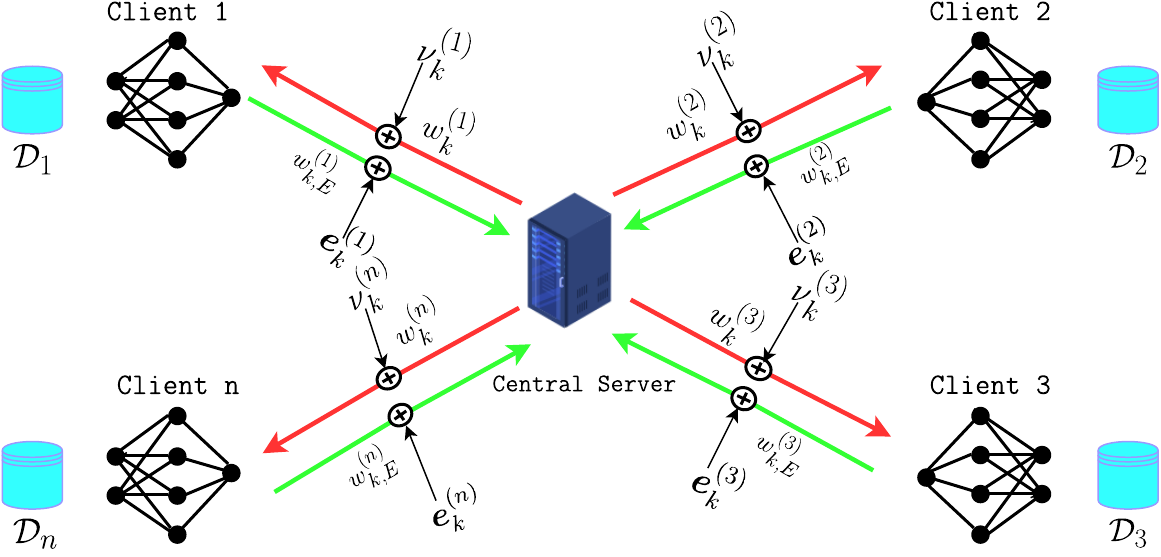}
\caption{Problem Setting: an FL system with uplink and downlink communication noise.}
\label{fig:noisy-fedavg-diagram}
\vspace{-4mm}
\end{figure}
\subsection{Client level computation}
Each client performs a local computation on its data using the updated noisy global model parameter. We use mini-batch SGD for training the model and updating the weights iteratively. This can be referenced from lines 6 to 9 in Algorithm \ref{alg:noisy-fed-avg} and written as,
\begin{align}
    \label{eq:noisy-client-level-computation}
    \bm{w}_{k,\tau+1}^{(i)} = \bm{w}_{k,\tau}^{(i)} - \eta_{k} \widetilde{\nabla} f_i(\bm{w}_{k,\tau}^{(i)};\mathcal{B}_{k,\tau}^{(i)}),
    \\
    \nonumber
    \forall \tau = 0,1, \dots, E-1,
\end{align}
where $\mathcal{B}_{k,\tau}^{(i)}$ represents the random batch of samples in client $i$ for $\tau^{th}$ local iteration. 
\subsection{Noisy uplink communication}
After the local computation, the clients in $\mathcal{S}_k$ send their local model to the central server. Similar to the downlink case, due to disturbances and distortion in the communication channel, the central server receives a noisy version of local weights which can be seen from line 10 in Algorithm \ref{alg:noisy-fed-avg} and is formulated as,
\begin{equation}
    \label{eq:noise-model-update}
    \bm{w}_{k,0}^{(i)} - \bm{w}_{k, E}^{(i)} + \bm{e}_{k}^{(i)},
\end{equation}
where $\bm{e}_k^{(i)} \in \mathbb{R}^{d}$ is a zero mean random noise. Subsequently, the SNR we get for the $i^{th}$ client for $k^{th}$ uplink communication round can be written as,
\begin{equation}
    \label{eq:SNR-uplink}
    \mathrm{SNR}_{k, (i)}^U = \frac{\mathbb{E}[||\bm{w}_{k,0}^{(i)}- \bm{w}_{k,E}^{(i)}||^2]}{\mathbb{E}[||\bm{e}_k^{(i)}||^2]}.
\end{equation}
Since we assumed that $\bm{e}_k^{(i)}$ is a  zero mean
noise, the variance can be depicted as: 
\begin{equation}
    \label{eq:var-uplink}
    \bm{U}^2_{k,i} := \mathbb{E}\Big[\Big\|\bm{e}^{i}_k\Big\|^2\Big].
\end{equation}
Finally, the weights received from all the participating clients are aggregated and
formulated in line 12 in Algorithm \ref{alg:noisy-fed-avg} as
\begin{equation}
    \label{eq:noise-model-aggregation}
    \bm{w}_{k+1} = \bm{w}_k -  \frac{1}{r}\sum_{i \in \mathcal{S}_k}(\bm{w}_{k,0}^{(i)} - \bm{w}_{k,E}^{(i)} + \bm{e}^{(i)}_k),
\end{equation}
and the process continues again for all the communication rounds.
\subsection{Main assumptions}
Before we start the analysis, the following is the set of assumptions that we make. Assumptions \ref{as1}, \ref{as-may15}, and \ref{as-sfo} are standard used in analyzing FL setting \cite{das2022faster,karimireddy2020scaffold}. Assumption \ref{as-july27} is referred to as \textbf{Noise model} is also used in \cite{wei2022federated}.
\begin{assumption}[\textbf{Smoothness}]
\label{as1} 
$\ell(\bm{x},\bm{w})$ is $L$-smooth with respect to $\bm{w}$, for all $\bm{x}$. Thus, each $f_i(\bm{w})$ ($i \in [n]$) is $L$-smooth, and so is $f(\bm{w})$.
\begin{equation*}
    \| \nabla f_{i}(\bm{w_1}) - \nabla f_{i}(\bm{w_2})\| \leq L\| \bm{w_1} - \bm{w_2}\|; \text{for any  }i, \bm{w_1}, \bm{w_2}.
\end{equation*}
\end{assumption}
\Cref{as1} is a commonly used assumption in convergence analysis of gradient descent based algorithms, \cite{lian2017can,wang2021cooperative} and it restricts the sudden change in gradients.
\begin{assumption}[\textbf{Non-negativity}]
\label{as-may15}
Each $f_i(\bm{w})$ is non-negative and therefore, $f_i^{*} \triangleq \min f_i(\bm{w}) \geq 0$.
\end{assumption}
\Cref{as-may15} is a standard assumption made and is satisfied by most of the loss functions used in practice. However, if in case a loss function is negative the assumption can be achieved by simply adding a constant offset.
\begin{assumption}[\textbf{Bounded Variance}]
\label{as-sfo}
The variance of the stochastic gradient for each client $i$ is bounded: $\mathbb{E}[||\widetilde{\nabla} f_i(\bm{w}_{k,\tau}^{(i)};\mathcal{B}_{k,\tau}^{(i)}) - \nabla f_i(\bm{w}_{k,\tau}^{(i)})||^2] \leq \sigma^2,\ \forall i = 1, \dots, n$, where $\mathcal{B}_{k,\tau}^{(i)}$ represents the random batch of samples in client $i$ for $\tau^{th}$ local iteration.
\end{assumption}
The \cref{as-sfo} is commonly used in analyzing the convergence of gradient descent-based algorithms, as seen in various works such as \cite{yu2019parallel,li2019convergence,nguyen2018sgd,wei2022federated}. However, some other works have used a stricter assumption that assumes uniformly bounded stochastic gradients, i.e., $\mathbb{E}[||\widetilde{\nabla} f_i(\bm{w}_{k,\tau}^{(i)};\mathcal{B}_{k,\tau}^{(i)}) ||^2] \leq \sigma^2$. This assumption is stronger than \cref{as-sfo} and also does not hold true for convex loss functions as shown in \cite{nguyen2018sgd}.
\begin{assumption}[\textbf{Noise model}]
\label{as-july27}
Both the downlink and uplink noise are independent and have zero mean i.e., $\mathbb{E}[\bm{\nu}_k^{(i)}] = 0$ and $\mathbb{E}[\bm{e}_{k}^i] = 0$ and have a bounded variance i.e., $\mathbb{E}[||\bm{e}_{k}^i||^2]=\bm{U}_{k}^2< \infty$ and $\mathbb{E}[||\bm{\nu}_k^{(i)}||^2]= \bm{N}_{k}^2<\infty$.
\end{assumption}
In a noisy communication scenario, the modelling of noise as an additive white Gaussian noise (AWGN) is extremely common, refer \cite{amiri2020federated,ang2020robust}. Here, \cref{as-july27} provides a weaker notion of  AWGN and makes the problem more general and diverse.
\section{Noisy-FedAvg: Improved Analysis}\label{sec:NF-Analysis}
In this section, we describe the improved convergence analysis of the proposed algorithm.
In addition to addressing complications arising from the simultaneous presence of both uplink and downlink noises, our analysis in this section is done without assuming \textit{Bounded Client Dissimilarity} (BCD) that aims to  limit the extent of client heterogeneity and is a frequently-used assumption in FL theory; see, e.g. \cite{wei2022federated,li2019convergence,karimireddy2020scaffold} is the BCD assumption, i.e.,
\begin{equation}
    \label{eq:bcd}
    \|\nabla f_i(\bm{w}) - \nabla f(\bm{w})\|^2 \leq G^2 \text{ } \forall \text{ } \bm{w}, i \in [n],
\end{equation}
where $G$ is a large constant.  
\begin{algorithm}[t]
	\caption{\texttt{Noisy-FedAvg} 
	}
	\label{alg:noisy-fed-avg}
	\begin{algorithmic}[1]
		\STATE {\bfseries Input:} 
		Initial point $\bm{w}_0$, \# of communication rounds $K$, period $E$, learning rates  $\{\eta_{k}\}_{k=0}^{K-1}$, and global batch size $r$.
		\FOR{$k =0,\dots, K-1$}
		\STATE 
		Server sends $\bm{w}_k$ to a set $\mathcal{S}_k$ of $r$ clients chosen uniformly at random without replacement.
		\FOR{client $i \in \mathcal{S}_k$}
		\STATE {\bfseries Downlink communication:}
		Broadcasting $\bm{w}_k$ through a noisy downlink communication channel having zero mean. Set $\bm{w}_{k,0}^{(i)} = \bm{w}_k + \bm{\nu}_k^{(i)}$, where $\bm{\nu}_k^{(i)}$ is the downlink noise.
		\FOR{$\tau = 0,\ldots,E-1$}
		\STATE Pick a random batch of samples 
		in client $i$, $\mathcal{B}_{k,\tau}^{(i)}$.
		Compute the stochastic gradient of $f_i$ at $\bm{w}_{k,\tau}^{(i)}$ over 
		$\mathcal{B}_{k,\tau}^{(i)}$, viz. $\widetilde{\nabla} f_i(\bm{w}_{k,\tau}^{(i)};\mathcal{B}_{k,\tau}^{(i)})$.
		\STATE Update $\bm{w}_{k,\tau+1}^{(i)} = \bm{w}_{k,\tau}^{(i)} - \eta_{k} \widetilde{\nabla} f_i(\bm{w}_{k,\tau}^{(i)};\mathcal{B}_{k,\tau}^{(i)})$.
		\vspace{0.1 cm}
		\ENDFOR
		\STATE {\bfseries Uplink communication:}
		$(\bm{w}_k - \bm{w}_{k,E}^{(i)} )$ goes to the server through a noisy uplink communication channel having zero mean. So, send $(\bm{w}_{k,0}^{(i)} - \bm{w}_{k,E}^{(i)} +\bm{e}_{k}^{(i)})$, where $\bm{e}^{(i)}_k$ is the uplink noise.
		\label{line:fedavg-1}
		\ENDFOR
		\STATE Update $\bm{w}_{k+1} = \bm{w}_k -  \frac{1}{r}\sum_{i \in \mathcal{S}_k}(\bm{w}_{k,0}^{(i)} - \bm{w}_{k,E}^{(i)} + \bm{e}^{(i)}_k)$.
		\label{line:fedavg-2}
		\ENDFOR
	\end{algorithmic}
\end{algorithm}
Furthermore, in contrast to  \cite{wei2022federated,li2019convergence}, we do not make any assumption about the \textit{strong convexity} of the loss function.

\subsection{Noisy-SGD}
To give more insight into the analysis of \texttt{Noisy-FedAvg} and its implications we first consider a fictitious scenario where a noisy version of  SGD is employed to minimize a stochastic, non-convex, and $L$-smooth function with the following update
\begin{equation}
    \label{eq:noisy-SGD}
    \bm{w}_{t+1} = \bm{w}_{t} - \eta[\bm{e}_{t} + \widetilde\nabla {f}( \bm{w}_{t} + \bm{\nu}_{t}; \mathcal{B}_t)],
\end{equation}
where $\bm{e}_{t}$ and $\bm{\nu}_{t}$ can be thought of as uplink and downlink noise, respectively.
The purpose of this analysis is to shed light on the effect of noise on SGD-based FL algorithms.
\begin{theorem}[\textbf{Smooth non-convex case for \texttt{Noisy-SGD}}]
\label{thm-noisy-SGD}
Let $f:\mathbb{R}^{d} \rightarrow \mathbb{R}$ be a $L$-smooth non-convex function and $f^{*} := \inf_{w \in \mathbb{R}^{d}} f(w)$. Consider the noisy-SGD method with the update in \cref{eq:noisy-SGD}. Let 
$\bm{e}_{t}$ and $\bm{\nu}_{t}$ satisfy \Cref{as-july27} and the stochastic gradient satisfy \Cref{as-sfo}. If, $\eta_{t} = \eta$ and $\eta \leq \frac{1}{L}$, then for all $t \in \{0,\ldots,T-1\}$ noisy-SGD satisfies
\begin{align}
\label{eq: thm-noisy-SGD}
 \frac{1}{T}\sum_{t = 0}^{T-1}\E_{\{\mathcal{B}_t,\bm{e}_{t},\bm{\nu}_{t}\}_{t=0}^{T-1}}[||\nabla f(\bm{w}_{t}) ||^2]] \leq \frac{2\big(f(\bm{w}_{0}) - f^{*}\big)}{T \eta}  + \eta L\sigma^2 + \underbrace{{\color{black}\frac{L^2}{T}\sum_{t = 0}^{T-1}\bm{N}_{t}^{2}}}_{\text{Term I}} + \underbrace{{\color{black}\frac{\eta L}{T}\sum_{t = 0}^{ T-1}\bm{U}_{t}^{2}}}_{\text{Term II}}.
\end{align}
\end{theorem}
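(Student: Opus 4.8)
The plan is to run the classical descent-lemma argument for smooth non-convex SGD, but with one careful modification in the bookkeeping of the downlink noise that keeps the step-size requirement at the stated $\eta\le 1/L$ (rather than a smaller constant) and keeps the noise variances out of the dominant $\mathcal{O}(1/(T\eta))$ term. First I would invoke $L$-smoothness of $f$ (\Cref{as1}) as the quadratic upper bound applied to the consecutive iterates of \cref{eq:noisy-SGD},
\begin{equation*}
f(\bm{w}_{t+1}) \le f(\bm{w}_t) - \eta\langle \nabla f(\bm{w}_t),\, \bm{e}_t + \widetilde\nabla f(\bm{w}_t+\bm{\nu}_t;\mathcal{B}_t)\rangle + \tfrac{L\eta^2}{2}\norm{\bm{e}_t + \widetilde\nabla f(\bm{w}_t+\bm{\nu}_t;\mathcal{B}_t)}^2 .
\end{equation*}
Then I take the expectation conditioned on $\bm{w}_t$. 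Since $\bm{e}_t$ is zero-mean and independent of $(\bm{\nu}_t,\mathcal{B}_t)$ (\Cref{as-july27}), it drops out of the inner product and decouples from the square, so the square reduces to $\bm{U}_t^2 + \E\norm{\widetilde\nabla f(\bm{w}_t+\bm{\nu}_t;\mathcal{B}_t)}^2$; applying unbiasedness of the stochastic gradient conditioned on $(\bm{w}_t,\bm{\nu}_t)$ together with \Cref{as-sfo} bounds this by $\bm{U}_t^2 + \sigma^2 + \E_{\bm{\nu}_t}\norm{\nabla f(\bm{w}_t+\bm{\nu}_t)}^2$, while the inner-product term collapses to $-\eta\langle \nabla f(\bm{w}_t),\, \E_{\bm{\nu}_t}[\nabla f(\bm{w}_t+\bm{\nu}_t)]\rangle$.

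The key step is to handle this last term with the polarization identity $\langle a,b\rangle=\tfrac12(\norm{a}^2+\norm{b}^2-\norm{a-b}^2)$ with $a=\nabla f(\bm{w}_t)$, $b=\nabla f(\bm{w}_t+\bm{\nu}_t)$, then bound $\norm{a-b}\le L\norm{\bm{\nu}_t}$ by $L$-smoothness and take $\E_{\bm{\nu}_t}$ of the identity (linearity), using $\E\norm{\bm{\nu}_t}^2=\bm{N}_t^2$. This gives
\begin{equation*}
-\eta\langle \nabla f(\bm{w}_t), \E_{\bm{\nu}_t}[\nabla f(\bm{w}_t+\bm{\nu}_t)]\rangle \le -\tfrac{\eta}{2}\norm{\nabla f(\bm{w}_t)}^2 - \tfrac{\eta}{2}\E_{\bm{\nu}_t}\norm{\nabla f(\bm{w}_t+\bm{\nu}_t)}^2 + \tfrac{\eta L^2}{2}\bm{N}_t^2 .
\end{equation*}
The crucial observation is the negative $-\tfrac{\eta}{2}\E_{\bm{\nu}_t}\norm{\nabla f(\bm{w}_t+\bm{\nu}_t)}^2$: it meets the $+\tfrac{L\eta^2}{2}\E_{\bm{\nu}_t}\norm{\nabla f(\bm{w}_t+\bm{\nu}_t)}^2$ coming from the second-order term with combined coefficient $\tfrac{\eta}{2}(L\eta-1)\le 0$ precisely when $\eta\le 1/L$, so this term is dropped. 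What remains is the one-step recursion
\begin{equation*}
\E[f(\bm{w}_{t+1})\mid\bm{w}_t] \le f(\bm{w}_t) - \tfrac{\eta}{2}\norm{\nabla f(\bm{w}_t)}^2 + \tfrac{\eta L^2}{2}\bm{N}_t^2 + \tfrac{L\eta^2}{2}\bm{U}_t^2 + \tfrac{L\eta^2}{2}\sigma^2 .
\end{equation*}
Rearranging, multiplying by $2/\eta$, taking total expectation, summing over $t=0,\dots,T-1$ (the $f$ terms telescope), using $\E[f(\bm{w}_T)]\ge f^*$, and dividing by $T$ yields \cref{eq: thm-noisy-SGD}. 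Note that only $f^*=\inf f$ is used here, so the non-negativity assumption \Cref{as-may15} plays no role in this warm-up result.

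The main obstacle — and the source of the claimed improvement — is exactly the polarization step above: the naive route of bounding $\norm{\nabla f(\bm{w}_t+\bm{\nu}_t)}^2\le 2\norm{\nabla f(\bm{w}_t)}^2+2L^2\bm{N}_t^2$ inside the quadratic term re-injects an $L\eta^2\norm{\nabla f(\bm{w}_t)}^2$ contribution, which forces $\eta\lesssim 1/(4L)$ and, worse, attaches a stray $\eta^2$ factor to the $\bm{N}_t^2$ term and spoils the clean separation of Terms I and II; carrying the negative noisy-gradient norm all the way into the second-order term is what makes the bound tight. The only other point requiring care is the nesting of conditional expectations — the stochastic gradient is unbiased only given both $\bm{w}_t$ and $\bm{\nu}_t$, while $\bm{e}_t$ and $\bm{\nu}_t$ are independent of the history and of each other — but once this ordering is respected the remaining manipulations are routine.
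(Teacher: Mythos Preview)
Your proposal is correct and follows essentially the same route as the paper: apply the $L$-smoothness descent inequality, eliminate $\bm{e}_t$ from the cross term by its zero mean and independence, split off $\sigma^2$ and $\bm{U}_t^2$ from the quadratic term, then handle the inner product $\langle \nabla f(\bm{w}_t),\nabla f(\bm{w}_t+\bm{\nu}_t)\rangle$ via the polarization identity so that the negative $-\tfrac{\eta}{2}\E\norm{\nabla f(\bm{w}_t+\bm{\nu}_t)}^2$ absorbs the $+\tfrac{L\eta^2}{2}\E\norm{\nabla f(\bm{w}_t+\bm{\nu}_t)}^2$ precisely under $\eta\le 1/L$; your identification of this cancellation as the step that keeps the step-size condition at $1/L$ and isolates Term~I from Term~II matches the paper exactly.
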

\begin{proof}
Using $L$-smoothness assumption we can obtain,
\begin{align}
\label{eq: noisy-SGD-smooth}
f(\bm{w}_{t+1}) \leq f(\bm{w}_{t}) + \langle \nabla f(\bm{w}_{t}), \bm{w}_{t+1} - \bm{w}_{t} \rangle
+ \frac{L}{2}||\bm{w}_{t+1} - \bm{w}_{t}||^2.
\end{align}
\normalsize
Using \cref{eq:noisy-SGD} in \cref{eq: noisy-SGD-smooth} yields
\begin{align}
\label{eq: noisy-SGD-smooth+noise}
f(\bm{w}_{t+1}) \leq f(\bm{w}_{t}) - \underbrace{\eta \langle \nabla f(\bm{w}_{t}), \bm{e}_{t} + \widetilde\nabla {f}( \bm{w}_{t} + \bm{\nu}_{t}; \mathcal{B}_t) \rangle}_{(A)} +\underbrace{\frac{\eta^2 L}{2}||\bm{e}_{t} + \widetilde\nabla {f}( \bm{w}_{t} + \bm{\nu}_{t}; \mathcal{B}_t)||^2}_{(B)}. 
\end{align}
Using $A:$
\begin{align}
\nonumber
    A &= \eta \langle \nabla f(\bm{w}_{t}), \bm{e}_{t} + \widetilde\nabla {f}( \bm{w}_{t} + \bm{\nu}_{t}; \mathcal{B}_t) \rangle
    \\
    \label{eq:nsgd-A}
    & = \eta \langle \nabla f(\bm{w}_{t}), \bm{e}_{t}\rangle + \eta \langle \nabla f(\bm{w}_{t}), \widetilde\nabla {f}( \bm{w}_{t} + \bm{\nu}_{t}; \mathcal{B}_t) \rangle
\end{align}
In \cref{eq:nsgd-A}, taking expectation with respect to $\bm{e}_{t}$, will result in \cref{eq:nsgd-A-final}.
\begin{align}
  \label{eq:nsgd-A-final}
    \E_{\bm{e}_{t}}[A] & = \eta \langle \nabla f(\bm{w}_{t}), \widetilde\nabla {f}( \bm{w}_{t} + \bm{\nu}_{t}; \mathcal{B}_t) \rangle
\end{align}
Using $B:$
\begin{align}
    \nonumber
    B = \frac{\eta^2 L}{2}||\bm{e}_{t} + \widetilde\nabla {f}( \bm{w}_{t} + \bm{\nu}_{t}; \mathcal{B}_t) - \nabla {f}( \bm{w}_{t} + \bm{\nu}_{t}) 
    + \nabla {f}( \bm{w}_{t} + \bm{\nu}_{t})||^2
    \\
    \nonumber
    = \frac{\eta^2 L}{2}||\bm{e}_{t}||^2 + \frac{\eta^2 L}{2}||\widetilde\nabla {f}( \bm{w}_{t} + \bm{\nu}_{t}; \mathcal{B}_t) - \nabla {f}( \bm{w}_{t} + \bm{\nu}_{t})||^2 
    + \frac{\eta^2 L}{2}||\nabla {f}( \bm{w}_{t} + \bm{\nu}_{t})||^2 + 2\langle \bm{e}_{t},\nabla {f}( \bm{w}_{t} + \bm{\nu}_{t})\rangle
    \\
    \nonumber
    + 2\langle \bm{e}_{t}, \widetilde\nabla {f}( \bm{w}_{t} + \bm{\nu}_{t}; \mathcal{B}_t) - \nabla {f}( \bm{w}_{t} + \bm{\nu}_{t})\rangle
    \\
  \label{eq:nsgd-B}
    + 2\langle \widetilde\nabla {f}( \bm{w}_{t} + \bm{\nu}_{t}; \mathcal{B}_t) - \nabla {f}( \bm{w}_{t} + \bm{\nu}_{t}),  \nabla {f}( \bm{w}_{t} + \bm{\nu}_{t})\rangle 
\end{align}
Taking the expectation with respect to $\bm{e}_t$ which has zero mean, alongside the independence assumption of noises in \cref{eq:nsgd-B}, we can re-write $B$ as
\begin{align}
    \nonumber
    \E_{\bm{e}_{t}}[B] =  \frac{\eta^2 L}{2}||\widetilde\nabla {f}( \bm{w}_{t} + \bm{\nu}_{t}; \mathcal{B}_t) - \nabla {f}( \bm{w}_{t} + \bm{\nu}_{t})||^2 
    + 2\langle \widetilde\nabla {f}( \bm{w}_{t} + \bm{\nu}_{t}; \mathcal{B}_t)    - \nabla {f}( \bm{w}_{t} + \bm{\nu}_{t}),  
    \nabla {f}( \bm{w}_{t} + \bm{\nu}_{t})\rangle 
    \\
    \label{eq:nsgd-B-final}
    + \frac{\eta^2 L}{2} \bm{U}_{t}^{2} + \frac{\eta^2 L}{2}||\nabla {f}( \bm{w}_{t} + \bm{\nu}_{t})||^2
\end{align}
Now, putting the results of \cref{eq:nsgd-A-final,eq:nsgd-B-final} in \cref{eq: noisy-SGD-smooth+noise} along with taking the expectation with respect to data, we get
\begin{multline}
\label{eq: noisy-SGD-smooth+expec}
\E_{\mathcal{B}_t,\bm{e}_{t}}[f(\bm{w}_{t+1})] \leq  \E_{\mathcal{B}_t,\bm{e}_{t}}[f(\bm{w}_{t})] +\frac{\eta^2 L}{2}\bm{U}_{t}^2 + \frac{\eta^2 L}{2} \sigma^2 
- \eta \E_{\mathcal{B}_t,\bm{e}_{t}}[\langle \nabla f(\bm{w}_{t}),\nabla {f}( \bm{w}_{t} + \bm{\nu}_{t}) \rangle] \\+ \frac{\eta^2 L}{2}\E_{\mathcal{B}_t,\bm{e}_{t}}[||\nabla {f}( \bm{w}_{t} + \bm{\nu}_{t})||^2 ].
\end{multline}
For any 2 vectors $\bm{a}$ and $\bm{b}$, we have that
\begin{flalign}
    \label{eq:feb28-103}
   - \langle \bm{a}, \bm{b} \rangle = \frac{1}{2}(\|\bm{a} - \bm{b}\|^2-\|\bm{a}\|^2 - \|\bm{b}\|^2).
\end{flalign}
Using this in \cref{eq: noisy-SGD-smooth+expec} we get
\begin{multline}
\label{eq: noisy-SGD-smooth+trickvector}
\E_{\mathcal{B}_t,\bm{e}_{t}}[f(\bm{w}_{t+1})] \leq  \E_{\mathcal{B}_t,\bm{e}_{t}}[f(\bm{w}_{t})] + \frac{\eta^2 L}{2}(\bm{U}_{t}^{2}+\sigma^2)
+ \frac{\eta}{2} \E_{\mathcal{B}_t,\bm{e}_{t}}\Big[||\nabla f(\bm{w}_{t}) -\nabla {f}( \bm{w}_{t} + \bm{\nu}_{t})||^2 -||\nabla {f}( \bm{w}_{t} + \bm{\nu}_{t})||^2 -||\nabla f(\bm{w}_{t}) ||^2\Big]  \\+\frac{\eta^2 L}{2}\E_{\mathcal{B}_t,\bm{e}_{t}}[||\nabla {f}( \bm{w}_{t} + \bm{\nu}_{t})||^2 ].
\end{multline}
If $\eta \leq \frac{1}{L}$, we can drop $\E_{\mathcal{B}_t,\bm{e}_{t}}[||\nabla {f}( \bm{w}_{t} + \bm{\nu}_{t})||^2$ as it will appear with a negative sign in the RHS of  \cref{eq: noisy-SGD-smooth+trickvector}. Consequently, using $L$-smoothness yields
\begin{align}
\label{eq: noisy-SGD-smooth+eta}
\E_{\mathcal{B}_t,\bm{e}_{t}}[f(\bm{w}_{t+1})] \leq  \E_{\mathcal{B}_t,\bm{e}_{t}}[f(\bm{w}_{t})] +\frac{\eta^2 L}{2}(\bm{U}_{t}^{2}+\sigma^2) + \frac{\eta}{2} \E_{\mathcal{B}_t,\bm{e}_{t}}\Big[L^2||(\bm{\nu}_{t})||^2 - ||\nabla f(\bm{w}_{t}) ||^2\Big].
\end{align}
Taking expectation w.r.t. $\bm{\nu}_{t}$ we have
\begin{align}
\label{eq: noisy-SGD-smooth+expec+nu}
\E_{\mathcal{B}_t,\bm{e}_{t},\bm{\nu}_{t}}[f(\bm{w}_{t+1})] \leq  \E_{\mathcal{B}_t,\bm{e}_{t},\bm{\nu}_{t}}[f(\bm{w}_{t})] +\frac{\eta}{2}L^2 \bm{N}_{t}^2 -\frac{\eta}{2} \E_{\mathcal{B}_t,\bm{e}_{t},\bm{\nu}_{t}}[||\nabla f(\bm{w}_{t}) ||^2] + \frac{\eta^2 L}{2}(\bm{U}_{t}^{2}+\sigma^2).
\end{align}
Summing the above equation for $t= 0, 1, \dots, T-1$ and dividing both sides by $T\eta/2$
we obtain,
\begin{align}
\label{eq: noisy-SGD-smooth+expec+nu-final}
\frac{1}{T}\sum_{t = 0}^{T-1} \E_{\{\mathcal{B}_t,\bm{e}_{t},\bm{\nu}_{t}\}_{t=0}^{T-1}}[||\nabla f(\bm{w}_{t}) ||^2] \leq  \frac{2\big(f(\bm{w}_{0}) - f(\bm{w}_{T})\big)}{T \eta} + \eta L \sigma^2 
+ \frac{L^2}{T} \sum_{t=0}^{T-1} \bm{N}_{t}^2 + \frac{\eta L}{T} \sum_{t=0}^{T-1}\bm{U}_{t}^{2}.
\end{align}
Now using the fact that $f^* \leq f(\bm{w}_{T})$ in the equation above, we obtain the stated result in \cref{eq: thm-noisy-SGD}.
\end{proof}
The implication of \cref{eq: thm-noisy-SGD} is that the downlink noise (Term I) is more degrading than the uplink noise (Term II) given that the effect of the latter on the convergence can be controlled by $\eta$. That is, uplink noise slows the convergence rate while the downlink noise may inhibit the convergence. The following corollary describes a SNR control strategy that aims to recover the rate of noise-free SGD by pushing the noise-driven terms, i.e., Terms I and II in the RHS of \cref{eq: thm-noisy-SGD}, to the higher-order term. In the context of this paper, a higher-order term deviant from the dominant term is one that does not control the order of convergence error.
\begin{corollary}
If $\eta \leq \frac{1}{L}$ and $\eta  = \mathcal{O}(\frac{1}{\sqrt{T}})$ and a SNR control strategy is employed such that $\frac{1}{T}\sum_{t = 0}^{T-1}\bm{U}_{t}^{2} = \mathcal{O}(T^{-\delta_1})$ and $\frac{1}{T}\sum_{t = 0}^{ T-1}\bm{N}_{t}^{2} = \mathcal{O}(\frac{1}{T^{0.5+\delta_2}})$ for some $\delta_1,\delta_2>0$, the dominant term in the convergence error of \texttt{Noisy-SGD} will be $\mathcal{O}(\frac{1}{\sqrt{T}})$ which is independent of noise characteristics and hence similar to the noise-free case of SGD. 
\end{corollary}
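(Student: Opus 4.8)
The plan is to obtain the corollary as a direct specialization of \cref{thm-noisy-SGD}: substitute the prescribed step size and SNR-control rates into the four terms on the right-hand side of \cref{eq: thm-noisy-SGD} and track the resulting powers of $T$. No new estimate is needed beyond the theorem. First I would handle the two terms that already appear in the noise-free analysis of SGD. Since $\eta=\mathcal{O}(1/\sqrt{T})$ and $\eta\le 1/L$, \cref{thm-noisy-SGD} applies, the initialization term $\tfrac{2(f(\bm{w}_{0})-f^{*})}{T\eta}$ is $\mathcal{O}(T^{-1/2})$, and the stochastic-gradient term $\eta L\sigma^{2}$ is $\mathcal{O}(T^{-1/2})$; their sum is $\mathcal{O}(1/\sqrt{T})$ with a leading constant depending only on $f(\bm{w}_{0})-f^{*}$, $L$, and $\sigma^{2}$.

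Next I would bound the two noise-driven terms. For Term~I, the downlink policy $\tfrac{1}{T}\sum_{t=0}^{T-1}\bm{N}_{t}^{2}=\mathcal{O}(T^{-(0.5+\delta_{2})})$ gives $\text{Term~I}=L^{2}\,\mathcal{O}(T^{-(0.5+\delta_{2})})=\mathcal{O}(T^{-(0.5+\delta_{2})})$, which for any $\delta_{2}>0$ decays strictly faster than $T^{-1/2}$. For Term~II, the uplink policy $\tfrac{1}{T}\sum_{t=0}^{T-1}\bm{U}_{t}^{2}=\mathcal{O}(T^{-\delta_{1}})$ together with $\eta=\mathcal{O}(T^{-1/2})$ gives $\text{Term~II}=\eta L\,\mathcal{O}(T^{-\delta_{1}})=\mathcal{O}(T^{-(0.5+\delta_{1})})$, which for any $\delta_{1}>0$ again decays strictly faster than $T^{-1/2}$. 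Summing the four estimates, the right-hand side of \cref{eq: thm-noisy-SGD} is $\mathcal{O}(T^{-1/2})+\mathcal{O}(T^{-(0.5+\delta_{2})})+\mathcal{O}(T^{-(0.5+\delta_{1})})=\mathcal{O}(T^{-1/2})$, with the leading $T^{-1/2}$ contribution carrying no dependence on $\bm{N}_{t}$ or $\bm{U}_{t}$, which is exactly the stated conclusion.

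There is essentially no analytical obstacle here: the argument is pure bookkeeping on powers of $T$ once \cref{thm-noisy-SGD} is in hand. The one point I would flag explicitly is the role of the strict positivity of $\delta_{1}$ and $\delta_{2}$ — it is precisely what demotes Terms~I and~II to higher-order status, since at $\delta_{1}=0$ (resp. $\delta_{2}=0$) the uplink (resp. downlink) contribution would be $\Theta(T^{-1/2})$ and would inflate the leading constant by a noise-variance-dependent factor, which is the regime the SNR control is designed to avoid. I would also note in passing that this computation already reflects the uplink/downlink asymmetry discussed after \cref{thm-noisy-SGD}: Term~II carries an extra factor $\eta=\mathcal{O}(T^{-1/2})$ absent from Term~I, so a strictly weaker decay requirement on the averaged uplink variance (any $\delta_{1}>0$, versus a built-in $0.5$ head start plus $\delta_{2}>0$ on the downlink side) already suffices to render it negligible.
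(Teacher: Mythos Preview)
Your proposal is correct and follows exactly the approach implicit in the paper: the corollary is stated there as an immediate consequence of \cref{thm-noisy-SGD}, with no separate proof given, and your substitution of $\eta=\mathcal{O}(1/\sqrt{T})$ and the two SNR-control rates into the four terms of \cref{eq: thm-noisy-SGD} is precisely the intended bookkeeping. Your remarks on the necessity of strict positivity of $\delta_1,\delta_2$ and on the uplink/downlink asymmetry are also consistent with the paper's discussion.
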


\subsection{Noisy-FedAvg}
In what follows, we build upon \Cref{thm-noisy-SGD} to present Theorem \ref{thm-noisy-fedavg}, which holds for both partial and full client participation, IID, and non-IID data distribution. 
\begin{theorem}[\textbf{Smooth non-convex case for \texttt{Noisy-FedAvg}}]
\label{thm-noisy-fedavg}
Let Assumptions \ref{as1}, \ref{as-may15}, \ref{as-sfo}, \ref{as-july27} holds for \texttt{Noisy-FedAvg} (\Cref{alg:noisy-fed-avg}). 
In \texttt{Noisy-FedAvg}, set $\eta_{k} = \frac{1}{\gamma L E}\sqrt{\frac{r}{K}}$ for all $k$, where $\gamma > 4$ is a universal constant. 
Define a distribution $\mathbb{P}$ for $k \in \{0,\ldots,K-1\}$ such that $\mathbb{P}(k) = \frac{(1+\zeta)^{(K-1-k)}}{\sum_{k=0}^{K-1}(1+\zeta)^k}$ where $\zeta := 8\eta^2 L^2 E^2 \Big(\frac{(n-r)}{r(n-1)} + {\frac{2 \eta L E}{3}}\Big)$. Sample $k^{*}$ from $\mathbb{P}$ uniformly. Then, for $K \geq \max\Big(\frac{1024 r^3}{9\gamma^2}(\frac{1}{\gamma^2-16})^2, \frac{4r}{\gamma^2}\Big)$,
\begin{multline}
    \E[\|\nabla f(\bm{w}_{k^{*}})\|^2] \leq \frac{8 \gamma L f(\bm{w}_0)}{\sqrt{r K}} + \underbrace{{\color{black}\frac{4}{\gamma E^2 K \sqrt{r K}}}  {\color{black}\sum_{k=0}^{K-1}\bm{U}^2_k}}_{\text{Term I}}
    + \underbrace{\frac{4}{\gamma E} \sqrt{\frac{r}{K}}\Big(\frac{1}{\gamma n} \sqrt{\frac{r}{K}}(1 + 
    \frac{2n E}{3} + n) + \frac{1}{r} + \frac{(n-r)}{r(n-1)}\Big)\sigma^2}_{\text{Term II}}
    \\  \underbrace{{+\color{black}\frac{{4L^2}}{E K}} {\color{black}\Big(1 + 4 E + \frac{2}{\gamma E}\sqrt{\frac{r}{K}}(1} {\color{black}+2 E^2\{\frac{3}{\gamma E^2}\sqrt{\frac{r}{K}} + 2(2}
    {\color{black}+\frac{3}{\gamma^2 E^2}\frac{r}{K})}
    {\color{black}(\frac{2}{3\gamma}\sqrt{\frac{r}{K}}} {\color{black}+\frac{(n-r)}{r(n-1)})\})\Big)} {\color{black}\sum_{k=0}^{K-1}\bm{N}^2_k.}
    }_{\text{Term III}}
\end{multline}
\end{theorem}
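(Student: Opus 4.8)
The plan is to mimic the structure of the proof of \Cref{thm-noisy-SGD}, but now carefully accounting for (i) the $E$ local steps and the resulting client drift, (ii) the random subset $\mathcal{S}_k$ of participating clients sampled without replacement, and (iii) the absence of a bounded-dissimilarity hypothesis, which is replaced throughout by the self-bounding property of smooth non-negative functions. First I would write a one-round progress inequality: apply $L$-smoothness (Assumption~\ref{as1}) to $\bm{w}_{k+1}$, substitute the aggregation rule \cref{eq:noise-model-aggregation} together with $\bm{w}_{k,0}^{(i)}-\bm{w}_{k,E}^{(i)} = \eta_k\sum_{\tau=0}^{E-1}\widetilde\nabla f_i(\bm{w}_{k,\tau}^{(i)};\mathcal{B}_{k,\tau}^{(i)})$ and $\bm{w}_{k,0}^{(i)} = \bm{w}_k + \bm{\nu}_k^{(i)}$, and then take expectations in stages over the uplink noise $\{\bm{e}_k^{(i)}\}$, the mini-batches, and the sampling of $\mathcal{S}_k$ (uniform without replacement, so that $\E[\frac1r\sum_{i\in\mathcal{S}_k}(\cdot)] = \frac1n\sum_{i=1}^n(\cdot)$, and the without-replacement variance reduction produces the $\frac{n-r}{r(n-1)}$ factors). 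Zero-mean independence of $\bm{e}_k^{(i)}$ (Assumption~\ref{as-july27}) kills the linear cross terms and contributes a $\bm{U}_k^2$ budget to the quadratic part, while Assumption~\ref{as-sfo} replaces $\widetilde\nabla f_i$ by $\nabla f_i$ at the cost of a $\sigma^2$ budget. As in \cref{eq:feb28-103}, I would use $-\langle \bm a,\bm b\rangle = \tfrac12(\|\bm a-\bm b\|^2-\|\bm a\|^2-\|\bm b\|^2)$ to isolate a clean $-\tfrac{\eta_k E}{2}\|\nabla f(\bm{w}_k)\|^2$ term.

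The crux is controlling everything else in terms of $\|\nabla f(\bm{w}_k)\|^2$ and $f(\bm{w}_k)$ alone. Here I would (a) bound the downlink-perturbed gradients $\|\nabla f(\bm{w}_k+\bm{\nu}_k^{(i)})\|^2 \le 2\|\nabla f(\bm{w}_k)\|^2 + 2L^2\|\bm{\nu}_k^{(i)}\|^2$ and, after expectation over $\bm{\nu}_k^{(i)}$, pick up $2L^2\bm{N}_k^2$; and (b) set up a recursion for the client drift $D_{k,\tau}^{(i)} := \E\|\bm{w}_{k,\tau}^{(i)} - \bm{w}_{k,0}^{(i)}\|^2$ unrolled over $\tau=0,\dots,E-1$. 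Expanding a single local step, using $L$-smoothness and the \emph{self-bounding inequality} $\|\nabla f_i(\bm{w})\|^2 \le 2L\big(f_i(\bm{w})-f_i^*\big)\le 2Lf_i(\bm{w})$ — a direct consequence of $L$-smoothness and Assumption~\ref{as-may15}, and precisely the device that lets us dispense with BCD — I would get $D_{k,\tau}^{(i)} = O(\eta_k^2 E^2)$ times quantities involving $f_i(\bm{w}_k)$, $\sigma^2$, and $\bm{N}_k^2$; averaging over $i$ turns $\frac1n\sum_i f_i(\bm{w}_k)$ into $f(\bm{w}_k)$. Substituting these back and collecting all $f(\bm{w}_k)$-dependence into a single multiplicative factor yields a recursion of the shape
\[
\E[f(\bm{w}_{k+1})] \le (1+\zeta)\,\E[f(\bm{w}_k)] - \frac{\eta_k E}{c_1}\,\E\|\nabla f(\bm{w}_k)\|^2 + \Phi_k,
\]
where $\zeta$ comes out to be exactly the quantity in the statement and $\Phi_k$ collects the $\bm{U}_k^2$, $\sigma^2$, and $\bm{N}_k^2$ contributions; the hypothesis $\gamma>4$ is what keeps the gradient coefficient strictly positive after the quadratic drift terms are absorbed.

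Finally I would unroll this linear recursion. Dividing by $(1+\zeta)^{k+1}$ makes the left side telescope; summing over $k=0,\dots,K-1$ and using $f(\bm{w}_K)\ge f^*\ge 0$ (Assumption~\ref{as-may15}) gives $\sum_{k}(1+\zeta)^{-(k+1)}\E\|\nabla f(\bm{w}_k)\|^2 \lesssim f(\bm{w}_0) + \sum_k(1+\zeta)^{-(k+1)}\Phi_k$. Recognizing that $\mathbb{P}(k)=(1+\zeta)^{K-1-k}/\sum_{j=0}^{K-1}(1+\zeta)^j$ means $\E\|\nabla f(\bm{w}_{k^*})\|^2 = \tfrac{(1+\zeta)^K}{S}\sum_k(1+\zeta)^{-(k+1)}\E\|\nabla f(\bm{w}_k)\|^2$ with $S=\sum_{j=0}^{K-1}(1+\zeta)^j = \tfrac{(1+\zeta)^K-1}{\zeta}$, so the effective prefactor is $\tfrac{\zeta(1+\zeta)^K}{(1+\zeta)^K-1}$. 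The two lower bounds on $K$ are exactly what force $K\zeta = O(1)$ (so that $(1+\zeta)^K \le e^{K\zeta}$ is an absolute constant) and hence make this prefactor $O(1)$; substituting $\eta_k = \tfrac{1}{\gamma L E}\sqrt{r/K}$ into the $\Phi_k$ terms and simplifying produces Terms I–III as stated, with Term~I scaling as $\sum_k\bm{U}_k^2$ with a $1/(E^2K\sqrt{rK})$ prefactor (uplink), Term~III scaling as $\sum_k\bm{N}_k^2$ with an $\Theta(1/K)$ prefactor (downlink), and Term~II being the $\sigma^2$ budget. I expect the main obstacle to be the bookkeeping: carrying the client-drift recursion and the downlink-noise propagation through without BCD while keeping the constants tight enough that the $\gamma>4$ threshold and the stated $K$-conditions actually suffice — in particular, verifying that $\zeta$ emerges in exactly the claimed form and that the residual coefficient of $\|\nabla f(\bm{w}_k)\|^2$ remains positive after absorbing all quadratic terms.
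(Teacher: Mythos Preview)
Your proposal is correct and follows essentially the same route as the paper: a one-round descent lemma via $L$-smoothness and the polarization identity, drift control that reduces all per-client gradient norms to $\|\nabla f_i(\bm{w}_k)\|^2$, the self-bounding step $\frac1n\sum_i\|\nabla f_i(\bm{w}_k)\|^2\le 2Lf(\bm{w}_k)$ to produce the $(1+\zeta)$ recursion, and the weighted unrolling with $\mathbb{P}(k)\propto(1+\zeta)^{K-1-k}$ under the stated $K$-thresholds that force $\zeta K<\tfrac12$. The only minor deviation is cosmetic: the paper isolates $-\frac{\eta_k(E-1)}{2}\|\nabla f(\bm{w}_k)\|^2$ (not $-\frac{\eta_k E}{2}$) and applies the self-bounding inequality only \emph{after} the per-round lemma (so the drift lemmas are stated in terms of $\|\nabla f_i(\bm{w}_k)\|^2$, not $f_i(\bm{w}_k)$), but this reorders the same steps and does not change the argument.
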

In the \Cref{thm-noisy-fedavg} the expectation is w.r.t the choice of clients, data, uplink, and downlink noises. The Terms I and III in the theorem above depict the effects of uplink and downlink noises respectively. Furthermore, Term II is a direct consequence of \Cref{as-sfo}, stemming from the stochastic gradients of the clients. Now, from \Cref{thm-noisy-fedavg}, mirroring the result of \Cref{thm-noisy-SGD}, we can observe that the uplink noise is not dominant compared to the downlink noise. As we will discuss in \Cref{section:exp}, our tight analysis in establishing \ref{thm-noisy-fedavg} is verified numerically as well by showing that uplink noise's impact is not as detrimental as downlink noise.

\begin{remark}
    \Cref{thm-noisy-fedavg} is derived without using the restrictive assumption of BCD, in \cref{eq:bcd}. Now, to clarify why this assumption is restrictive, let us consider a toy example of a univariate quadratic function. The following notations hold their usual meaning as defined in \cref{sec:prelim}.
\begin{gather}
    f_{i}(\bm{w}) = \frac{1}{2}(\bm{w}^2), \forall i = 1, \dots, n-1
    \\
    \label{eq:f_n}
    f_{n}(\bm{w}) = \bm{w}^2
\end{gather}
By using \cref{eq:objective-function}, a global objective function can be formulated as,
\begin{align}
\label{eq:bcd-f}
    f(\bm{w}) = \frac{1}{2n}(n+1)\bm{w}^2
\end{align}
For $i = n$, by taking the gradient of \cref{eq:f_n,eq:bcd-f}, and putting them back in \cref{eq:bcd}, we get
\begin{gather}
\nonumber
    \|\nabla f_i(\bm{w}) - \nabla f(\bm{w})\|^2 \leq G^2
    \\
    \nonumber
    \implies 
    \|2\bm{w} - \frac{n+1}{n}\bm{w}\|^2 \leq G^2, \text{where  } i=n 
    \\
    \label{eq:bcd-eg}
    \implies
    \|\bm{w}\|^2 \|1 - \frac{1}{n}\|^2 \leq G^2
\end{gather}
We can infer from \cref{eq:bcd-eg} that the inequality does not hold for every $\bm{w}\in \mathbb{R}$, given a fixed $G$ and hence it makes the assumption restrictive and somewhat unrealistic.
\end{remark}

\begin{remark}
Before we start with the proof of the \Cref{thm-noisy-fedavg}, we would like to emphasize that the theorem provides an upper bound on the performance for the FedAvg algorithm, which depends on the noise characteristics. It is an interesting future work to investigate the effect of noise on the lower bound that essentially bounds the performance of any algorithm in this scenario and see if one could use such lower bounds towards SNR scaling (for more details on SNR scaling refer \cref{sec:snr}).
\end{remark}
\begin{proof}
The proof is motivated by the approach taken in \cite{das2022faster}. However, with the inclusion of downlink and uplink noises, the first local update (refer \cref{eq:noisy-global-update} and the model update (refer \cref{eq:noise-model-update}) are considerably different  which makes the analysis significantly different and more involved. As will be outlined shortly, the proof relies on careful treatment of the first local and model updates using new techniques.

To commence the proof, using \Cref{sep26-lem3}, for $\eta_k L E \leq \frac{1}{2}$, we can bound the per-round progress as:
\begin{flalign}
    \nonumber
    \mathbb{E}[f(\bm{w}_{k+1})] 
    \leq \mathbb{E}[f(\bm{w}_k)] - \frac{\eta_k (E-1)}{2} \mathbb{E}[\|\nabla f(\bm{w}_k)\|^2]
    + 4\eta_k^2 L E^2 \Big(\frac{(n-r)}{r(n-1)} + \frac{2}{3}\eta_k L E\Big)\Big(\frac{1}{n}\sum_{i \in [n]} \mathbb{E}[\|\nabla {f}_i(\bm{w}_{k})\|^2]\Big) 
    \\ \nonumber
    + \eta_k^2 L E \Big(\frac{\eta_k L E}{n}\Big(1 + \frac{2nE}{3} + n\Big) + \frac{1}{r} + \frac{(n-r)}{r(n-1)}\Big)\sigma^2
    + \frac{\eta^2_kL}{2r}\frac{1}{n}\sum_{i \in [n]}\bm{U}^2_{k,i}
    + \frac{\eta_kL^2}{2}\Big(1 + 2\eta_kL + 4E\{1+3\eta^2_kL^2\\
     \label{eq:sept26-20}
    + 2\eta_kLE(2+3\eta^2_kL^2)(\frac{2}{3}\eta_kLE +\frac{(n-r)}{r(n-1)})\}\Big)\frac{1}{n}\sum_{i \in [n]}\bm{N}^2_{k,i}.
\end{flalign}
Now using the $L$-smoothness and non-negativity of the $f_i$'s, we get:
\begin{flalign*}
    \sum_{i \in [n]} \mathbb{E}[\|\nabla f_i(\bm{w}_k)\|^2] & \leq \sum_{i \in [n]} 2L(\mathbb{E}[f_i(\bm{w}_k)] - f_i^{*}) 
    \\
    & \leq 2n L \mathbb{E}[f(\bm{w}_k)] - 2L \sum_{i \in [n]} f_i^{*} 
    \\
    &\leq 2n L \mathbb{E}[f(\bm{w}_k)]
\end{flalign*}
Putting this result in \cref{eq:sept26-20}, we get for a constant learning rate of $\eta_k = \eta,   \bm{U}^2_{k,i} = \bm{U}^2_k$ and $\bm{N}^2_{k,i} = \bm{N}^2_k$:
\begin{multline*}
    \mathbb{E}[f(\bm{w}_{k+1})]
    \leq 
    \Big(1 + 8\eta^2 L^2 E^2 \Big(\frac{(n-r)}{r(n-1)} + {\frac{2 \eta L E}{3 }}\Big)\Big) \mathbb{E}[f(\bm{w}_k)]
     - \frac{\eta (E-1)}{2} \mathbb{E}[\|\nabla f(\bm{w}_k)\|^2] + \eta^2 L E \Big(\frac{\eta L E}{n}(1 
    + \frac{2nE}{3} \\ + n)
    + \frac{1}{r} + \frac{(n-r)}{r(n-1)}\Big)\sigma^2 + \frac{\eta^2L}{2r}\frac{1}{n}\sum_{i \in [n]}\bm{U}^2_k 
    + 
    \frac{\eta L^2}{2}\Big(1 + 2\eta L + 4E\{1+
    3\eta^2L^2 + 2\eta LE(2+3\eta^2L^2)(\frac{2\eta LE}{3} +\frac{(n-r)}{r(n-1)})\}\Big)
    \frac{1}{n}\sum_{i \in [n]}\bm{N}^2_k.
\end{multline*}
For ease of notation, define $\zeta := 8\eta^2 L^2 E^2 \Big(\frac{(n-r)}{r(n-1)} + {\frac{2 \eta L E}{3}}\Big)$, $\zeta_2 := \Big(\frac{\eta L E}{n}\Big(1 + \frac{2n E}{3} + n\Big) + \frac{1}{r} + \frac{(n-r)}{r(n-1)}\Big)$ and $\zeta_3 := \Big(1 + 2\eta L + 4E\{1+3\eta^2 L^2 + 2\eta L E(2+3\eta^2 L^2)(\frac{2\eta L E}{3}+\frac{(n-r)}{r(n-1)})\}\Big) $. Then, unfolding the recursion of the equation above from  $k=0$ through to $k=K-1$, we get:
\begin{multline}
    \label{eq:sept26-19}
    \mathbb{E}[f(\bm{w}_{K})] 
    \leq (1 + \zeta)^K f(\bm{w}_0)
    - \frac{\eta (E-1)}{2} \sum_{k=0}^{K-1}(1+\zeta)^{(K-1-k)} \mathbb{E}[\|\nabla f(\bm{w}_k)\|^2]
    + \eta^2 L E \zeta_2 \sigma^2 \sum_{k=0}^{K-1}(1+\zeta)^{(K-1-k)}
    \\
    + \frac{\eta^2 L}{2r} \sum_{k=0}^{K-1} \bm{U}^2_k(1+\zeta)^{(K-1-k)}
    + \frac{\eta L^2}{2} \zeta_3 \sum_{k=0}^{K-1}\bm{N}^2_k(1+\zeta)^{(K-1-k)}.
\end{multline}

Let us define $p_k := \frac{(1+\zeta)^{(K-1-k)}}{\sum_{k'=0}^{K-1}(1+\zeta)^{(K-1-k')}}$. Then, re-arranging \cref{eq:sept26-19} and using the fact that $\mathbb{E}[f(\bm{w}_{K})] \geq 0$ and $\frac{\eta(E-1)}{2} > \frac{\eta E}{4}$, we get:
\begin{flalign}
    \nonumber
    \sum_{k=0}^{K-1}p_k \mathbb{E}[\|\nabla f(\bm{w}_k)\|^2]  \leq \frac{4 (1 + \zeta)^K f(\bm{w}_0)}{\eta E \sum_{k'=0}^{K-1}(1+\zeta)^{k'}} + {4\eta L}\zeta_2 \sigma^2 
    + \frac{2\eta L}{rE} \frac{\sum_{k=0}^{K-1}\bm{U}^2_k(1+\zeta)^{(K-1-k)}}{\sum_{k'=0}^{K-1}(1+\zeta)^{k'}}
    \\ 
     + \frac{2L^2\zeta_3}{E} \frac{\sum_{k=0}^{K-1}\bm{N}^2_k(1+\zeta)^{(K-1-k)}}{\sum_{k'=0}^{K-1}(1+\zeta)^{k'}}
    \\
    \label{eq:sep27-1}
     = \frac{4 \zeta f(\bm{w}_0)}{\eta E
    (1 - (1+\zeta)^{-K})} + 4\eta L \zeta_2 \sigma^2
    +\frac{2\eta L}{rE} \frac{\zeta \sum_{k=0}^{K-1}\bm{U}^2_k}{(1+\zeta) - (1+\zeta)^{-K+1}}
     + \frac{2L^2 \zeta_3}{E} \frac{\zeta \sum_{k=0}^{K-1}\bm{N}^2_k}{(1+\zeta) - (1+\zeta)^{-K+1}}
\end{flalign}

where the \cref{eq:sep27-1} follows by using the fact that $\sum_{k'=0}^{K-1}(1+\zeta)^{k'} = \frac{(1+\zeta)^{K} - 1}{\zeta}$ and H\"{o}lder's Inequality. Now,

\begin{flalign}
    \nonumber
    (1+\zeta)^{-K} & < 1 - \zeta K + {\zeta^2}\frac{K(K+1)}{2} < 1 - \zeta K + {\zeta^2}K^2 
    \\ \label{eq:trick-1}
    & \implies 1 - (1+\zeta)^{-K} > \zeta K (1 - \zeta K).
\end{flalign}

Also,
\begin{flalign}
    \nonumber
    (1+\zeta)^{-K+1} & < 1 + \zeta(-K+1) + {\zeta^2}\frac{(-K)(-K+1)}{2} 
    \\ \nonumber
    & < (1+\zeta) - \zeta K + {\zeta^2}K^2 
    \\ \label{eq:trick-2}
    & \implies (1+\zeta) - (1+\zeta)^{-K+1} > \zeta K (1 - \zeta K).
\end{flalign}

Plugging \cref{eq:trick-1,eq:trick-2} with $\zeta_2$ and $\zeta_3$ in \cref{eq:sep27-1}, we have for $\zeta K < 1$:
\begin{flalign}
\nonumber
    \sum_{k=0}^{K-1}p_k \mathbb{E}[\|\nabla f(\bm{w}_k)\|^2] \leq \frac{4 f(\bm{w}_0)}{\eta E K (1 - \zeta K)} + 4\eta L E \Big(\frac{\eta L}{n}\Big(1
    + \frac{2nE}{3} + n\Big) + \frac{1}{r E} + \frac{(n-r)}{r(n-1)E}\Big)\sigma^2 
    + \frac{2\eta L}{rE} \frac{\sum_{k=0}^{K-1}\bm{U}^2_k}{K(1 - \zeta K)}
    \\ \label{eq:sep27-2}
    + \frac{2L^2}{E K(1 - \zeta K)} \Big(1 + 2\eta L + 4E\{1+3\eta^2 L^2 + 2\eta L E(2 
    +3\eta^2 L^2)
    (\frac{2\eta L E}{3} +\frac{(n-r)}{r(n-1)})\}\Big) \sum_{k=0}^{K-1}\bm{N}^2_k.
\end{flalign}
In this case, note that the optimal step size will be $\eta = \mathcal{O}(\frac{1}{L E \sqrt{K}})$, even for $r = n$. 
So let us pick $\eta = \frac{1}{\gamma L E}\sqrt{\frac{r}{K}}$, where $\gamma$ is some constant such that $\gamma > 4$. Note that we need to have $\eta L E \leq \frac{1}{2}$; this happens for $K \geq \frac{4r}{\gamma^2}$. Further, let us ensure $\zeta K < \frac{1}{2}$; this happens for $K \geq \frac{1024 r^3}{9\gamma^2}(\frac{1}{\gamma^2-16})^2$. Thus, we should have $K \geq \max\Big(\frac{1024 r^3}{9\gamma^2}(\frac{1}{\gamma^2-16})^2, \frac{4r}{\gamma^2}\Big)$. Putting $\eta = \frac{1}{\gamma L E}\sqrt{\frac{r}{K}}$ in \cref{eq:sep27-2} and also using $1 - \zeta K \geq \frac{1}{2}$, we get:
\begin{multline}
    \label{eq:final-1}
    \sum_{k=0}^{K-1}p_k \mathbb{E}[\|\nabla f(\bm{w}_k)\|^2] \leq \frac{8 \gamma L f(\bm{w}_0)}{\sqrt{r K}} + {\color{black}\frac{4}{\gamma E^2 K \sqrt{r K}}}  {\color{black}\sum_{k=0}^{K-1}\bm{U}^2_k}
    +
    \frac{4}{\gamma E} \sqrt{\frac{r}{K}}\Big(\frac{1}{\gamma n} \sqrt{\frac{r}{K}}(1 + 
    \frac{2n E}{3} + n) + \frac{1}{r} + \frac{(n-r)}{r(n-1)}\Big)\sigma^2 
    \\
    +{\color{black}\frac{{4L^2}}{E K}} {\color{black}\Big(1 + 4 E + \frac{2}{\gamma E}\sqrt{\frac{r}{K}}(1 +2 E^2\{\frac{3}{\gamma E^2}\sqrt{\frac{r}{K}} + 2(2+}
    {\color{black}\frac{3}{\gamma^2 E^2}\frac{r}{K})}
    {\color{black}(\frac{2}{3\gamma}\sqrt{\frac{r}{K}} +\frac{(n-r)}{r(n-1)})\})\Big)} {\color{black}\sum_{k=0}^{K-1}\bm{N}^2_k}
    .
\end{multline}
This finishes the proof.

\end{proof}
\section{Theory guided SNR control}\label{sec:snr}
\Cref{thm-noisy-fedavg} provides us with actionable insight into the effect of uplink and downlink noise on model convergence, i.e., the inherent asymmetry of their adverse effect on the performance of FL algorithms. One strategy to improve the convergence properties of the model in noisy settings is to boost the SNR of the communicated messages (see, e.g.\cite{wei2022federated} and the references therein). In this section, we aim to establish an improved SNR control strategy following the improved analysis presented in \Cref{thm-noisy-fedavg}. First, we stated the following corollary for an easier-to-interpret result.
\begin{corollary}
\label{coro-noisy-fedavg}
Instate the notation and hypotheses of Theorem \ref{thm-noisy-fedavg}. Also, let $\bm{U}^2_k \leq \bm{e}^2$ and $\bm{N}^2_k \leq \bm{\nu}^2$ to be the maximum bounded variances for all $k$. Then, if  $K =\Omega(r^3)$,
\begin{align}
    \label{eq:coro-1}
    \mathbb{E}[\|\nabla f(\bm{w}_{k^{*}})\|^2] = \mathcal{O}\Big(\frac{1}{E}\sqrt{\frac{r}{K}} \sigma^2 + {\color{black}\frac{1}{E^2 \sqrt{r K}} \bm{e}^2} +
    {\color{black}\bm{\nu}^2}\Big).
\end{align}
\end{corollary}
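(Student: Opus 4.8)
The plan is to obtain \Cref{coro-noisy-fedavg} as a direct corollary of the non-asymptotic bound \cref{eq:final-1} of \Cref{thm-noisy-fedavg}, by substituting the uniform noise bounds and then reading off leading orders in the regime $K=\Omega(r^3)$. First I would check applicability: with $\gamma>4$ fixed, $K=\Omega(r^3)$ forces $K\ge\max\!\big(\tfrac{1024 r^3}{9\gamma^2}(\tfrac{1}{\gamma^2-16})^2,\tfrac{4r}{\gamma^2}\big)$, so \Cref{thm-noisy-fedavg} applies. I would also note that the sampling distribution $\mathbb{P}$ coincides with the weights $p_k$ appearing in \cref{eq:final-1} (the two normalizing geometric sums are equal), hence $\E[\|\nabla f(\bm w_{k^{*}})\|^2]=\sum_k p_k\,\E[\|\nabla f(\bm w_k)\|^2]$ is exactly the quantity already bounded.

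Then I would process the four groups of terms in \cref{eq:final-1} in turn. Using $\bm U^2_k\le\bm e^2$, the $K$-fold sum in Term I collapses and Term I $\le\tfrac{4}{\gamma E^2\sqrt{rK}}\bm e^2=\mathcal O\!\big(\tfrac{1}{E^2\sqrt{rK}}\bm e^2\big)$, which is the uplink term of \cref{eq:coro-1}. Using $\bm N^2_k\le\bm\nu^2$, the $\tfrac1K$ prefactor of Term III is cancelled by the sum, leaving $\tfrac{4L^2}{E}(\,\cdot\,)\bm\nu^2$; the bracketed polynomial factor is $1+4E+o(1)=\mathcal O(E)$ once one uses $\tfrac{n-r}{r(n-1)}\le\tfrac1r$, $\sqrt{r/K}\le1$, and the $K=\Omega(r^3)$ scaling to send the remaining pieces to zero, so Term III $=\mathcal O(L^2\bm\nu^2)=\mathcal O(\bm\nu^2)$, the downlink term. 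For the SGD-variance Term II I would use $1+\tfrac{2nE}{3}+n=\Theta(nE)$ and $\tfrac1r+\tfrac{n-r}{r(n-1)}\le\tfrac2r$ to split it into a dominant piece $\mathcal O\!\big(\tfrac1E\sqrt{r/K}\,\sigma^2\big)$ and a residual $\mathcal O\!\big(\tfrac rK\sigma^2\big)$; the optimization term is $\mathcal O\!\big(\tfrac{1}{\sqrt{rK}}\big)$. Collecting, $\E[\|\nabla f(\bm w_{k^{*}})\|^2]$ is $\mathcal O\!\big(\tfrac1E\sqrt{r/K}\,\sigma^2+\tfrac{1}{E^2\sqrt{rK}}\bm e^2+\bm\nu^2\big)$ plus the residuals $\mathcal O(\tfrac rK\sigma^2)$ and $\mathcal O(\tfrac{1}{\sqrt{rK}})$, which are absorbed into the stated terms under $K=\Omega(r^3)$ (each is eventually $\mathcal O(\bm\nu^2)$ since $\bm\nu^2>0$ is a fixed constant, and alternatively $\le\mathcal O(\tfrac1E\sqrt{r/K}\sigma^2)$ when $E=\mathcal O(r)$).

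The step requiring the most care — and essentially the only place $K=\Omega(r^3)$ is used beyond validating \Cref{thm-noisy-fedavg} — is this final absorption: verifying that the coupling residual $\mathcal O(\tfrac rK\sigma^2)$ arising from partial participation combined with $E$ local steps, together with the deterministic $\mathcal O(\tfrac{1}{\sqrt{rK}})$ term, are dominated by $\mathcal O(\tfrac1E\sqrt{r/K}\,\sigma^2+\bm\nu^2)$. Everything else is mechanical order-of-magnitude bookkeeping, treating $L,\gamma,f(\bm w_0),\sigma$ as constants and pulling the $k$-independent variance bounds out of the sums.
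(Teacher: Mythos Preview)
Your proposal is correct and matches the paper's (implicit) approach: the paper does not give a separate proof of \Cref{coro-noisy-fedavg}, treating it as an immediate consequence of \Cref{thm-noisy-fedavg} obtained by substituting the uniform bounds $\bm U_k^2\le\bm e^2$, $\bm N_k^2\le\bm\nu^2$ and retaining leading orders under $K=\Omega(r^3)$. Your term-by-term bookkeeping and the explicit identification $\E[\|\nabla f(\bm w_{k^*})\|^2]=\sum_k p_k\,\E[\|\nabla f(\bm w_k)\|^2]$ are more detailed than anything the paper provides; one minor quibble is that the bracketed factor in Term~III is not literally $1+4E+o(1)$ without an assumption on $E$, but your conclusion that it is $\mathcal O(E)$ holds regardless since the residual piece is at most $\mathcal O(E/r^2)\le\mathcal O(E)$.
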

\subsection{Implications:}
Following Corollary \ref{coro-noisy-fedavg}, we can observe that the term corresponding to uplink noise scale as $\mathcal{O}(\frac{1}{E^{2}\sqrt{rK}})$, while the term corresponding to downlink noise is $\mathcal{O}(1)$. We can visualize the impact of both uplink and downlink noises on the MNIST dataset (refer \Cref{sec:DLE} for the details about the model architecture and other parameters) in \Cref{fig:r_E_plots}. From \cref{fig2,fig4}, we can confirm that any changes in the number of participating clients, i.e., $r$, and the local number of iterations, i.e., $E$ does not make any difference and it scales as $\mathcal{O}(1)$. Similarly, from \cref{fig1,fig3}, we inspect that the term corresponding to uplink noise scales as $\mathcal{O}(\frac{1}{E^{2}\sqrt{rK}})$. This tells us that the impact of uplink and downlink noises on convergence errors is different. Using this result, we propose to employ  SNR control strategies such that the effect of both the noises appear as \textit{higher-order terms}, not as dominant terms. Hence to curtail the impact of the noises, we want the order of the terms corresponding to the uplink and downlink noise to be $\mathcal{O}(\frac{1}{E^{1+\delta_1}K^{\frac{1}{2} + \delta_2}})$, for some $\delta_1,\delta_2 > 0$. For instance, in what follows we will adopt a strategy such that $\delta_1 = 1$ and $\delta_2 = 0.5$.

Since we have already established that the effects of both uplink and downlink noises are different, we need to employ different scaling policies for these noises. Hence, for the model to converge to an $\epsilon-$stationary point like in \texttt{FedAvg}, we need to scale down the downlink noise by $\Omega({E^2 k})$ and uplink noise by $\Omega({\sqrt{k}})$. These scaling rates result in requiring considerably less power resources compared to the prior work, e.g. \cite{wei2022federated}. Putting the requirement of strong convexity aside, the proposed policy in \cite{wei2022federated}, while consuming more power resources\footnote{Considering non-convex, $L$-smooth problems, the result in \cite{wei2022federated} seems to require $\Omega(k)$ scaling for both uplink and downlink noises.} ensures that the model converges, albeit the dominant term in the rate will depend on noise statistics. However, employing our strategy ensures that noise appears merely as a higher-order term, which means that for a large number of communication rounds, the difference between noisy and noise-free \texttt{FedAvg} will be negligible. In the discussions above we talk about the necessity of SNR scaling to ensure that the performance gracefully depends on the noise. There are works such as \cite{zheng2020design,amiri2020federated} and others that provide a practical perspective towards designing distributed systems and it is an interesting future work to implement the theoretical findings of this work in practical systems and come up with new design paradigms. A simple setting is a scenario where a set of UAVs (acting as clients in FL) need to communicate with the base station (acting as the central server in FL). Our paper's main findings indicate that the effect of the downlink channel noise is particularly detrimental, and thus, an effective strategy would involve boosting the power of the signal sent from the base station to the UAVs.

\begin{figure}[t]
\begin{subfigure}[t]{0.5\textwidth}
    \centering
    \includegraphics[width=\textwidth]{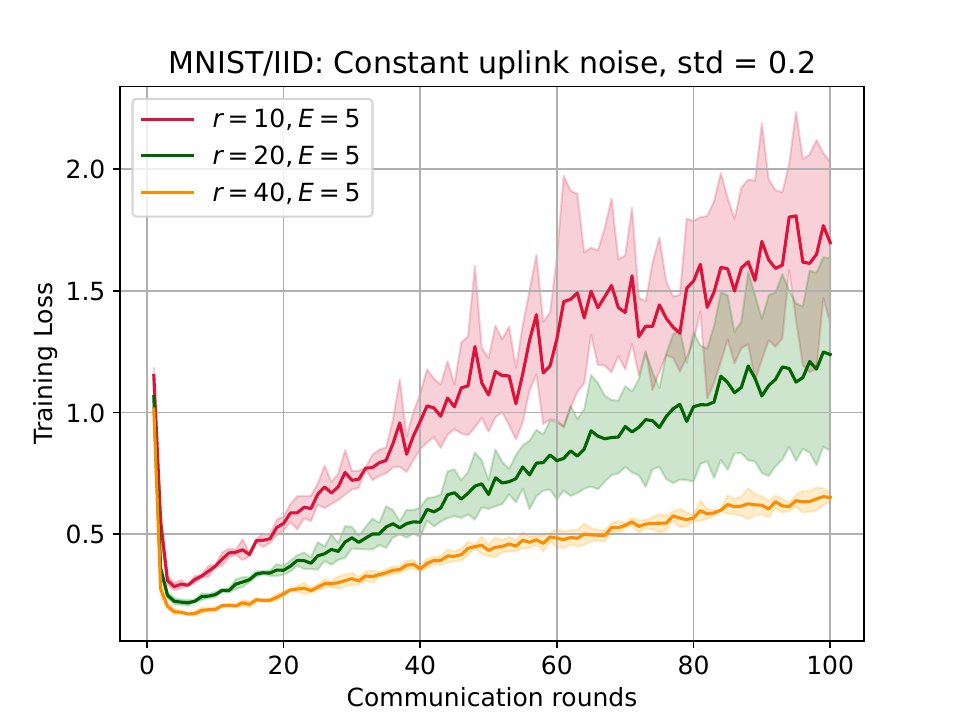}
    \caption{} \label{fig1}
\end{subfigure}
\begin{subfigure}[t]{0.5\textwidth}
    \centering
    \includegraphics[width=\textwidth]{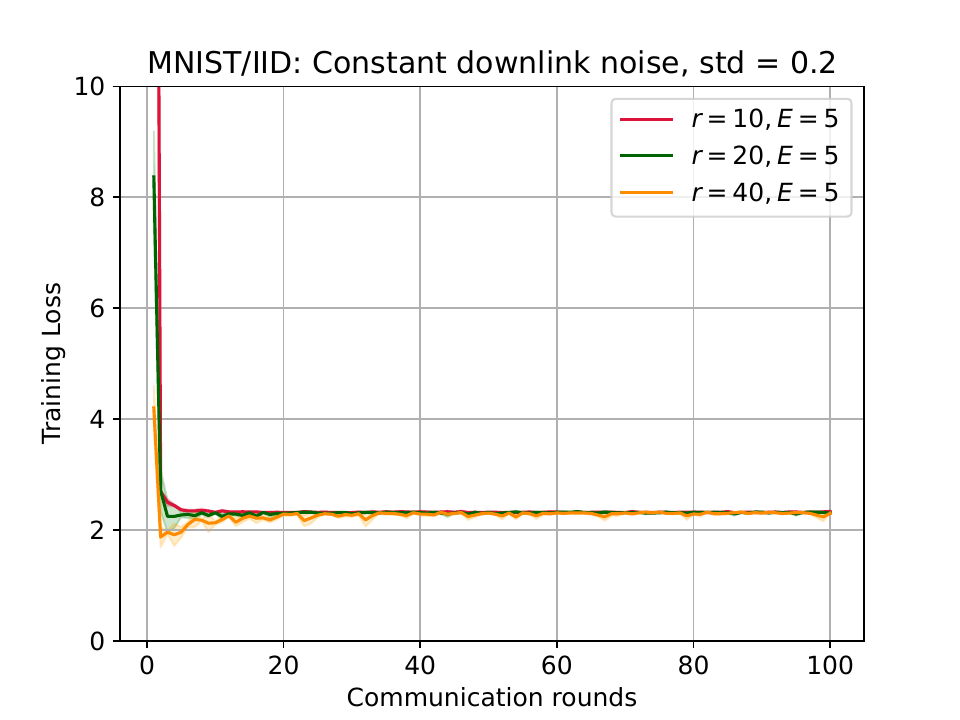}
    \caption{} \label{fig2}
\end{subfigure}\\
\begin{subfigure}[t]{0.5\textwidth}
    \centering
    \includegraphics[width=\textwidth]{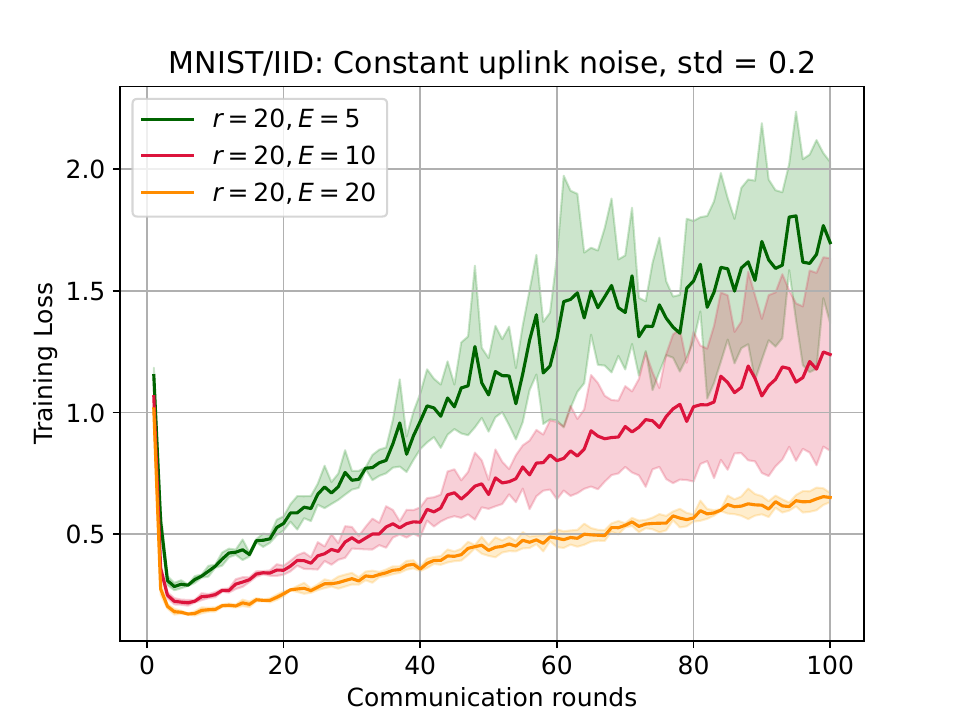}
    \caption{} \label{fig3}
\end{subfigure}
\begin{subfigure}[t]{0.5\textwidth}
    \centering
    \includegraphics[width=\textwidth]{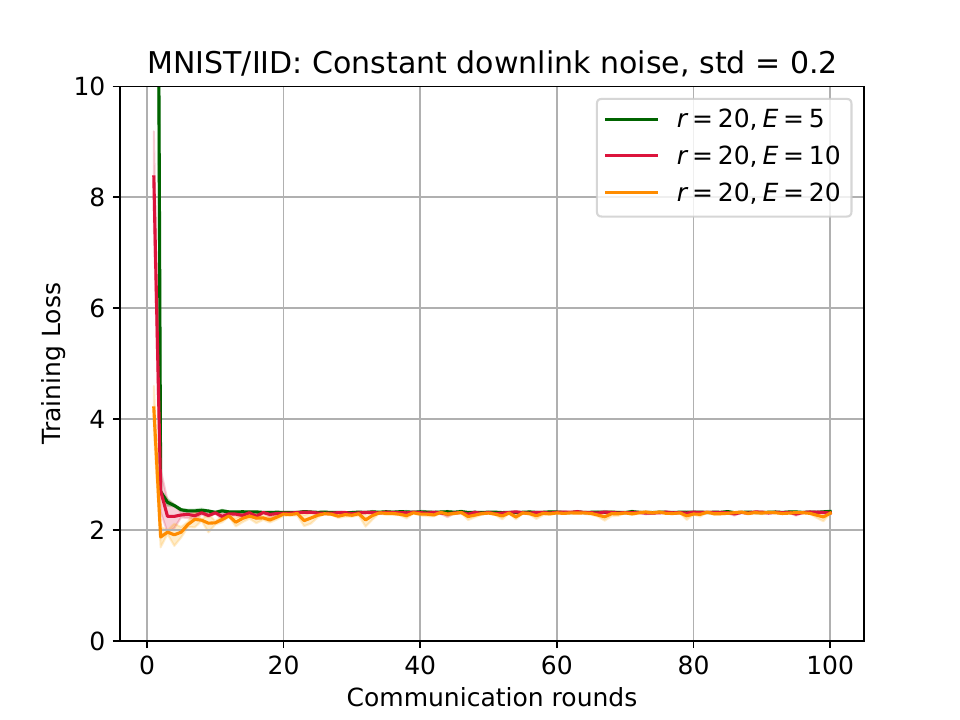}
    \caption{} \label{fig4}
\end{subfigure}
\caption{Effect of $r$ (top 2 figures) and $E$ (bottom 2 figures) on uplink and downlink noise.}
\label{fig:r_E_plots}
\vspace{-4mm}
\end{figure}
\begin{figure}[t]
\centering
    \begin{subfigure}[t]{0.5\textwidth}
    \centering
    \includegraphics[width=\textwidth]{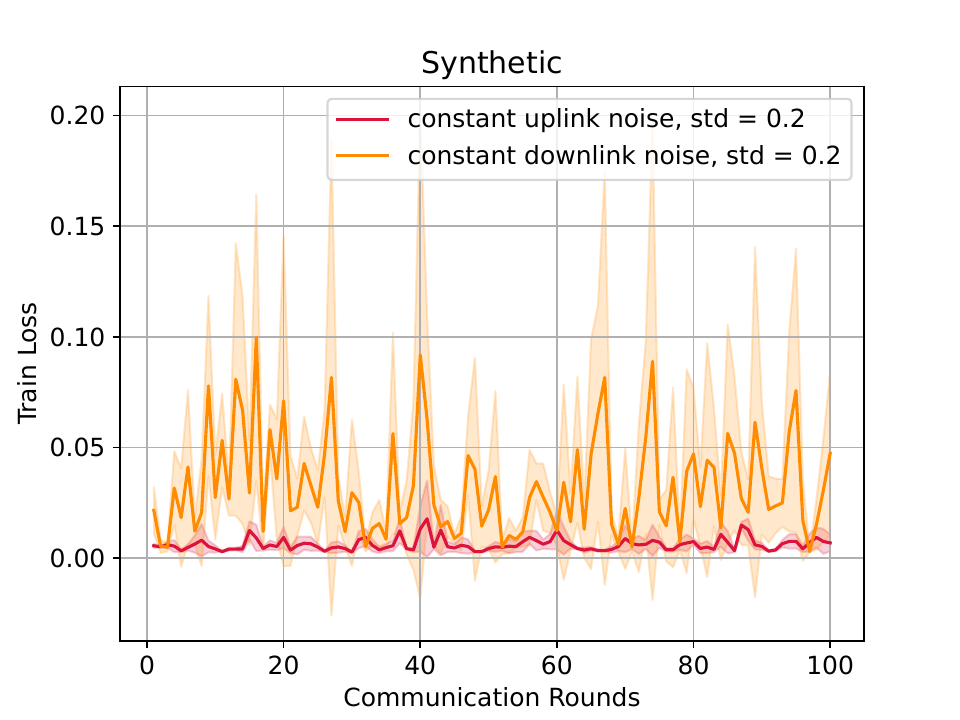}
    \caption{Constant noise} \label{fig:var_plot}
\end{subfigure}\hfill
\begin{subfigure}[t]{0.5\textwidth}
    \centering
    \includegraphics[width=\textwidth]{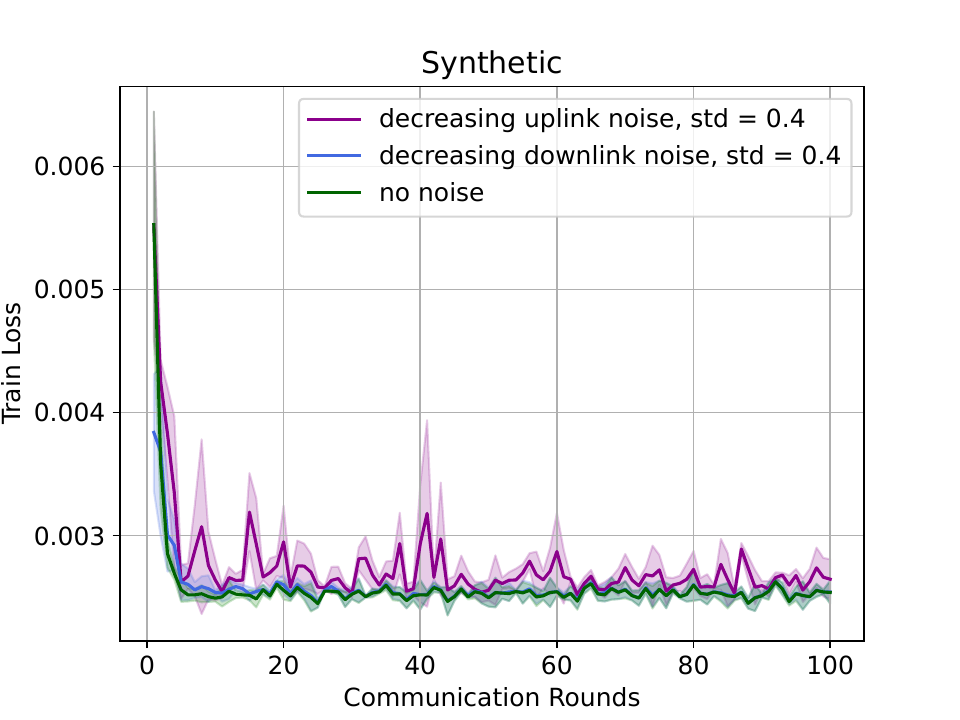}
    \caption{Decreasing noise} \label{fig:control_plot}
\end{subfigure}\hfill
\caption{Linear regression: Comparing the impact of uplink and downlink noise with (left) and without (right) SNR control.}
\label{fig:LR_plots}
\vspace{-4mm}
\end{figure}
\section{Verifying Experiments}\label{section:exp}
In this section, we demonstrate the efficacy and validity of the proposed theory through empirical analysis. For this purpose, we devise two categories of experiments, (i) synthetic experiments and (ii) deep learning experiments. 
\subsection{Synthetic experiment}
In the synthetic experiment, we train a linear regression model with m = 15000 samples. The samples $\{(\bm{x_j}, y_j)_{j=1}^m\}$ are generated based on the model $y_j = \langle\theta^{*}, \bm{x_j}\rangle + c_j$,  where $\bm{\theta}^{*} \in \mathbb{R}^{60}$, the $j^{th}$ input $\bm{x_j} \sim \mathcal{N}(0, I_{60})$, and noise $c_j \sim \mathcal{N}(0, 0.05)$. This dataset is generated such that the $(samples \times features)$ matrix has the $\ell_2$ norm of its Hessian equal to 1. These samples are then distributed over 50 clients resulting in 300 samples/client. Also, we use the mean squared error loss function.
\begin{figure}[t]
\begin{subfigure}[t]{0.235\textwidth}
    \centering
    \includegraphics[width=\textwidth]{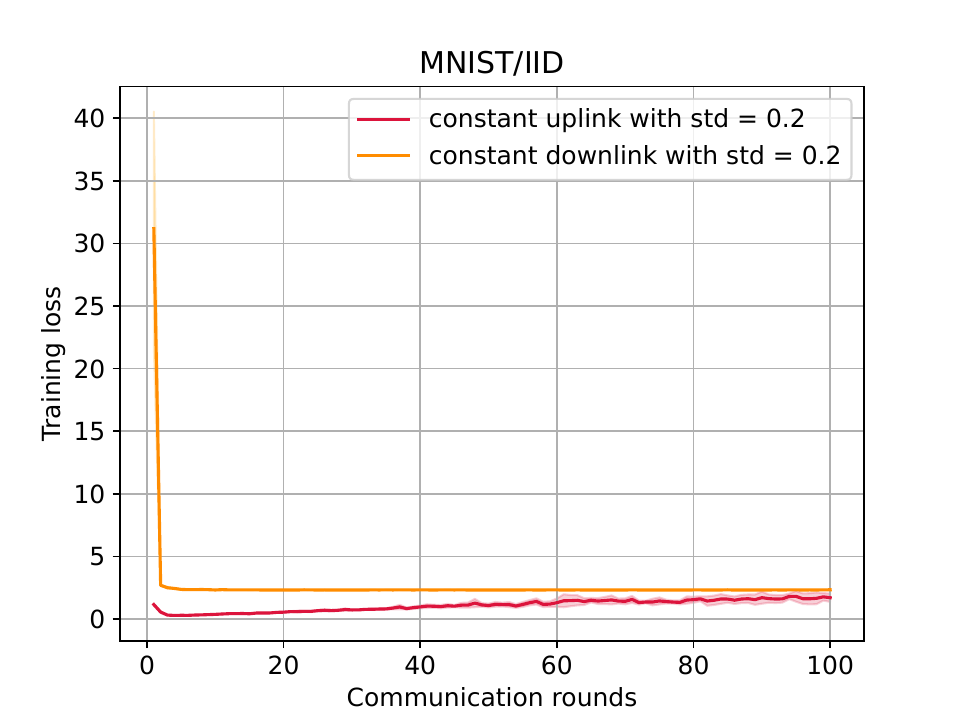}
    \caption{Constant noise addition} \label{fig1:MNIST}
\end{subfigure}
\begin{subfigure}[t]{0.235\textwidth}
    \centering
    \includegraphics[width=\textwidth]{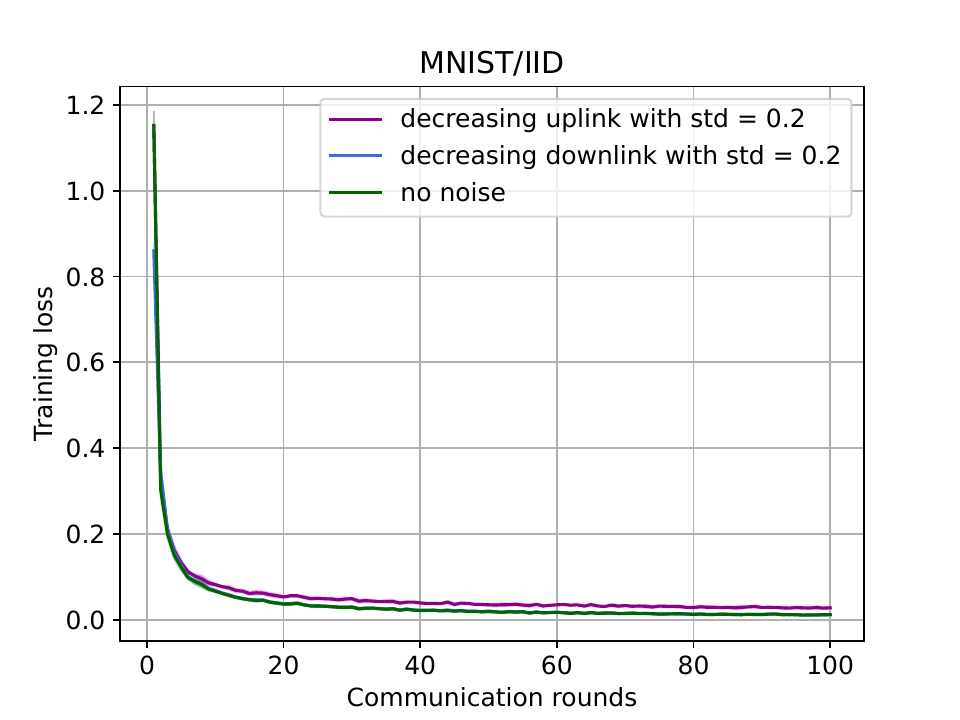}
    \caption{Controlled noise addition} \label{fig2:MNIST}
\end{subfigure}
\begin{subfigure}[t]{0.235\textwidth}
    \centering
    \includegraphics[width=\textwidth]{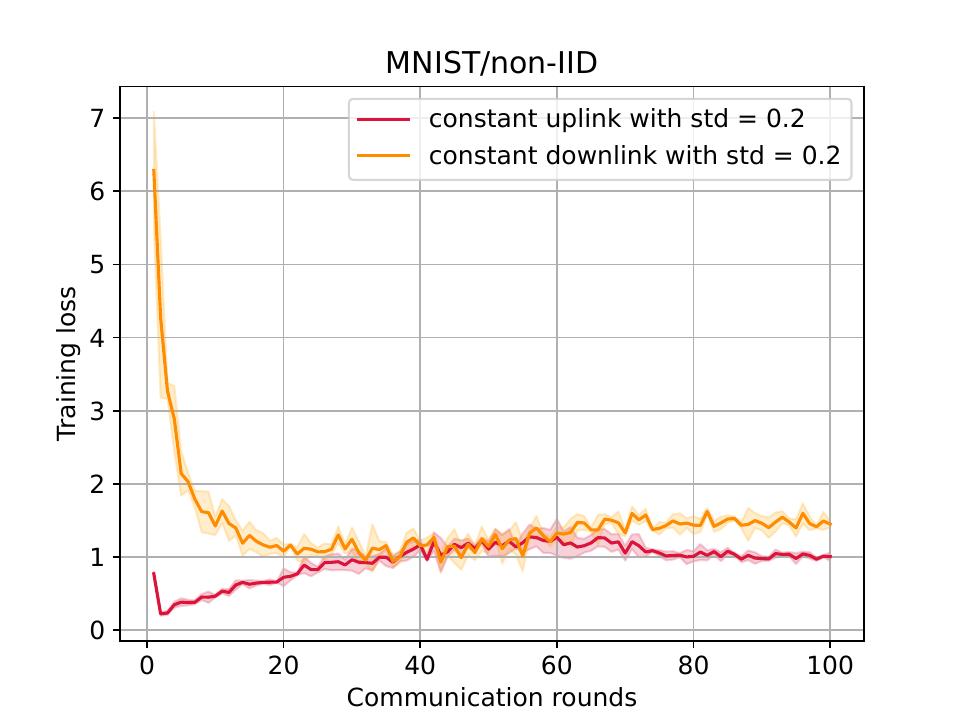}
    \caption{Constant noise addition} \label{fig3:MNIST}
\end{subfigure}
\begin{subfigure}[t]{0.235\textwidth}
    \centering
    \includegraphics[width=\textwidth]{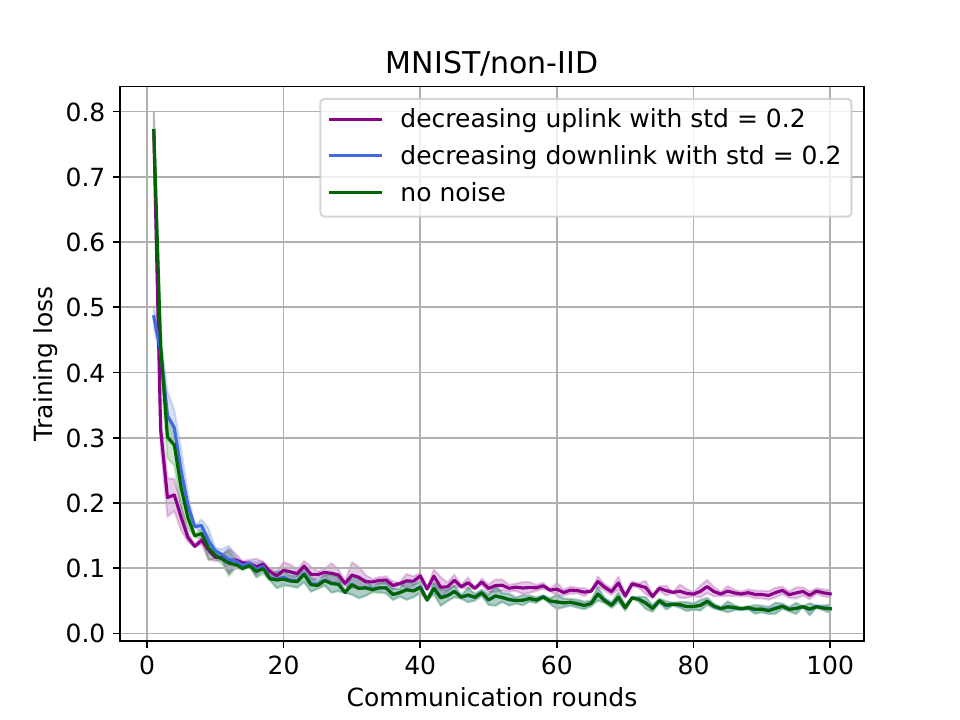}
    \caption{Controlled noise addition} \label{fig4:MNIST}
\end{subfigure}
\caption{Train loss vs. communication rounds for the IID and non-IID data distribution of the MNIST dataset. The plots (\ref{fig1:MNIST}) and (\ref{fig3:MNIST}) present the constant noise addition setting. We can see that the theory is verified from the plots. Similarly, the plots (\ref{fig2:MNIST}) and (\ref{fig4:MNIST}) show the effect of the proposed SNR control strategy. }
\label{fig:MNIST_plots}
\vspace{-4mm}
\end{figure}
\begin{figure}[t]
\begin{subfigure}[t]{0.235\textwidth}
    \centering
    \includegraphics[width=\textwidth]{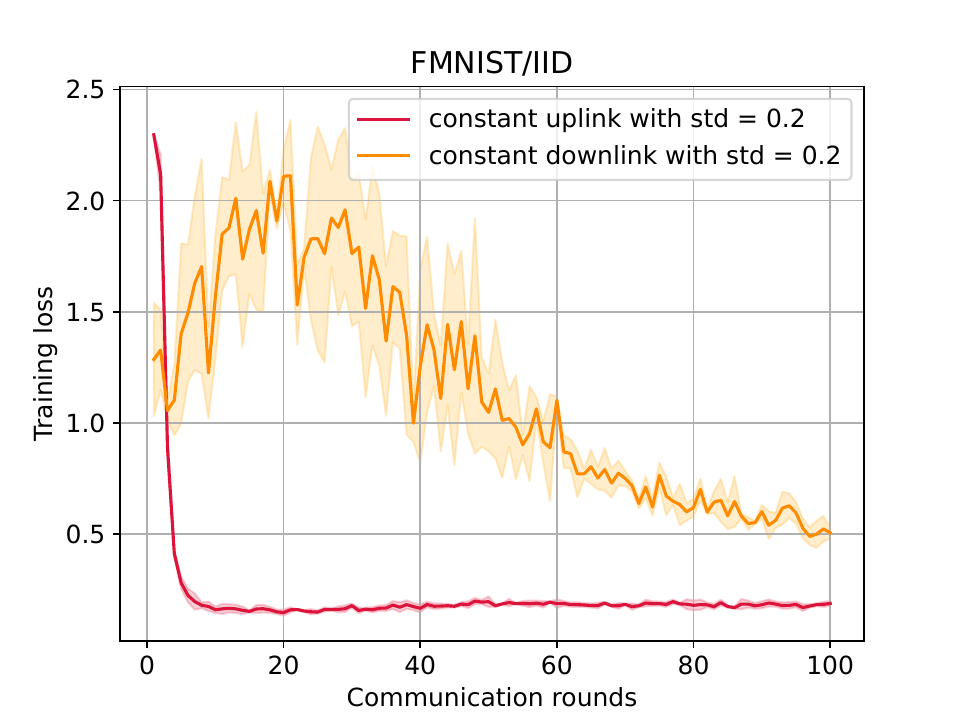}
    \caption{Constant noise addition} \label{fig1_fmnist}
\end{subfigure}
\begin{subfigure}[t]{0.235\textwidth}
    \centering
    \includegraphics[width=\textwidth]{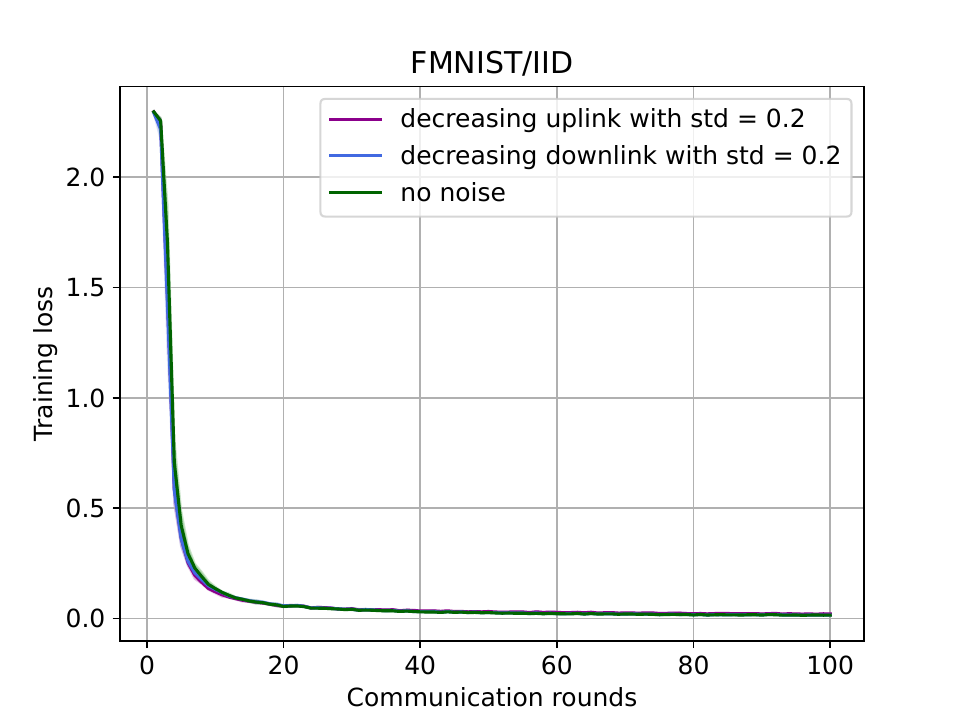}
    \caption{Controlled noise addition} \label{fig2_fmnist}
\end{subfigure}
\begin{subfigure}[t]{0.235\textwidth}
    \centering
    \includegraphics[width=\textwidth]{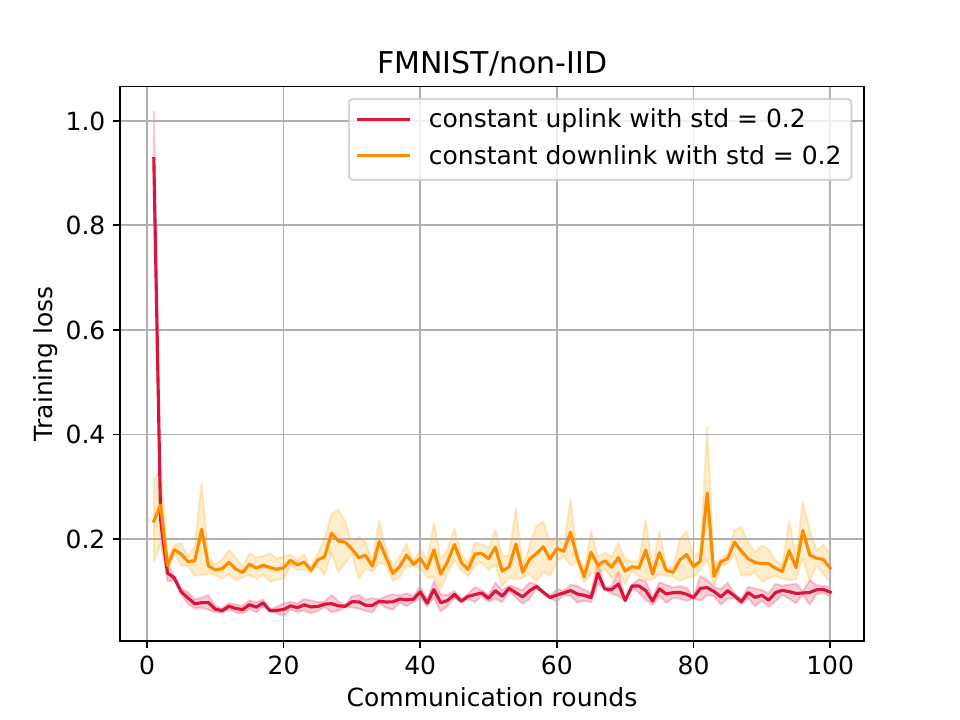}
    \caption{Constant noise addition} \label{fig3_fmnist}
\end{subfigure}
\begin{subfigure}[t]{0.235\textwidth}
    \centering
    \includegraphics[width=\textwidth]{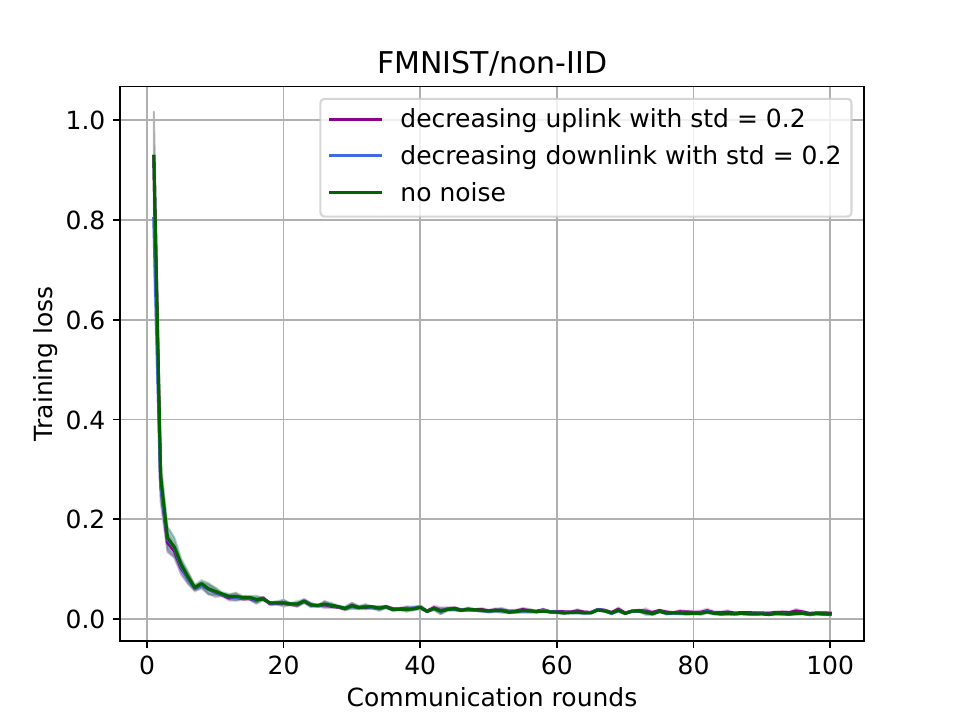}
    \caption{Controlled noise addition} \label{fig4_fmnist}
\end{subfigure}
\caption{Train loss vs. communication rounds for the IID and non-IID data distribution of the FMNIST dataset. The plots (\ref{fig1_fmnist}) and (\ref{fig3_fmnist}) present the constant noise addition setting. We can see that the theory is verified from the plots. Similarly, the plots (\ref{fig2_fmnist}) and (\ref{fig4_fmnist}) show the effect of the proposed SNR control strategy.}
\label{fig:FMNIST_plots}
\vspace{-4mm}
\end{figure}
To conduct this numerical experiment we use $n = 50$ clients and set the values of $\gamma = 18$ (see \Cref{thm-noisy-fedavg}), $L = 1$, $ E = 5$ and $K = 100$ and $BS$ (local batch size) $= 16$. In each round, $20 \%$ of the clients participate based on random selection, which leads to $r = 10$. Now from Theorem \ref{thm-noisy-fedavg}, we have $\eta_k = \frac{1}{\gamma L E}\sqrt{\frac{r}{K}}$, i.e. $\eta_k = 0.0035$.

We first consider a constant noise setting where we add both uplink and downlink noise to the communicated messages where the noises are sampled from a Gaussian distribution having zero mean i.e., $\bm{e}_k^{(i)} \sim \mathcal{N}(0, \upsilon^2)$ and $\bm{\nu}_k^{(i)} \sim \mathcal{N}(0, \upsilon^2)$, where $\upsilon = 0.2$. We visualize the impact of adding noises in \Cref{fig:var_plot}; as the figure demonstrates, consistent with the result of  \Cref{thm-noisy-fedavg}, the effect of downlink noise is more severe than the uplink noise and results in model divergence. Furthermore, the adverse effect of noise increases as $\upsilon$ increases.
\begin{figure}[t]
\begin{subfigure}[t]{0.235\textwidth}
    \centering
    \includegraphics[width=\textwidth]{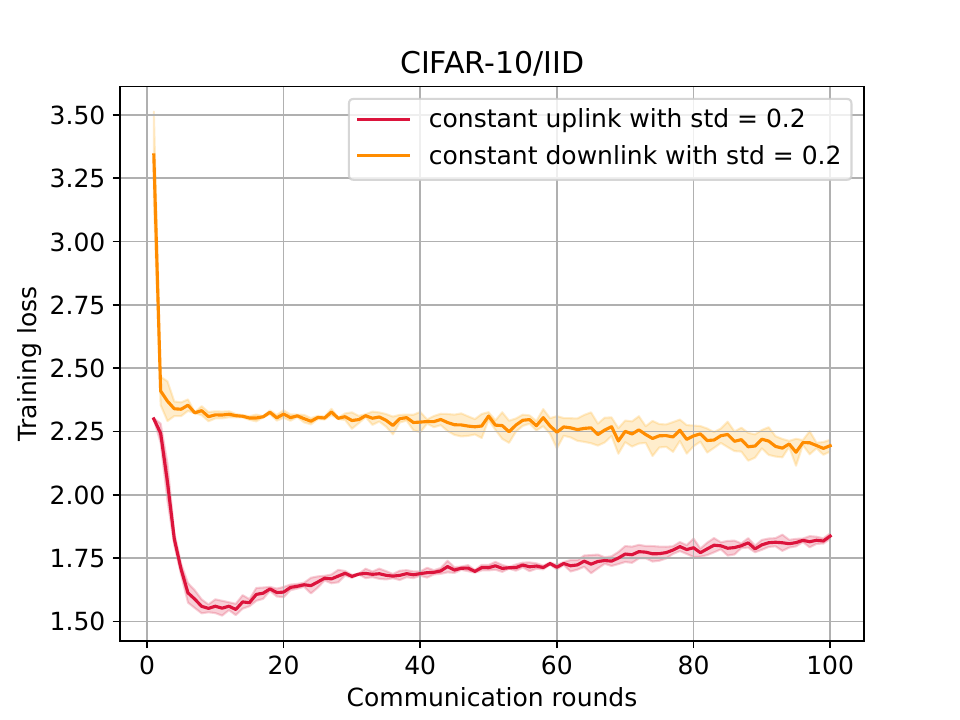}
    \caption{Constant noise addition} \label{fig1_cifar_10}
\end{subfigure}
\begin{subfigure}[t]{0.235\textwidth}
    \centering
    \includegraphics[width=\textwidth]{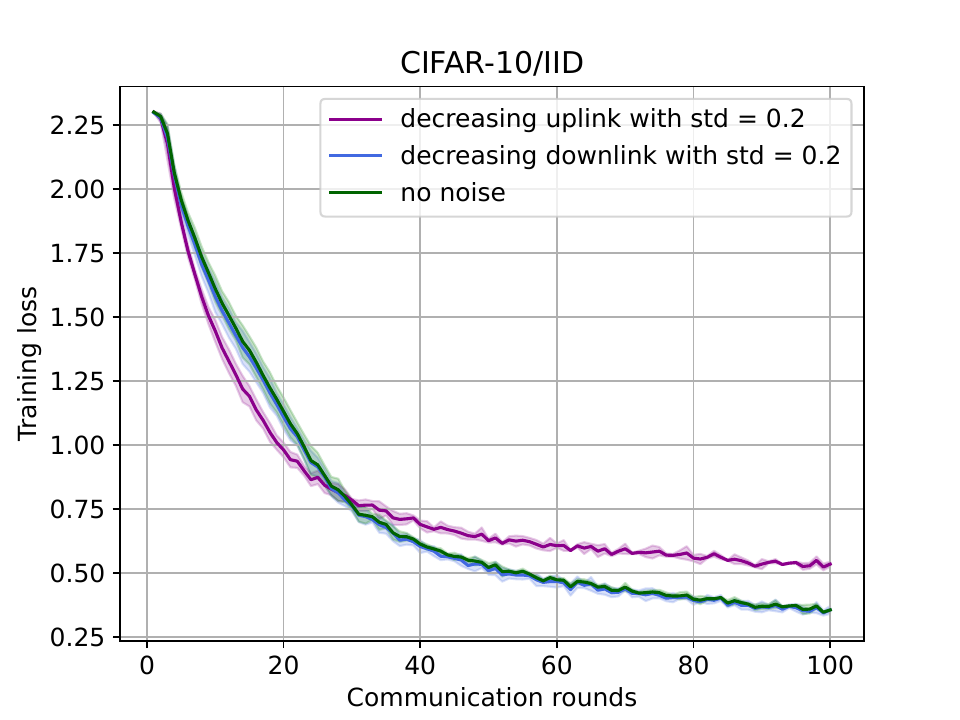}
    \caption{Controlled noise addition} \label{fig2_cifar_10}
\end{subfigure}
\begin{subfigure}[t]{0.235\textwidth}
    \centering
    \includegraphics[width=\textwidth]{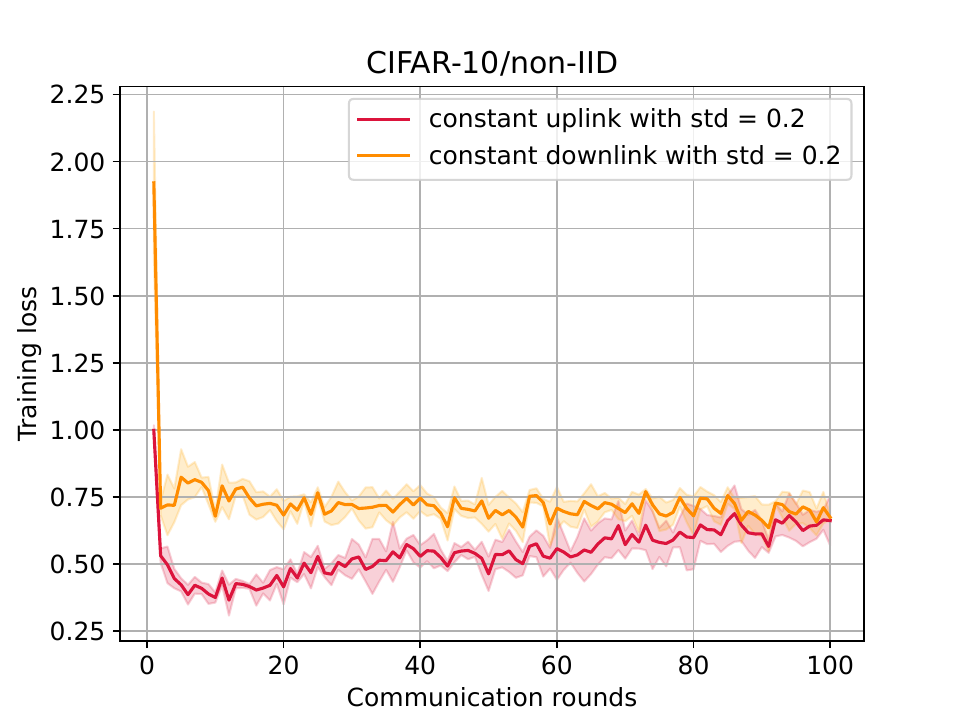}
    \caption{Constant noise addition} \label{fig3_cifar_10}
\end{subfigure}
\begin{subfigure}[t]{0.235\textwidth}
    \centering
    \includegraphics[width=\textwidth]{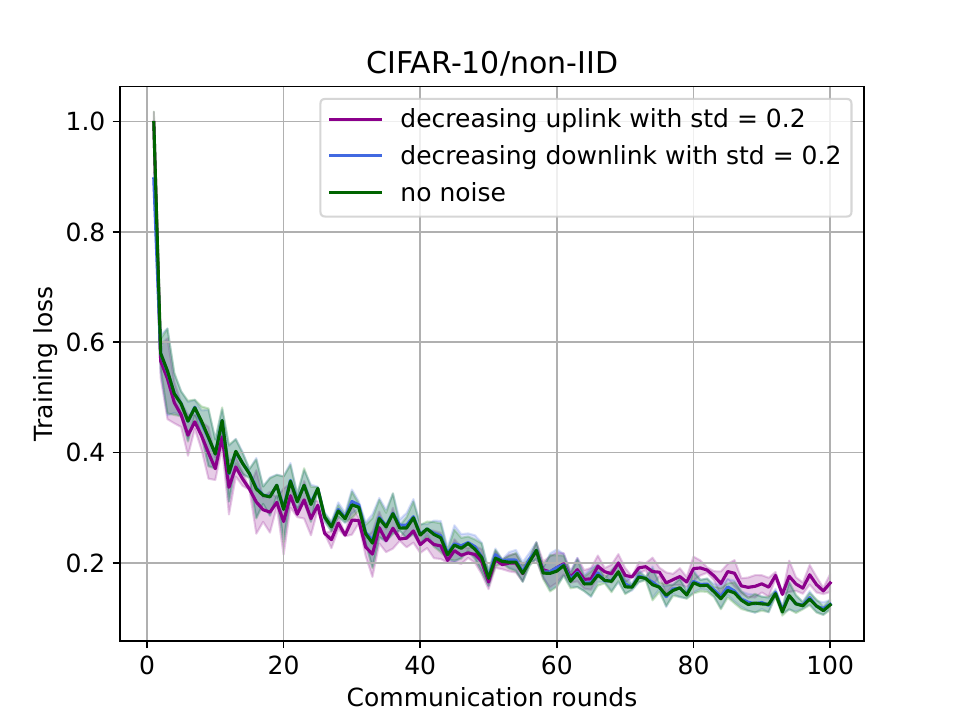}
    \caption{Controlled noise addition} \label{fig4_cifar_10}
\end{subfigure}
\caption{Train loss vs. communication rounds for the IID and non-IID data distribution of the CIFAR-10 dataset. The plots (\ref{fig1_cifar_10}) and (\ref{fig3_cifar_10}) present the constant noise addition setting. We can see that the theory is verified from the plots. Similarly, the plots (\ref{fig2_cifar_10}) and (\ref{fig4_cifar_10}) show the effect of the proposed SNR control strategy.}
\label{fig:CIFAR_10_plots}
\vspace{-4mm}
\end{figure}
Now, we test the efficacy of the proposed SNR control strategy in \Cref{sec:snr}. In particular, since we already established using Theorem \ref{thm-noisy-fedavg} that the effect of downlink noise is more degrading than uplink noise, we can utilize different SNR scaling policies to save on power resources while alleviating the effect of noise. Hence, we  scale the downlink and uplink noises  by $\Omega(\frac{1}{E^2 k})$ and  $\Omega(\frac{1}{\sqrt{k}})$, respectively. We can observe the results from Fig. \ref{fig:control_plot} and see how it converges almost in tandem with the noise-free case, as predicted by \Cref{coro-noisy-fedavg}.
\begin{figure*}[ht]
\begin{subfigure}[t]{0.33\textwidth}
    \centering
    \includegraphics[width=\textwidth]{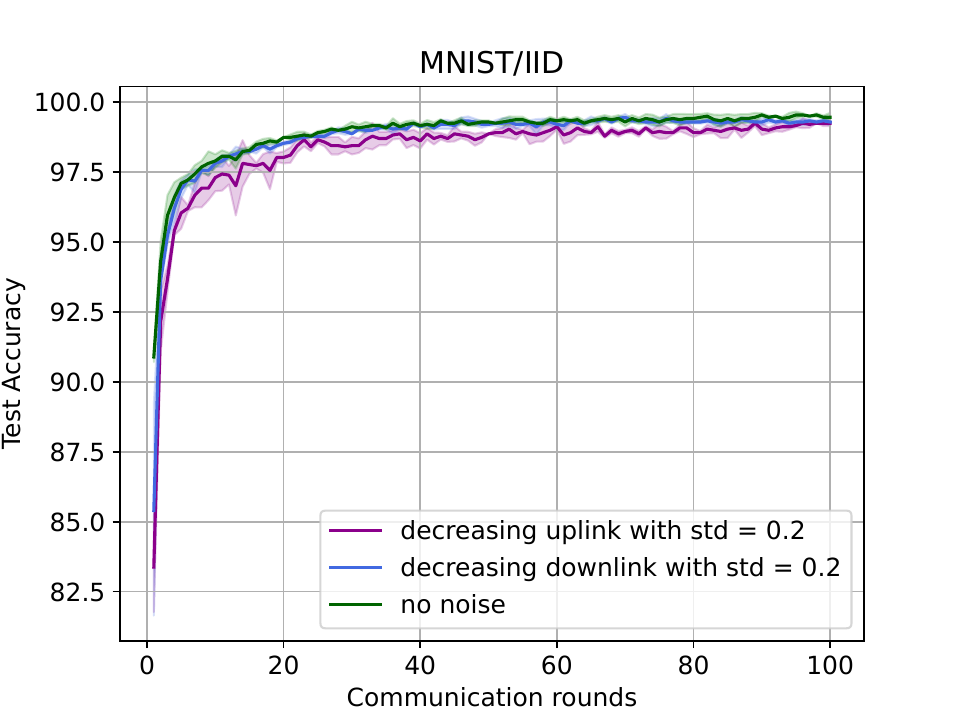}
    \label{fig1_test_mnist_iid}
\end{subfigure}
\begin{subfigure}[t]{0.33\textwidth}
    \centering
    \includegraphics[width=\textwidth]{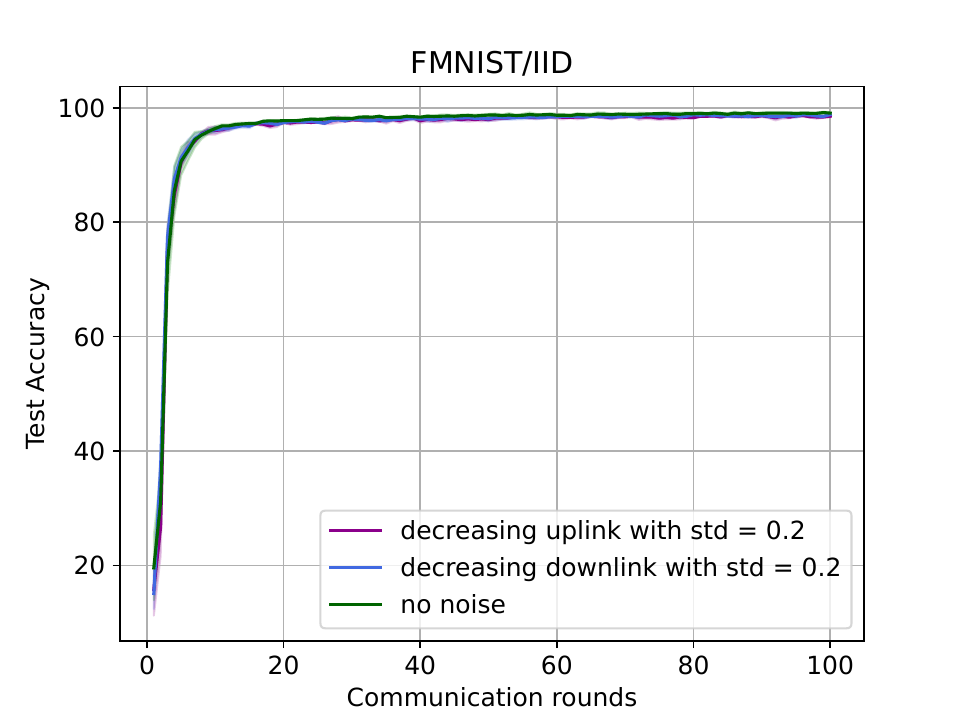}
    \label{fig2_test_fmnist_iid}
\end{subfigure}
\begin{subfigure}[t]{0.33\textwidth}
    \centering
    \includegraphics[width=\textwidth]{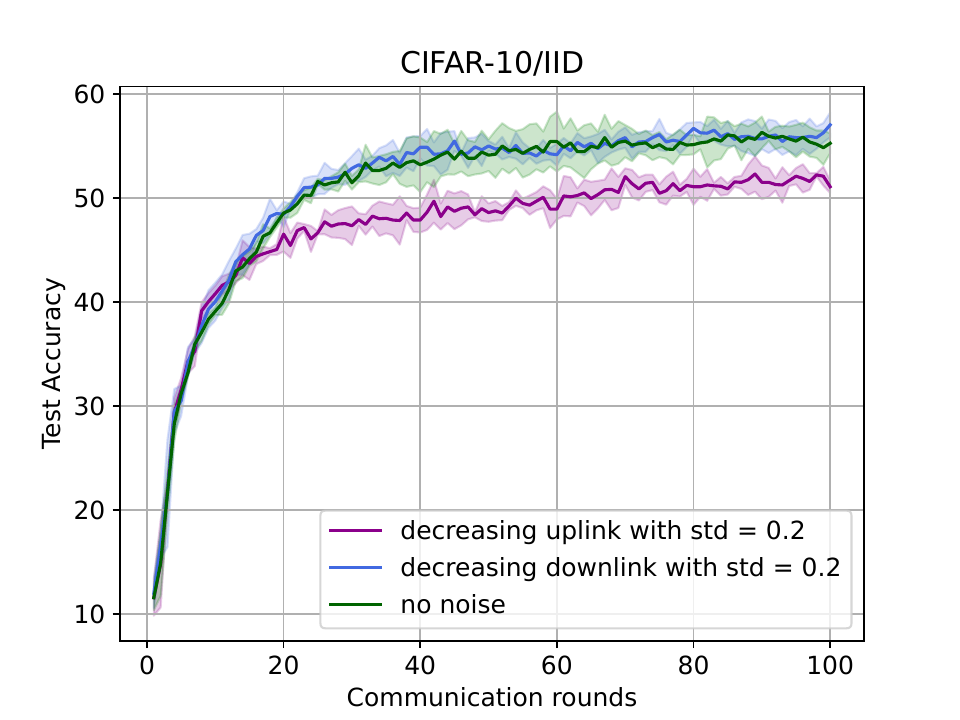}
    \label{fig3_test_cifar_10_iid}
\end{subfigure}
\\
\begin{subfigure}[t]{0.33\textwidth}
    \centering
    \includegraphics[width=\textwidth]{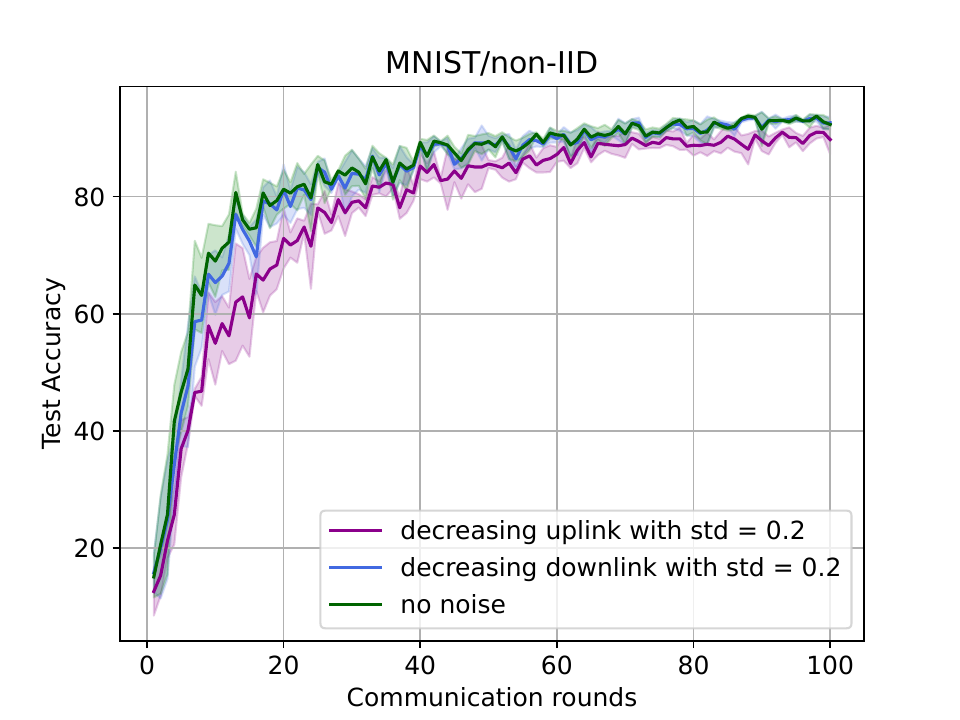}
    \label{fig1_test_mnist_non_iid}
\end{subfigure}
\begin{subfigure}[t]{0.33\textwidth}
    \centering
    \includegraphics[width=\textwidth]{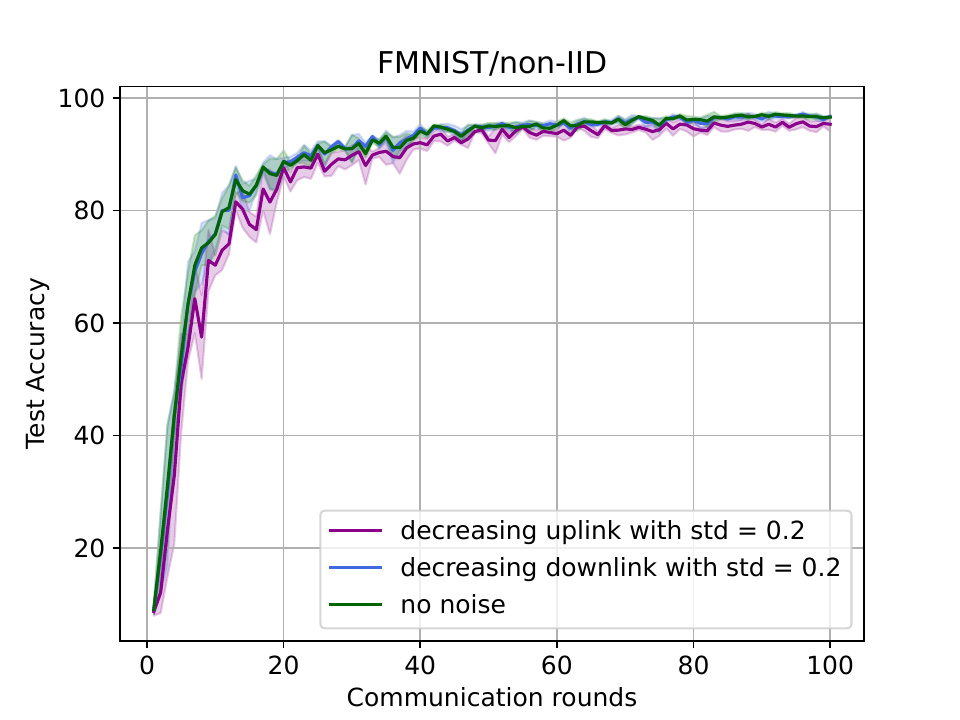}
    \label{fig2_test_fmnist_non_iid}
\end{subfigure}
\begin{subfigure}[t]{0.33\textwidth}
    \centering
    \includegraphics[width=\textwidth]{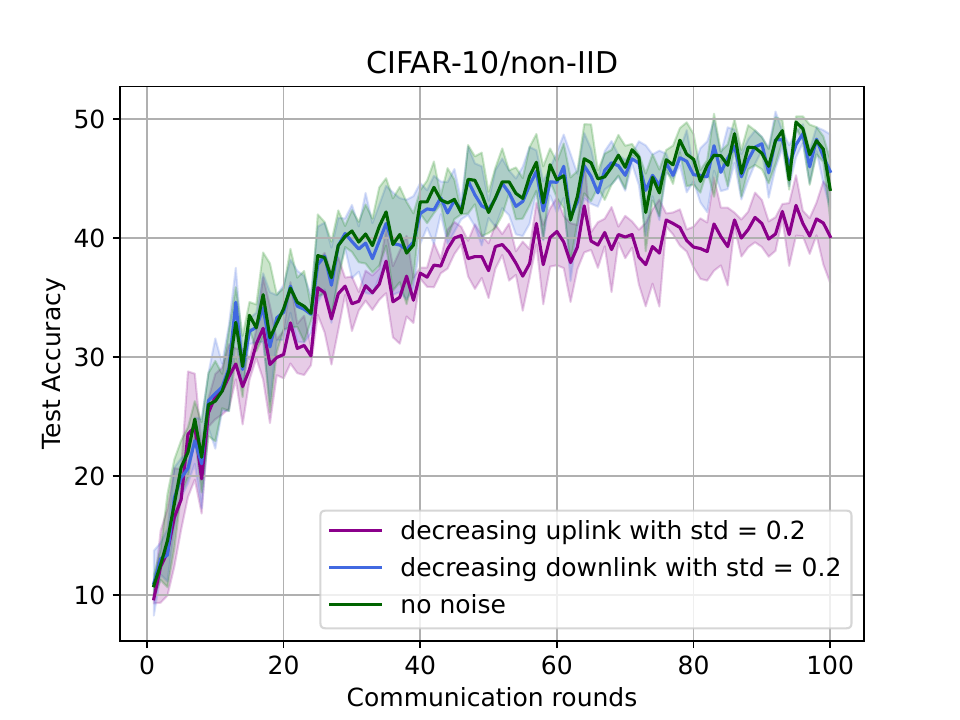}
    \label{fig3_test_cifar_10_non_iid}
\end{subfigure}
\caption{Test accuracy vs. communication rounds for the IID (top 3 figures) and non-IID (bottom 3 figures) data distribution of the MNIST, FMNIST, and CIFAR-10 datasets.}
\label{fig:test_acc_plots}
\vspace{-4mm}
\end{figure*}
\subsection{Deep learning experiment}
\label{sec:DLE}
To check the validity of our theory on real-world datasets, we run deep learning experiments on the MNIST, CIFAR-10, and Fashion-MNIST (FMNIST) datasets, both for IID and non-IID settings. Before we start with the different experimental setups we clarify that these experiments are not designed to provide benchmark results on the corresponding dataset but rather provide insights into the proposed theory.
\subsubsection{MNIST}
In this, we train a CNN model with 60000 samples, equally distributed over a set of $n = 100$ clients. In each round, $20\%$ of the clients participate based on random selection, which leads to $r = 20$. To emulate the IID setting, the data is shuffled and then randomly assigned to each client resulting in 600 samples/client. For the non-IID setting, we assign 1 or 2 labels to each client randomly. The model in each client is two $5 \times 5$ convolution layers, having 32 and 64 channels, respectively. Each of these layers is followed by a $2\times2$ max pooling. Finally, the output is fed to a fully connected layer with 512 units followed by a ReLU activation, and a final output layer with softmax. We also included a dropout layer having the dropout = 0.2. The following are the parameters used for the training: local number of iterations, $ E = 5$, global communication rounds, $K = 100$, local batch size, $BS= 20$, and learning rate, $\eta_k = 0.01$ for IID case and $\eta_k = 0.001$ for the non-IID case.

\subsubsection{Fashion-MNIST(FMNIST)}
Here we train a CNN model with 60000 samples. The data is distributed over a set of $n=100$ clients, where, $r=20$ clients participate in each round randomly. The experimental premise is divided into the IID and non-IID settings based on the distribution of the dataset. In IID the dataset is equally distributed equally amongst all the clients and in the non-IID setting, we only share 1 or 2 labels with each client to maintain high data heterogeneity. In each client, the model consists of two convolutional layers and three fully-connected layers. These $5 \times 5$ convolution layers have 6 and 12 channels respectively. Each of these layers is followed by ReLU activation and a $2 \times 2$ max pooling layer. The following are the parameters used for the training: local number of iterations, $ E = 5$, global communication rounds, $K = 100$, local batch size, $BS= 20$, and learning rate, $\eta_k = 0.01$.
\subsubsection{CIFAR-10}
We train a CNN model similar to MNIST and FMNIST. The 50000 samples are equally distributed over $n=100$ clients out of which $20\%$, i.e., $r=20$, clients participate randomly in each round. In the case of the IID setting the data is shuffled and then randomly assigned to each client whereas in the case of the non-IID setting only 1 or 2 labels are assigned to each client randomly. The model architecture consists of two convolutional layers, followed by three fully-connected layers. The first convolutional layer has 3 input channels and 6 output channels, with a kernel size of 5. The second convolutional layer has 6 input channels and 16 output channels, also with a kernel size of 5. Between the convolutional layers, there is a max pooling layer with a kernel size of 2 and a stride of 2. After the convolutional layers, the output is flattened and passed through the three fully-connected layers, with 120, 84, and 10 units respectively. The forward function applies a ReLU activation function after each convolutional and fully-connected layer, except for the final fully-connected layer. The following are the parameters used for the training: local number of iterations, $ E = 5$, global communication rounds, $K = 100$, local batch size, $BS= 20$, and learning rate, $\eta_k = 0.01$.
\subsubsection{FEMNIST}
In this, we train a CNN model with $n = 1000$ clients on the FEMNIST \cite{caldas2018leaf} dataset where in each round only $1\%$, i.e., 10 clients participate. This dataset consists of images that are highly heterogeneous since it has 62 classes consisting of digits and upper and lower case English characters. Additionally, the heterogeneity comes from the fact that these characters are written by different subjects. Thus, we only conduct experiments for the non-IID case. The model architecture consists of two convolutional layers followed by max-pooling, dropout, and fully-connected layers. The model architecture is adapted from \cite{reddi2020adaptive}. The following are the parameters used for the training: local number of iterations, $ E = 5$, global communication rounds, $K = 100$, local batch size, $BS= 50$, and learning rate, $\eta_k = 0.01$.
\begin{figure}[t]
\centering
    \begin{subfigure}[t]{0.33\textwidth}
    \centering
    \includegraphics[width=\textwidth]{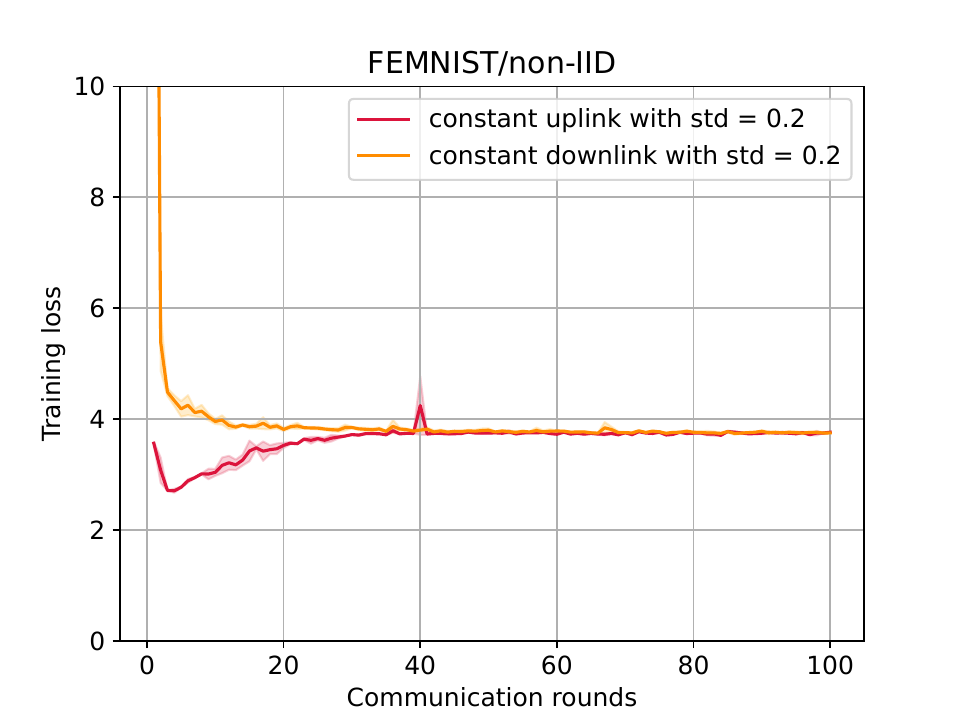}
    \caption{Constant noise} \label{fig:femnist_var_plot}
\end{subfigure}\hfill
\begin{subfigure}[t]{0.33\textwidth}
    \centering
    \includegraphics[width=\textwidth]{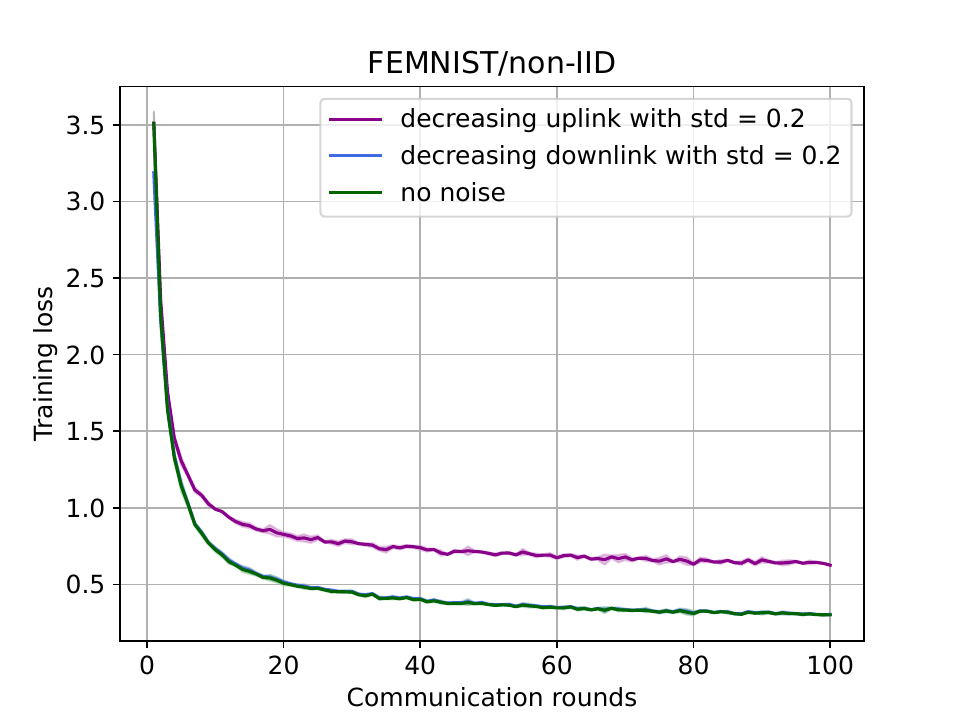}
    \caption{Decreasing noise} \label{fig:femnist_control_plot}
\end{subfigure}\hfill
\begin{subfigure}[t]{0.33\textwidth}
    \centering
    \includegraphics[width=\textwidth]{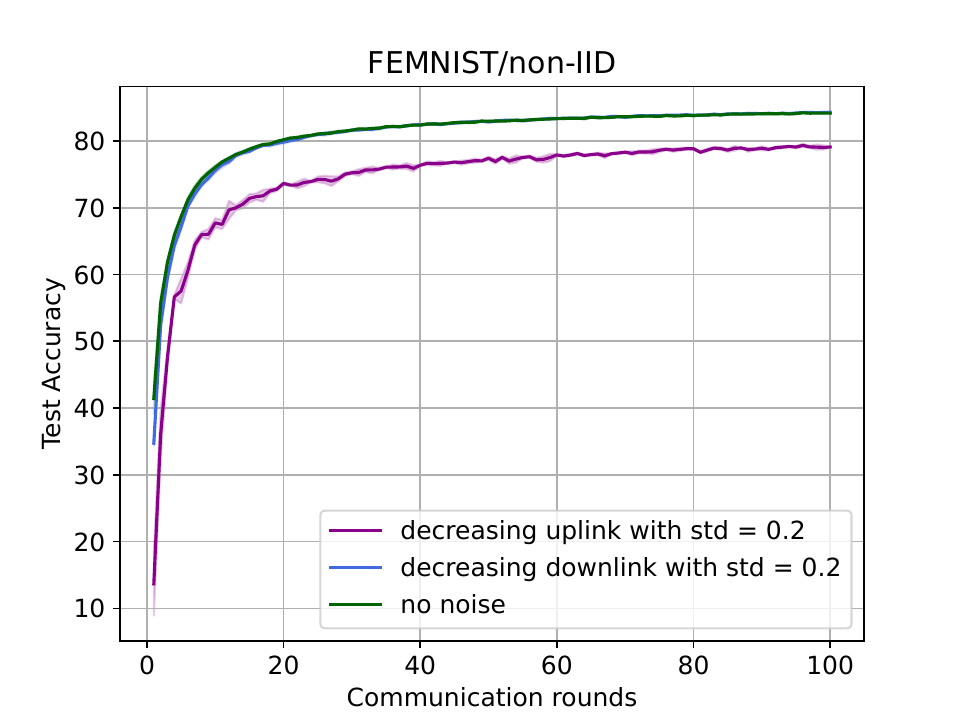}
    \caption{Test accuracy} 
    \label{fig:femnist_test}
\end{subfigure}\hfill
\caption{Train loss and test accuracy vs. communication rounds for the non-IID data distribution of the FEMNIST dataset. The plots (\ref{fig:femnist_var_plot}) present the constant noise addition setting. We can see that the theory is verified from the plots. Similarly, the plots (\ref{fig:femnist_control_plot} and \ref{fig:femnist_test}) show the effect of the proposed SNR control strategy.}
\label{fig:femnist_plots}
\vspace{-4mm}
\end{figure}

For all four (MNIST, FMNIST, CIFAR-10, and FEMNIST) experiments, we follow the same experimental setting as the synthetic experiment, i.e., the uplink and downlink noises are sampled from a zero mean Gaussian distribution $\sim \mathcal{N}(0, \upsilon^2)$. For our experiments, we choose $\upsilon = 0.2$. Again, to imitate the noisy transmission channels, we add both uplink and downlink noise to the communicated messages. The results are shown in \Cref{fig:MNIST_plots,fig:FMNIST_plots,fig:CIFAR_10_plots,fig:test_acc_plots,fig:femnist_plots} and are generated by averaging over 3 independent runs. We can visualize from the figure that the effect of noises inhibits the model from converging. One noteworthy point in the case of the FEMNIST dataset is the overlap of uplink and downlink noises after certain rounds. We hypothesize that this discrepancy goes back to the features of this experiment which is highly heterogeneous in nature. Also, the model and thus the objective function is not smooth and that is why we see this gap. So, essentially the effect of having a non-smooth function is exacerbated in this particular experiment due to the high level of heterogeneity. However, by employing our proposed SNR control policy in \Cref{sec:snr}, we achieve model convergence for both noises with negligible convergence error with respect to the noise-free case.
\section{Conclusion}
We studied the effects of having imperfect/noisy communication channels for federated learning. To the best of our knowledge, this paper is the first to establish the convergence analysis of FL where consideration has been made on both noisy transmission channels and smooth non-convex loss function without requiring the restrictive and hard-to-verify assumption of bounded client dissimilarity. By analyzing the convergence with these relaxed assumptions, we theoretically demonstrated that the effect of downlink noise is more detrimental than uplink noise. Using this insight, we proposed to employ SNR scaling policies for respective noisy channels that result in considerable savings in power consumption compared to existing approaches. We verified these theoretical findings via empirical results demonstrating the efficacy of the proposed analysis and its validity. 
Future work may involve investigating a parameter-free version of this scenario, i.e., an FL scheme that does not require the knowledge of parameters such as smoothness and analyzing its implication on the design of the system.

\appendix[Lemmas and Proofs]
\section{\break Lemmas and Proofs}
\label{appendix:lemma-NFL}
\begin{lemma}
\label{sep26-lem3}
For $\eta_k L E \leq \frac{1}{2}$, we have:
\begin{multline*}
    \mathbb{E}[f(\bm{w}_{k+1})] 
    \leq \mathbb{E}[f(\bm{w}_k)] - \frac{\eta_k (E-1)}{2} \mathbb{E}[\|\nabla f(\bm{w}_k)\|^2]
    + 4\eta_k^2 L E^2 \Big(\frac{(n-r)}{r(n-1)} + \frac{2}{3}\eta_k L E\Big)\Big(\frac{1}{n}\sum_{i \in [n]} \mathbb{E}[\|\nabla {f}_i(\bm{w}_{k})\|^2]\Big) 
    \\
    + \eta_k^2 L E \Big(\frac{\eta_k L E}{n}\Big(1 + \frac{2nE}{3} + n\Big) + \frac{1}{r} + \frac{(n-r)}{r(n-1)}\Big)\sigma^2
    + \frac{\eta^2_kL}{2r}\frac{1}{n}\sum_{i \in [n]}\bm{U}^2_{k,i}
    + \frac{\eta_kL^2}{2}\Big(1 + 2\eta_kL + 4E\{1+3\eta^2_kL^2
    \\
    + 2\eta_kLE(2+3\eta^2_kL^2)(\frac{2}{3}\eta_kLE +\frac{(n-r)}{r(n-1)})\}\Big)\frac{1}{n}\sum_{i \in [n]}\bm{N}^2_{k,i}.
\end{multline*}
\end{lemma}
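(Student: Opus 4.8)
The plan is to chain the $L$-smoothness inequality (\Cref{as1}) over a single communication round and then take expectations over the three sources of randomness in that round — the sampled client subset $\mathcal{S}_k$, the mini-batches $\{\mathcal{B}_{k,\tau}^{(i)}\}$, and the downlink/uplink noises $\{\bm{\nu}_k^{(i)},\bm{e}_k^{(i)}\}$ — using that both noises are zero-mean and mutually independent of everything else (\Cref{as-july27}) and that the stochastic gradients are unbiased. Writing the aggregate step of \cref{eq:noise-model-aggregation} as $\bm{w}_k - \bm{w}_{k+1} = \tfrac{\eta_k}{r}\sum_{i\in\mathcal{S}_k}\sum_{\tau=0}^{E-1}\widetilde{\nabla} f_i(\bm{w}_{k,\tau}^{(i)};\mathcal{B}_{k,\tau}^{(i)}) + \tfrac{1}{r}\sum_{i\in\mathcal{S}_k}\bm{e}_k^{(i)}$, smoothness yields $\mathbb{E}[f(\bm{w}_{k+1})] \le \mathbb{E}[f(\bm{w}_k)] - \mathbb{E}\langle \nabla f(\bm{w}_k),\, \bm{w}_k - \bm{w}_{k+1}\rangle + \tfrac{L}{2}\mathbb{E}\|\bm{w}_k - \bm{w}_{k+1}\|^2$, and I would bound the inner-product and squared-norm terms separately.

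For the inner-product term the uplink noise drops out by zero-mean independence, and since $\mathcal{S}_k$ is a uniformly random size-$r$ subset one has $\mathbb{E}_{\mathcal{S}_k}[\tfrac1r\sum_{i\in\mathcal{S}_k}(\cdot)] = \tfrac1n\sum_{i\in[n]}(\cdot)$, leaving $\eta_k\sum_{\tau=0}^{E-1}\tfrac1n\sum_{i\in[n]}\mathbb{E}\langle \nabla f(\bm{w}_k),\nabla f_i(\bm{w}_{k,\tau}^{(i)})\rangle$. Each $\nabla f_i(\bm{w}_{k,\tau}^{(i)})$ is transferred to $\nabla f_i(\bm{w}_k)$ via $\|\nabla f_i(\bm{w}_{k,\tau}^{(i)}) - \nabla f_i(\bm{w}_k)\| \le L\|\bm{w}_{k,\tau}^{(i)} - \bm{w}_k\|$ (\Cref{as1}) together with Young's inequality; summing over $\tau$ this produces the main descent term $-\tfrac{\eta_k(E-1)}{2}\mathbb{E}\|\nabla f(\bm{w}_k)\|^2$ — where essentially one unit of $E$ is sacrificed because the first local point is the downlink-perturbed $\bm{w}_{k,0}^{(i)} = \bm{w}_k + \bm{\nu}_k^{(i)}$ rather than $\bm{w}_k$ itself — plus a remainder governed by the \emph{client-drift} quantity $d_{k,\tau}^{(i)} := \mathbb{E}\|\bm{w}_{k,\tau}^{(i)} - \bm{w}_k\|^2$.

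For the squared-norm term $\tfrac{L}{2}\mathbb{E}\|\bm{w}_k - \bm{w}_{k+1}\|^2$ I would split off the uplink-noise component, whose squared term $\tfrac{L}{2}\mathbb{E}\|\tfrac1r\sum_{i\in\mathcal{S}_k}\bm{e}_k^{(i)}\|^2$ becomes, by independence and the client averaging, the $\bm{U}^2$ contribution in the statement (the $1/r$ reflecting averaging over the $r$ sampled clients), and then decompose the stochastic-gradient component into its conditional mean $\tfrac{\eta_k}{r}\sum_{i\in\mathcal{S}_k}\sum_\tau\nabla f_i(\bm{w}_{k,\tau}^{(i)})$ plus a zero-mean fluctuation. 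The fluctuation, bounded via \Cref{as-sfo}, yields the $\tfrac{\sigma^2}{r}$ piece; the random subsampling contributes the without-replacement variance factor $\tfrac{n-r}{r(n-1)}$ multiplying $\tfrac1n\sum_i\|\nabla f_i(\bm{w}_k)\|^2$; and the drift $d_{k,\tau}^{(i)}$ again shows up in the remainder, together with the $\sigma^2$ and $\bm{N}^2$ pieces it carries.

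Everything therefore reduces to a good bound on the drift $d_{k,\tau}^{(i)}$, and this is the main obstacle and the place where the downlink noise enters — in contrast with the noiseless analysis of \cite{das2022faster} the drift recursion is \emph{initialized} at $d_{k,0}^{(i)} = \bm{N}_{k,i}^2$ rather than $0$. I would unroll $\bm{w}_{k,\tau+1}^{(i)} = \bm{w}_{k,\tau}^{(i)} - \eta_k\widetilde{\nabla} f_i(\bm{w}_{k,\tau}^{(i)};\mathcal{B}_{k,\tau}^{(i)})$, apply $\|a+b\|^2\le(1+\beta)\|a\|^2+(1+\beta^{-1})\|b\|^2$ with a suitable $\beta$ together with $L$-smoothness and \Cref{as-sfo}, and invoke $\eta_k L E\le\tfrac12$ so that the accumulated geometric factors stay $O(1)$; this gives $d_{k,\tau}^{(i)} \lesssim \eta_k^2 E\big(E\|\nabla f_i(\bm{w}_k)\|^2+\sigma^2\big) + \big(\text{a polynomial in }\eta_k L E\big)\bm{N}_{k,i}^2$. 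Feeding this back through the inner-product and squared-norm bounds — each $L$-smoothness conversion introducing another factor of $L$ and each sum over the $E$ local steps another factor of $E$ — reproduces precisely the nested $\bm{N}^2$ coefficient $\tfrac{\eta_k L^2}{2}\big(1 + 2\eta_k L + 4E\{1+3\eta_k^2L^2 + 2\eta_k L E(2+3\eta_k^2L^2)(\tfrac23\eta_k L E + \tfrac{n-r}{r(n-1)})\}\big)$, the $\sigma^2$ coefficient, and the $4\eta_k^2 LE^2(\tfrac{n-r}{r(n-1)}+\tfrac23\eta_k LE)$ factor on $\tfrac1n\sum_i\|\nabla f_i(\bm{w}_k)\|^2$ stated in the lemma; collecting these and simplifying constants under $\eta_k L E\le\tfrac12$ gives the claim. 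I expect the delicate points to be (i) getting the $\bm{N}^2$ bookkeeping to close into exactly the stated nested form, and (ii) verifying that the ``$-1$'' in $(E-1)$ is exactly the slack freed up by handling the $\tau=0$ term.
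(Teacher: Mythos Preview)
Your high-level plan is sound and matches the paper's overall scaffold (smoothness descent, split into inner-product and squared-norm pieces, reduce everything to a client-drift bound controlled by $\eta_k LE\le\tfrac12$), but the paper's decomposition differs from yours in two places worth noting. First, rather than transferring each $\nabla f_i(\bm{w}_{k,\tau}^{(i)})$ to $\nabla f_i(\bm{w}_k)$ via Young, the paper adds and subtracts $\widetilde\nabla f_i(\bm{w}_k;\mathcal{B}_{k,0}^{(i)})$ so that the inner product splits into a piece $A_2$ involving the client-averaged gradient $\bm{u}_{k,\tau}:=\tfrac1n\sum_i\nabla f_i(\bm{w}_{k,\tau}^{(i)})$ (handled by the polarization identity $\langle a,b\rangle=\tfrac12(\|a\|^2+\|b\|^2-\|a-b\|^2)$) and an explicit downlink-correction piece $A_3$; it is the $+\tfrac{\eta_k}{2}\|\nabla f(\bm{w}_k)\|^2$ coming out of $A_3$ that eats the one unit and produces the $(E-1)$ you flagged. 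Second, the resulting error $\sum_\tau\|\nabla f(\bm{w}_k)-\bm{u}_{k,\tau}\|^2$ is bounded not through the per-client drift $d_{k,\tau}^{(i)}$ directly but via an auxiliary lemma (\Cref{sept26-lem2}) that routes through the \emph{virtual average} iterate $\bar{\bm{w}}_{k,\tau}:=\tfrac1n\sum_i\bm{w}_{k,\tau}^{(i)}$ --- the two-hop split $\nabla f(\bm{w}_k)-\nabla f(\bar{\bm{w}}_{k,\tau})$ plus $\nabla f(\bar{\bm{w}}_{k,\tau})-\bm{u}_{k,\tau}$, adapted from \cite{das2022faster} --- together with a second lemma (\Cref{sept26-lem1}) bounding $\sum_t\|\nabla f_i(\bm{w}_{k,t}^{(i)})\|^2$. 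Your single-drift-recursion route is more elementary and should close, but the paper's polarization-plus-virtual-average detour is what actually manufactures the specific nested $\bm{N}^2$ coefficient in the statement; reproducing those exact constants from your direct approach would mean re-deriving them rather than matching term-by-term.
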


\begin{proof}
Define
\begin{align*}
    \widehat{\bm{u}}_{k,\tau}^{(i)} := \widetilde\nabla {f}_i(\bm{w}^{(i)}_{k, \tau}; \mathcal{B}^{(i)}_{k, \tau})\text{,  } & \widehat{\bm{u}}_{k,\tau} := \frac{1}{n}\sum_{i \in [n]} \widehat{\bm{u}}_{k,\tau}^{(i)}\text{, }
    \\
    \bm{u}_{k,\tau} := \frac{1}{n}\sum_{i \in [n]}\nabla f_i(\bm{w}^{(i)}_{k, \tau})\text{,  } & 
    \overline{\bm{w}}_{k,\tau} := \frac{1}{n}\sum_{i \in [n]}\bm{w}^{(i)}_{k, \tau}.
\end{align*}
Then:
\begin{equation}
    \label{eq:feb28-101}
    \nabla {f}(\bm{w}_k ) = \frac{1}{n}\sum_{i \in [n]}\nabla{f}_i(\bm{w}_k )
\end{equation}
\begin{align}
    \label{eq:feb28-1}
    \bm{w}_{k+1} = \bm{w}_k - \eta_k\Big[\frac{1}{r}\sum_{i \in \mathcal{S}_k}\Big( \bm{e}^{(i)}_k +  \sum_{\tau=0}^{E-1}\widehat{\bm{u}}_{k,\tau}^{(i)} + \widetilde\nabla {f}_i(\bm{w}_k + \bm{\nu}_k^{(i)}; \mathcal{B}^{(i)}_{k, 0}) 
    - \widetilde\nabla {f}_i(\bm{w}_k ; \mathcal{B}^{(i)}_{k, 0})\Big)\Big].
\end{align}
\begin{align}
    \label{eq:feb28-102}
    {\bm{w}}_{k,\tau}^{(i)} = \bm{w}_k + \bm{\nu}_k^{(i)} - \eta_k\Big( \sum_{t=0}^{\tau-1}\widehat{\bm{u}}_{k,\tau}^{(i)} + \widetilde\nabla {f}_i(\bm{w}_k + \bm{\nu}_k^{(i)}; \mathcal{B}^{(i)}_{k, 0}) - \widetilde\nabla {f}_i(\bm{w}_k ; \mathcal{B}^{(i)}_{k, 0})\Big).
\end{align}
\begin{align}
    \label{eq:feb28-1-0}
    \overline{\bm{w}}_{k,\tau} = \bm{w}_k + \frac{1}{n}\sum_{i \in [n]}\bm{\nu}_k^{(i)} - \eta_k\Big[\sum_{t=0}^{\tau-1}\widehat{\bm{u}}_{k,t} +\frac{1}{n}\sum_{i \in [n]}\Big( \widetilde\nabla {f}_i(\bm{w}_k + \bm{\nu}_k^{(i)}; \mathcal{B}^{(i)}_{k, 0}) - \widetilde\nabla {f}_i(\bm{w}_k ; \mathcal{B}^{(i)}_{k, 0})\Big)\Big].
\end{align}
\begin{equation}
    \label{eq:feb28-2}
    \mathbb{E}_{\{\mathcal{B}^{(i)}_{k, \tau}\}_{i=1}^n}[\widehat{\bm{u}}_{k,\tau}] = \bm{u}_{k,\tau}.
\end{equation}
\begin{equation}
    \label{eq:feb28-3}
    \mathbb{E}\Big[\Big\|\sum_{t=0}^{\tau-1}\widehat{\bm{u}}_{k,t}\Big\|^2\Big] \leq \tau \sum_{t=0}^{\tau-1}\mathbb{E}[\|\bm{u}_{k,t}\|^2] + \frac{\tau \sigma^2}{n}.
\end{equation}
\begin{equation}
    \label{eq:feb28-3-1}
    \mathbb{E}\Big[\Big\|\sum_{t=0}^{\tau-1}\widehat{\bm{u}}_{k,t}^{(i)}\Big\|^2\Big] \leq \tau \sum_{t=0}^{\tau-1}\mathbb{E}[\|\nabla f_i(\bm{w}^{(i)}_{k, t})\|^2] + {\tau \sigma^2}.
\end{equation}
\begin{equation}
    \label{eq:feb28-3-2}
    \bm{N}^{2}_{k, i} := \mathbb{E}\Big[\Big\|\bm{\nu}^{i}_k\Big\|^2\Big] .
\end{equation}
\begin{equation}
    \label{eq:feb28-3-3}
    \bm{U}^2_{k, i} := \mathbb{E}\Big[\Big\|\bm{e}^{i}_k\Big\|^2\Big].
\end{equation}
Recall that $\sigma^2$ is the maximum variance of the local (client-level) stochastic gradients.
In \cref{eq:feb28-3}, the expectation is w.r.t. $\{\mathcal{B}^{(i)}_{k, t}\}_{i=1, t=0}^{n, \tau-1}$ and it follows due to the independence of the noise in each local update of each client. 
Similarly, \cref{eq:feb28-3-1}, the expectation is w.r.t. $\{\mathcal{B}^{(i)}_{k, t}\}_{t=0}^{\tau-1}$ and it follows due to the independence of the noise in each local update. Also, \cref{eq:feb28-3-2} and \cref{eq:feb28-3-3} follows since both downlink and uplink noises have zero mean.

Next, using the $L$-smoothness of $f$ and \cref{eq:feb28-1}, we get
\begin{flalign}
    \label{eq:feb28-4-0-1}
    \mathbb{E}[f(\bm{w}_{k+1})] & \leq 
    \mathbb{E}[f(\bm{w}_k)] + (A) + (B)
\end{flalign}
where
\begin{align}
    A = - \mathbb{E}\Big[ \Big\langle \nabla f(\bm{w}_k), \eta_k\Big[\frac{1}{r}\sum_{i \in \mathcal{S}_k}\Big( \bm{e}^{(i)}_k +  \sum_{\tau=0}^{E-1}\widehat{\bm{u}}_{k,\tau}^{(i)} + \widetilde\nabla {f}_i(\bm{w}_k + \bm{\nu}_k^{(i)}; \mathcal{B}^{(i)}_{k, 0}) - \widetilde\nabla {f}_i(\bm{w}_k ; \mathcal{B}^{(i)}_{k, 0})\Big)\Big] \Big\rangle\Big]
\end{align}
and
\begin{align}
    B = \frac{L}{2}\mathbb{E}\Big[\Big\|\eta_k\Big[\frac{1}{r}\sum_{i \in \mathcal{S}_k}\Big( \bm{e}^{(i)}_k +  \sum_{\tau=0}^{E-1}\widehat{\bm{u}}_{k,\tau}^{(i)} + \widetilde\nabla {f}_i(\bm{w}_k + \bm{\nu}_k^{(i)}; \mathcal{B}^{(i)}_{k, 0}) - \widetilde\nabla {f}_i(\bm{w}_k ; \mathcal{B}^{(i)}_{k, 0})\Big)\Big]\Big\|^2\Big]
\end{align}
Now using (A):
\begin{multline}
    A = \underbrace{-\eta_k\mathbb{E}\Big[ \Big\langle \nabla f(\bm{w}_k), \frac{1}{r}\sum_{i \in \mathcal{S}_k} \bm{e}^{(i)}_k \Big\rangle\Big]}_{(A_1)} \underbrace{-\eta_k\mathbb{E}\Big[ \Big\langle \nabla f(\bm{w}_k), \frac{1}{r}\sum_{i \in \mathcal{S}_k} \sum_{\tau=0}^{E-1}\widehat{\bm{u}}_{k,\tau}^{(i)} \Big\rangle\Big]}_{(A_2)}
    \\
    \underbrace{-\eta_k\mathbb{E}\Big[ \Big\langle \nabla f(\bm{w}_k), \frac{1}{r}\sum_{i \in \mathcal{S}_k} \big(\widetilde\nabla {f}_i(\bm{w}_k + \bm{\nu}_k^{(i)}; \mathcal{B}^{(i)}_{k, 0}) - \widetilde\nabla {f}_i(\bm{w}_k ; \mathcal{B}^{(i)}_{k, 0})\big) \Big\rangle\Big]}_{(A_3)}
\end{multline}
$A_1$ will be zero since uplink noise has zero mean. Now, let's use $A_2:$
\begin{flalign}
    \nonumber
    A_2 & = -\eta_k\mathbb{E}\Big[ \Big\langle \nabla f(\bm{w}_k), \sum_{\tau=0}^{E-1}{\bm{u}}_{k,\tau} \Big\rangle\Big] \\
    \nonumber
    & = 
   -\eta_k\sum_{\tau=0}^{E-1}\mathbb{E}\Big[ \Big\langle \nabla f(\bm{w}_k),  {\bm{u}}_{k,\tau} \Big\rangle\Big]
\end{flalign}
For any 2 vectors $\bm{a}$ and $\bm{b}$, we have that:
\begin{flalign}
    \label{eq:feb28-103-2}
    \langle \bm{a}, \bm{b} \rangle = \frac{1}{2}(\|\bm{a}\|^2 + \|\bm{b}\|^2 - \|\bm{a} - \bm{b}\|^2)
\end{flalign}
Using this we will get $A_2$ as:
\begin{align}
    \label{eq:feb28-108}
    A_2= \frac{\eta_k}{2} \sum_{\tau=0}^{E-1} \mathbb{E}[\|\nabla f(\bm{w}_k) - \bm{u}_{k,\tau}\|^2] 
    - \frac{\eta_k E}{2} \mathbb{E}[\|\nabla f(\bm{w}_k)\|^2] - \frac{\eta_k}{2} \sum_{\tau=0}^{E-1} \mathbb{E}[\|\bm{u}_{k,\tau}\|^2]
\end{align}
Again using $A_3:$
\begin{flalign*}
    -\eta_k\mathbb{E}\Big[ \Big\langle \nabla f(\bm{w}_k), \frac{1}{n}\sum_{i \in [n]} (\widetilde\nabla {f}_i(\bm{w}_k + \bm{\nu}_k^{(i)}) - \widetilde\nabla {f}_i(\bm{w}_k) \Big\rangle\Big]
    = -\eta_k\mathbb{E}\Big[ \Big\langle \nabla f(\bm{w}_k), \frac{1}{n}\sum_{i \in [n]} (\nabla {f}_i(\bm{w}_k + \bm{\nu}_k^{(i)})\Big\rangle\Big]
    \\  
    + \eta_k\mathbb{E}\Big[ \Big\langle \nabla f(\bm{w}_k), \frac{1}{n}\sum_{i \in [n]} \nabla {f}_i(\bm{w}_k) \Big\rangle\Big]
\end{flalign*}
Using \cref{eq:feb28-103-2} and \cref{eq:feb28-101} in the equation above, we get
\begin{flalign*}
    A_3 = \frac{\eta_k}{2} \underbrace{\mathbb{E}[\|\nabla f(\bm{w}_k) - \frac{1}{n}\sum_{i \in [n]}\nabla{f}_{i}(\bm{w}_k + \bm{n}^{i}_k)\|^2]}_{A^{,}_3} 
    - \frac{\eta_k}{2} \mathbb{E}[\|\nabla f(\bm{w}_k)\|^2] - \frac{\eta_k}{2} \mathbb{E}[\|\frac{1}{n}\sum_{i \in [n]}\nabla{f}_{i}(\bm{w}_k + \bm{n}^{i}_k)\|^2]
    \\+ \eta_k \mathbb{E}[\langle \nabla{f}(\bm{w}_k), \nabla{f}(\bm{w}_k)\rangle]
\end{flalign*}
Reducing $A_3^{,}:$
\begin{flalign}
    \label{eq:feb28-104}
    A_3^{,} &= 
    \frac{\eta_k}{2} \mathbb{E}[\|\frac{1}{n}\sum_{i \in [n]}\nabla{f}_{i}(\bm{w}_k) - \frac{1}{n}\sum_{i \in [n]}\nabla{f}_{i}(\bm{w}_k + \bm{n}^{i}_k)\|^2]
    \\
    \label{eq:feb28-106}
    &\leq
    \frac{\eta_k L^{2}}{2} \frac{1}{n}\sum_{i \in [n]} \bm{N}^{2}_{k, i}
\end{flalign}
The \cref{eq:feb28-104} follows by using \cref{eq:feb28-101}, while \cref{eq:feb28-106} follows from the $L$-smoothness of $f_i$, \cref{eq:feb28-3-2} and independence of noises. So, $A_3$ now becomes:
\begin{align}
\nonumber
    A_3 &\leq \frac{\eta_k L^{2}}{2} \frac{1}{n}\sum_{i \in [n]} \bm{N}^{2}_{k, i} - \frac{\eta_k}{2} \mathbb{E}[\|\nabla f(\bm{w}_k)\|^2] - \frac{\eta_k}{2} \mathbb{E}[\|\frac{1}{n}\sum_{i \in [n]}\nabla{f}_{i}(\bm{w}_k + \bm{n}^{i}_k)\|^2] + \eta_k \mathbb{E}[\|\nabla{f}(\bm{w}_k)\|^2]
    \\
    &\leq
    \label{eq:feb28-107}
    \frac{\eta_k L^{2}}{2} \frac{1}{n}\sum_{i \in [n]} \bm{N}^{2}_{k, i} + \frac{\eta_k}{2} \mathbb{E}[\|\nabla f(\bm{w}_k)\|^2] - \frac{\eta_k}{2} \mathbb{E}[\|\frac{1}{n}\sum_{i \in [n]}\nabla{f}_{i}(\bm{w}_k + \bm{n}^{i}_k)\|^2]
\end{align}
So, finally by combining \cref{eq:feb28-108} and \cref{eq:feb28-107}, A becomes:
\begin{multline}
    \label{eq:feb28-109}
    A \leq \frac{\eta_k}{2} \sum_{\tau=0}^{E-1} \mathbb{E}[\|\nabla f(\bm{w}_k) - \bm{u}_{k,\tau}\|^2] - \frac{\eta_k (E-1)}{2} \mathbb{E}[\|\nabla f(\bm{w}_k)\|^2] 
    - \frac{\eta_k}{2} \sum_{\tau=0}^{E-1} \mathbb{E}[\|\bm{u}_{k,\tau}\|^2] \\+ \frac{\eta_k L^{2}}{2} \frac{1}{n}\sum_{i \in [n]} \bm{N}^{2}_{k, i}
    - \frac{\eta_k}{2} \mathbb{E}[\|\frac{1}{n}\sum_{i \in [n]}\nabla{f}_{i}(\bm{w}_k + \bm{n}^{i}_k)\|^2]
\end{multline}
Now using (B):
\begin{multline}
    \label{eq:feb28-110}
    B = \frac{L}{2}\eta^{2}_k \Big(\mathbb{E}[\|\frac{1}{r}\sum_{i \in \mathcal{S}_k} \bm{e}^{(i)}_k\|^2] 
     + \mathbb{E}[\|\frac{1}{r}\sum_{i \in \mathcal{S}_k} \sum_{\tau=0}^{E-1}\widehat{\bm{u}}_{k,\tau}^{(i)}\|^2]  + \mathbb{E}[\|\frac{1}{r}\sum_{i \in \mathcal{S}_k} \widetilde\nabla {f}_i(\bm{w}_k + \bm{\nu}_k^{(i)}; \mathcal{B}^{(i)}_{k, 0}) - \widetilde\nabla {f}_i(\bm{w}_k ; \mathcal{B}^{(i)}_{k, 0})\|^2] 
     \\+2 \mathbb{E}[\langle \frac{1}{r}\sum_{i \in \mathcal{S}_k} \bm{e}^{(i)}_k, \frac{1}{r}\sum_{i \in \mathcal{S}_k} \sum_{\tau=0}^{E-1}\widehat{\bm{u}}_{k,\tau}^{(i)}\rangle]
    +2\mathbb{E}[\langle \frac{1}{r}\sum_{i \in \mathcal{S}_k} \bm{e}^{(i)}_k, \frac{1}{r}\sum_{i \in \mathcal{S}_k} (\widetilde\nabla {f}_i(\bm{w}_k + \bm{\nu}_k^{(i)}; \mathcal{B}^{(i)}_{k, 0}) - \widetilde\nabla {f}_i(\bm{w}_k ; \mathcal{B}^{(i)}_{k, 0})) \rangle]
    \\+2\mathbb{E}[\langle \frac{1}{r}\sum_{i \in \mathcal{S}_k} \sum_{\tau=0}^{E-1}\widehat{\bm{u}}_{k,\tau}^{(i)}, \frac{1}{r}\sum_{i \in \mathcal{S}_k} (\widetilde\nabla {f}_i(\bm{w}_k + \bm{\nu}_k^{(i)}; \mathcal{B}^{(i)}_{k, 0}) - \widetilde\nabla {f}_i(\bm{w}_k ; \mathcal{B}^{(i)}_{k, 0})\rangle]\Big)
\end{multline}
Using the fact that $\mathbb{E}[\bm{e}^{i}_k] = 0$ and Young's Inequality in \cref{eq:feb28-110}, we get:
\begin{multline}
    B \leq  \frac{L}{2}\eta^{2}_k \Big(\underbrace{\mathbb{E}[\|\frac{1}{r}\sum_{i \in \mathcal{S}_k} \bm{e}^{(i)}_k\|^2]}_{B_1} 
     + \underbrace{2\mathbb{E}[\|\frac{1}{r}\sum_{i \in \mathcal{S}_k} \sum_{\tau=0}^{E-1}\widehat{\bm{u}}_{k,\tau}^{(i)}\|^2]}_{B_2} + \underbrace{2\mathbb{E}[\|\frac{1}{r}\sum_{i \in \mathcal{S}_k} \widetilde\nabla {f}_i(\bm{w}_k + \bm{\nu}_k^{(i)}; \mathcal{B}^{(i)}_{k, 0}) - \widetilde\nabla {f}_i(\bm{w}_k ; \mathcal{B}^{(i)}_{k, 0})\|^2]}_{B_3} \Big)
\end{multline}
Starting with$B_1:$
\begin{flalign}
    \label{eq:feb28-111}
    B_1 & = \frac{n(r-1)}{r(n-1)} \mathbb{E}[\|\frac{1}{n}\sum_{i \in [n]} \bm{e}^{(i)}_k\|^2]
    + \frac{(n-r)}{r(n-1)}\frac{1}{n}\sum_{i \in [n]}\mathbb{E}[\|\bm{e}^{(i)}_k\|^2]
    \\
    \label{eq:feb28-112}
    & = \frac{(r-1)}{n r(n-1)}\sum_{i \in [n]}\bm{U}^2_{k, i} + \frac{(n-r)}{n r(n-1)}\sum_{i \in [n]}\bm{U}^2_{k, i}
    \\
    \label{eq:feb28-113}
    & = \frac{1}{n r}\sum_{i \in [n]}\bm{U}^2_{k, i}
\end{flalign}
Here, \cref{eq:feb28-111} follows due to expectation w.r.t $\mathcal{S}_k$ and \cref{eq:feb28-112} follows due to expectation w.r.t uplink noise and its independence.
Now let's focus on $B_2:$
\begin{align}
    \label{eq:feb28-114}
    B_2 \leq \frac{2n(r-1)E}{r(n-1)} \Big(\sum_{\tau=0}^{E-1}\mathbb{E}[\|{\bm{u}}_{k,\tau}\|^2] + \frac{\sigma^2}{n}\Big) + \frac{2(n-r)E}{r(n-1)} \Big(\frac{1}{n}\sum_{i \in [n]}\sum_{\tau=0}^{E-1}\mathbb{E}[\|{\nabla{f}_i(\bm{w}^{i}_{k,\tau})}\|^2] + \sigma^2\Big)
\end{align}
The \cref{eq:feb28-114} follows due to expectation w.r.t $\mathcal{S}_k$, \cref{eq:feb28-2}, \cref{eq:feb28-3} and \cref{eq:feb28-3-1}. Again, let's use $B_3:$
\begin{flalign}
    \label{eq:feb28-116}
   B_3
   &\leq 2\mathbb{E}[\frac{1}{r}\sum_{i \in \mathcal{S}_k}\| \widetilde\nabla {f}_i(\bm{w}_k + \bm{\nu}_k^{(i)}; \mathcal{B}^{(i)}_{k, 0}) - \widetilde\nabla {f}_i(\bm{w}_k ; \mathcal{B}^{(i)}_{k, 0})\|^2]
   \\
   \label{eq:feb28-117}
   &\leq 2L^2\mathbb{E}[\frac{1}{r}\sum_{i \in \mathcal{S}_k}\|\bm{\nu}_k^{(i)}\|^2]
   \\
   \label{eq:feb28-118}
   &\leq 2L^2\frac{1}{n}\sum_{i \in [n]}\bm{N}^{2}_{k,i}
\end{flalign}
Here we used Jensen's inequality to reach \cref{eq:feb28-116}. Again, \cref{eq:feb28-117} follows due to the $L$-smoothness of $f_i$ and \cref{eq:feb28-118} follows due to expectation w.r.t $\mathcal{S}_k$ and \cref{eq:feb28-3-2}. So, finally by combining $B_1$, $B_2$ and $B_3$, (B) becomes:
\begin{multline}
    \label{eq:feb28-119}
    B \leq \frac{L\eta^2_k}{2}\Big(\frac{1}{nr}\sum_{i \in [n]}\bm{U}^2_{k, i}
    + \frac{2n(r-1)E}{r(n-1)} \big(\sum_{\tau=0}^{E-1}\mathbb{E}[\|{\bm{u}}_{k,\tau}\|^2] + \frac{\sigma^2}{n}\big) + \frac{2(n-r)E}{r(n-1)} \big(\frac{1}{n}\sum_{i \in [n]}\sum_{\tau=0}^{E-1}\mathbb{E}[\|{\nabla{f}_i(\bm{w}^{i}_{k,\tau})}\|^2] + \sigma^2\big) 
    \\+ 2L^2\frac{1}{n}\sum_{i \in [n]}\bm{N}^{2}_{k,i}\Big)
    \\
    \leq
    \frac{L\eta^2_k}{2nr}\sum_{i \in [n]}\bm{U}^2_{k, i} 
    + \frac{L^3\eta^2_k}{n}\sum_{i \in [n]}\bm{N}^{2}_{k, i}
    + \frac{\eta^{2}_kLE}{r}\sigma^2 + \frac{n(r-1)}{r(n-1)}\eta^{2}_kLE \Big(\sum_{\tau=0}^{E-1}\mathbb{E}[\|{\bm{u}}_{k,\tau}\|^2]\Big) 
    \\+ \frac{(n-r)}{r(n-1)}\eta^{2}_kLE \Big(\frac{1}{n}\sum_{i \in [n]}\sum_{\tau=0}^{E-1}\mathbb{E}[\|{\nabla{f}_i(\bm{w}^{i}_{k,\tau})}\|^2]\Big)
\end{multline}
Now, by putting \cref{eq:feb28-109} and \cref{eq:feb28-119} in \cref{eq:feb28-4-0-1} we will get:
\begin{multline}
    \label{eq:feb28-120}
    \mathbb{E}[f(\bm{w}_{k+1})] 
    \leq \mathbb{E}[f(\bm{w}_k)] - \frac{\eta_k (E-1)}{2} \mathbb{E}[\|\nabla f(\bm{w}_k)\|^2]
    -\frac{\eta_k}{2}\Big(1 -\eta_k L E \frac{n(r-1)}{r(n-1)} \Big)
    \sum_{\tau=0}^{E-1} \mathbb{E}[\|{\bm{u}}_{k,\tau}\|^2] + \frac{\eta^2_k L}{2nr}\sum_{i \in [n]}\bm{U}^2_{k,i}
    \\ - \frac{\eta_k}{2}\mathbb{E}[\|\frac{1}{n}\sum_{i \in [n]}\nabla{f}_{i}(\bm{w}_k + \bm{\nu}_k^{(i)})\|^2] + \frac{\eta^{2}_kLE}{r}\sigma^2
    + \frac{\eta_k L^2}{2}\Big(1 + 2\eta_kL\Big)\frac{1}{n}\sum_{i \in [n]}\bm{N}^2_{k,i}
    + \underbrace{\frac{\eta_k}{2} \sum_{\tau=0}^{E-1} \mathbb{E}[\|\nabla f(\bm{w}_k) - \bm{u}_{k,\tau}\|^2]}_{(M)}
    \\+ \underbrace{\eta_k^2 L E \frac{(n-r)}{r(n-1)}\Big(\frac{1}{n}\sum_{i \in [n]}\sum_{\tau=0}^{E-1} \mathbb{E}[\|\nabla {f}_i(\bm{w}^{(i)}_{k,\tau})\|^2]\Big)}_{(N)}
\end{multline}
We upper bound (M) and (N) using \Cref{sept26-lem2} and \Cref{sept26-lem1}, respectively. Plugging in these bounds and dropping the last term of \cref{eq:feb28-120}, we get:
\begin{multline}
    \label{eq:sep26-15}
    \mathbb{E}[f(\bm{w}_{k+1})] 
    \leq \mathbb{E}[f(\bm{w}_k)] - \frac{\eta_k (E-1)}{2} \mathbb{E}[\|\nabla f(\bm{w}_k)\|^2] 
    - \frac{\eta_k}{2}
    \underbrace{\Big(1 - \eta_k L E \frac{n(r-1)}{r(n-1)} - 2\eta_k^2 L^2 E^2 \Big)}_\text{(C)}
    \sum_{\tau=0}^{E-1} \mathbb{E}[\|{\bm{u}}_{k,\tau}\|^2]
    \\
    + 4\eta_k^2 L E^2 \Big(\frac{(n-r)}{r(n-1)} + \frac{2}{3}\eta_k L E\Big)\Big(\frac{1}{n}\sum_{i \in [n]} \mathbb{E}[\|\nabla {f}_i(\bm{w}_{k})\|^2]\Big) 
    + \eta_k^2 L E \Big(\frac{\eta_k L E}{n}\Big(1 + \frac{2nE}{3} + n\Big) + \frac{1}{r} + \frac{(n-r)}{r(n-1)}\Big)\sigma^2
    \\
    + \frac{\eta^2_kL}{2r}\frac{1}{n}\sum_{i \in [n]}\bm{U}^2_{k,i}
    + \frac{\eta_kL^2}{2}\Big(1 + 2\eta_kL + 4E\{1+3\eta^2_kL^2 + 2\eta_kLE(2+3\eta^2_kL^2)(\frac{2}{3}\eta_kLE +\frac{(n-r)}{r(n-1)})\}\Big)\frac{1}{n}\sum_{i \in [n]}\bm{N}^2_{k,i}.
\end{multline}
for $\eta_k L E \leq \frac{1}{2}$. Note that $\text{(C)} \geq 0$ for $\eta_k L E \leq \frac{1}{2}$. Thus, for $\eta_k L E \leq \frac{1}{2}$, we have:
\begin{multline}
    \label{eq:sep26-16}
    \mathbb{E}[f(\bm{w}_{k+1})] 
    \leq \mathbb{E}[f(\bm{w}_k)] - \frac{\eta_k (E-1)}{2} \mathbb{E}[\|\nabla f(\bm{w}_k)\|^2]+ 4\eta_k^2 L E^2 \Big(\frac{(n-r)}{r(n-1)} + \frac{2}{3}\eta_k L E\Big)\Big(\frac{1}{n}\sum_{i \in [n]} \mathbb{E}[\|\nabla {f}_i(\bm{w}_{k})\|^2]\Big) 
    \\
    + \eta_k^2 L E \Big(\frac{\eta_k L E}{n}\Big(1 + \frac{2nE}{3} + n\Big) + \frac{1}{r} + \frac{(n-r)}{r(n-1)}\Big)\sigma^2
    + \frac{\eta^2_kL}{2r}\frac{1}{n}\sum_{i \in [n]}\bm{U}^2_{k,i}
    + \frac{\eta_kL^2}{2}\Big(1 + 2\eta_kL + 4E\{1+3\eta^2_kL^2
    \\
    + 2\eta_kLE(2+3\eta^2_kL^2)(\frac{2}{3}\eta_kLE +\frac{(n-r)}{r(n-1)})\}\Big)\frac{1}{n}\sum_{i \in [n]}\bm{N}^2_{k,i}.
\end{multline}
\end{proof}
\begin{lemma}
\label{sept26-lem2}
For $\eta_k L E \leq \frac{1}{2}$:
\begin{multline*}
    \sum_{\tau=0}^{E-1}\mathbb{E}[\|\nabla f(\bm{w}_k) - \bm{u}_{k,\tau}\|^2] \leq2\eta_k^2L^2E^2\sum_{\tau=0}^{E-1}\mathbb{E}[\|\bm{u}_{k,\tau}\|^2]
    + \frac{16}{3}\eta_k^2L^2E^3\frac{1}{n}\sum_{i \in [n]}\mathbb{E}[\|\nabla {f}_i(\bm{w}_{k})\|^2] 
    + 2\eta_k^2L^2E^2(\frac{1}{n}+\frac{2E}{3} +1)\sigma^2
    \\+ 4L^2E(1+3\eta^2_kL^2 + \frac{4}{3}\eta^2_kL^2E^2(2+3\eta^2_kL^2))\frac{1}{n}\sum_{i \in [n]}\bm{N}^2_{k,i}.
\end{multline*}
\end{lemma}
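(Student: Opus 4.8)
The plan is to reduce the statement to a bound on the accumulated client drift $\sum_{\tau=0}^{E-1}\frac1n\sum_{i\in[n]}\E[\|\bm{w}^{(i)}_{k,\tau}-\bm{w}_k\|^2]$ and then to close a self-referential inequality for it using only $L$-smoothness, the bounded-variance hypothesis, and the zero-mean/independence of the downlink noise (\Cref{as-july27}), never invoking bounded client dissimilarity. Since $\nabla f(\bm{w}_k)-\bm{u}_{k,\tau}=\tfrac1n\sum_i\bigl(\nabla f_i(\bm{w}_k)-\nabla f_i(\bm{w}^{(i)}_{k,\tau})\bigr)$, Jensen's inequality and the per-client $L$-smoothness of \Cref{as1} give
\[
\E[\|\nabla f(\bm{w}_k)-\bm{u}_{k,\tau}\|^2]\;\le\;\frac{L^2}{n}\sum_{i\in[n]}\E\bigl[\|\bm{w}^{(i)}_{k,\tau}-\bm{w}_k\|^2\bigr],
\]
so after summing over $\tau$ it suffices to bound the accumulated drift; the final multiplication by $L^2$ produces the global $L^2$ prefactor in every term of the claim.

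\noindent I would then expand $\bm{w}^{(i)}_{k,\tau}-\bm{w}_k$ through the local-update identity \cref{eq:feb28-102} and split it into three parts: (i) the downlink offset $\bm{\nu}^{(i)}_k$ together with the first-step noise-correction term $\widetilde\nabla f_i(\bm{w}_k+\bm{\nu}^{(i)}_k;\mathcal B^{(i)}_{k,0})-\widetilde\nabla f_i(\bm{w}_k;\mathcal B^{(i)}_{k,0})$, whose norm is at most $L\|\bm{\nu}^{(i)}_k\|$ because each mini-batch gradient $\widetilde\nabla f_i(\cdot;\mathcal B)$ inherits $L$-smoothness from \Cref{as1}; (ii) the consensus part $-\eta_k\sum_{t<\tau}(\widehat{\bm{u}}^{(i)}_{k,t}-\widehat{\bm{u}}_{k,t})$; and (iii) the average-progress part $-\eta_k\sum_{t<\tau}\widehat{\bm{u}}_{k,t}$. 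Since $\bm{\nu}^{(i)}_k$ is \emph{not} independent of the later local gradients, I would avoid cross terms by $\|a+b+c\|^2\le3(\|a\|^2+\|b\|^2+\|c\|^2)$. Part (i) contributes $\le3(1+\eta_k^2L^2)\bm{N}^2_{k,i}$ in expectation by \cref{eq:feb28-3-2}. For part (iii), \cref{eq:feb28-3} yields $\E[\|\sum_{t<\tau}\widehat{\bm{u}}_{k,t}\|^2]\le\tau\sum_{t<\tau}\E[\|\bm{u}_{k,t}\|^2]+\tau\sigma^2/n$ -- exactly the source of the $\sum_\tau\E[\|\bm{u}_{k,\tau}\|^2]$ term and of the $\sigma^2/n$ part of the $\sigma^2$-coefficient. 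For part (ii) I would use $\|\sum_{t<\tau}(\widehat{\bm{u}}^{(i)}_{k,t}-\widehat{\bm{u}}_{k,t})\|^2\le\tau\sum_{t<\tau}\|\widehat{\bm{u}}^{(i)}_{k,t}-\widehat{\bm{u}}_{k,t}\|^2$, average over $i$, apply $\frac1n\sum_i\|\widehat{\bm{u}}^{(i)}_{k,t}-\widehat{\bm{u}}_{k,t}\|^2\le\frac1n\sum_i\|\widehat{\bm{u}}^{(i)}_{k,t}\|^2$, bound $\E[\|\widehat{\bm{u}}^{(i)}_{k,t}\|^2]\le\sigma^2+\E[\|\nabla f_i(\bm{w}^{(i)}_{k,t})\|^2]$ via \Cref{as-sfo}, and finally peel $\E[\|\nabla f_i(\bm{w}^{(i)}_{k,t})\|^2]\le2\E[\|\nabla f_i(\bm{w}_k)\|^2]+2L^2\E[\|\bm{w}^{(i)}_{k,t}-\bm{w}_k\|^2]$ by $L$-smoothness. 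This peeling is what produces the $\E[\|\nabla f_i(\bm{w}_k)\|^2]$ term and, crucially, makes the drift at step $\tau$ refer back to the drift at all earlier steps: the inequality becomes recursive.

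\noindent The main obstacle is closing this recursion with the exact constants. Summing the per-client bound over $\tau=0,\dots,E-1$ and using $\tau\le E$ together with $\sum_{\tau}\sum_{t<\tau}(\cdot)\le E\sum_{t}(\cdot)$ leaves $\sum_\tau\frac1n\sum_i\E[\|\bm{w}^{(i)}_{k,\tau}-\bm{w}_k\|^2]$ on both sides, the right-hand copy carrying a prefactor of order $(\eta_kLE)^2$; the hypothesis $\eta_kLE\le\tfrac12$ forces it below $1$, so it can be moved to the left and the inequality solved, the factor $(1-O((\eta_kLE)^2))^{-1}$ being absorbed into the stated numerical constants (this is where $\tfrac{16}{3}$, $\tfrac1n+\tfrac{2E}{3}+1$, and $1+3\eta_k^2L^2+\tfrac43\eta_k^2L^2E^2(2+3\eta_k^2L^2)$ come from). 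A secondary care point is tracking the $\eta_k^2$-order contributions to $\bm{N}^2_{k,i}$ -- from part (i) directly and from the $t=0$ gradient entering part (ii) at the point $\bm{w}_k+\bm{\nu}^{(i)}_k$ -- through the recursion, so that the $\bm{N}^2_{k,i}$ prefactor comes out exactly as claimed. Collecting all pieces and multiplying by $L^2$, the reduction of the first paragraph yields the stated inequality.
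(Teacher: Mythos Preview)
Your route is sound at the structural level but differs from the paper's in two substantive ways, and this difference matters for the constants. First, the paper does \emph{not} reduce immediately to $\tfrac{1}{n}\sum_i\E[\|\bm{w}^{(i)}_{k,\tau}-\bm{w}_k\|^2]$. Instead it inserts the averaged iterate $\overline{\bm{w}}_{k,\tau}=\tfrac1n\sum_i\bm{w}^{(i)}_{k,\tau}$ and splits
\[
\E[\|\nabla f(\bm{w}_k)-\bm{u}_{k,\tau}\|^2]\le 2L^2\E[\|\bm{w}_k-\overline{\bm{w}}_{k,\tau}\|^2]\;+\;2\E\Big[\Big\|\tfrac1n\sum_i(\nabla f_i(\overline{\bm{w}}_{k,\tau})-\nabla f_i(\bm{w}^{(i)}_{k,\tau}))\Big\|^2\Big]=:M_1+M_2.
\]
In $M_1$ the downlink noise enters only through the \emph{average} $\tfrac1n\sum_i\bm{\nu}^{(i)}_k$ (variance $\tfrac1{n^2}\sum_i\bm{N}^2_{k,i}$) and the $\sum_{t<\tau}\widehat{\bm{u}}_{k,t}$ part is handled via \cref{eq:feb28-3}; this is where the $\sum_\tau\E[\|\bm{u}_{k,\tau}\|^2]$ term (coefficient $2\eta_k^2L^2E^2$) and the $\sigma^2/n$ piece come from. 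In $M_2$ the paper bounds $\tfrac1n\sum_i\E[\|\bm{w}^{(i)}_{k,\tau}-\overline{\bm{w}}_{k,\tau}\|^2]$ and, crucially, invokes \Cref{sept26-lem1} as a black box to control $\sum_{t<\tau}\E[\|\nabla f_i(\bm{w}^{(i)}_{k,t})\|^2]$; that lemma has already closed its own recursion. So in the paper there is \emph{no} self-referential inequality at the level of \Cref{sept26-lem2} itself---the recursion lives entirely inside \Cref{sept26-lem1}.

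The practical consequence is that your constants will not match. Your three-way split $\|a+b+c\|^2\le 3(\|a\|^2+\|b\|^2+\|c\|^2)$ puts a factor~$3$ on every piece, and your drift recursion then inflates everything by $(1-c)^{-1}$ with $c$ of order $6(\eta_kLE)^2$; at $\eta_kLE=\tfrac12$ this is a further factor of up to~$4$. For instance, the leading $\bm{N}^2_{k,i}$ coefficient in the stated bound is $4L^2E$, whereas your part~(i) already gives (after the factor~$3$, the factor~$L^2$, the sum over~$\tau$, and the recursion blow-up) something several times larger. Likewise the $\sum_\tau\E[\|\bm{u}_{k,\tau}\|^2]$ coefficient, which in the paper is \emph{not} touched by any recursion, would be multiplied by your closure factor. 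So your argument proves an inequality of the same shape---enough for a theorem of the same order---but not the lemma as stated with those precise constants. To recover them you would need the $\overline{\bm{w}}_{k,\tau}$ decomposition (which separates the variance-reduced average from the per-client deviation) and the pre-packaged bound of \Cref{sept26-lem1}.
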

\begin{proof}
We have:
\begin{align}
    \nonumber
    \mathbb{E}[\|\nabla f(\bm{w}_k) - \bm{u}_{k,\tau}\|^2] &= \mathbb{E}[\|\nabla f(\bm{w}_k) - \nabla f(\overline{\bm{w}}_{k,\tau}) + \nabla f(\overline{\bm{w}}_{k,\tau}) - \bm{u}_{k,\tau}\|^2]
    \\
    \nonumber
    &\leq 2\mathbb{E}[\|\nabla f(\bm{w}_k) - \nabla f(\overline{\bm{w}}_{k,\tau})\|^2] + 2\mathbb{E}[\|\nabla f(\overline{\bm{w}}_{k,\tau}) - \bm{u}_{k,\tau}\|^2]
    \\
    \label{eq:sep26-1}
    &\leq \underbrace{2L^2\mathbb{E}[\|\bm{w}_k - \overline{\bm{w}}_{k,\tau}\|^2]}_{M_1} + \underbrace{ 2\mathbb{E}\Big[\Big\|\frac{1}{n}\sum_{i \in [n]} (\nabla f_i(\overline{\bm{w}}_{k,\tau})- \nabla f_i(\bm{w}^{(i)}_{k, \tau}))\Big\|^2\Big]}_{M_2}
\end{align}
Using $M_1:$
\begin{multline}
    \label{eq:sep26-2}
    2L^2\mathbb{E}[\|\bm{w}_k - \overline{\bm{w}}_{k,\tau}\|^2] = 2L^2\mathbb{E}\Big[\Big\|\eta_k\sum_{t=0}^{\tau-1}\widehat{\bm{u}}_{k,t} + \frac{\eta_k}{n}\sum_{i \in [n]}\{ \widetilde\nabla {f}_i(\bm{w}_k + \bm{\nu}_k^{(i)}; \mathcal{B}^{(i)}_{k, 0}) - \widetilde\nabla {f}_i(\bm{w}_k ; \mathcal{B}^{(i)}_{k, 0})\} - \frac{1}{n}\sum_{i \in [n]}\bm{\nu}_k^{(i)}\Big\|^2\Big]
    \\
    = 2L^2\Big(\mathbb{E}[\|\eta_k\sum_{t=0}^{\tau-1}\widehat{\bm{u}}_{k,t}\|^2] + \mathbb{E}[\|\frac{1}{n}\sum_{i \in [n]}\bm{\nu}_k^{(i)}\|^2]
    + \mathbb{E}[\|\frac{\eta_k}{n}\sum_{i \in [n]}\{ \widetilde\nabla {f}_i(\bm{w}_k + \bm{\nu}_k^{(i)}; \mathcal{B}^{(i)}_{k, 0}) - \widetilde\nabla {f}_i(\bm{w}_k ; \mathcal{B}^{(i)}_{k, 0})\}\|^2]
    \\+2\mathbb{E}[\langle\eta_k\sum_{t=0}^{\tau-1}\widehat{\bm{u}}_{k,t}, \frac{\eta_k}{n}\sum_{i \in [n]}\{ \widetilde\nabla {f}_i(\bm{w}_k + \bm{\nu}_k^{(i)}; \mathcal{B}^{(i)}_{k, 0}) - \widetilde\nabla {f}_i(\bm{w}_k ; \mathcal{B}^{(i)}_{k, 0})\} \rangle]
    \\+2\mathbb{E}[\langle\frac{\eta_k}{n}\sum_{i \in [n]}\{ \widetilde\nabla {f}_i(\bm{w}_k + \bm{\nu}_k^{(i)}; \mathcal{B}^{(i)}_{k, 0}) - \widetilde\nabla {f}_i(\bm{w}_k ; \mathcal{B}^{(i)}_{k, 0})\}, - \frac{1}{n}\sum_{i \in [n]}\bm{\nu}_k^{(i)} \rangle]
    +2\mathbb{E}[\langle - \frac{1}{n}\sum_{i \in [n]}\bm{\nu}_k^{(i)},\eta_k\sum_{t=0}^{\tau-1}\widehat{\bm{u}}_{k,t}\rangle]
    \Big)
\end{multline}
Simplifying \cref{eq:sep26-2} using the fact that $\mathbb{E}[\|\bm{n}^{(i)}_{k}\|]=0$ and Young's Inequality we will get:
\begin{multline*}
\nonumber
    2L^2\mathbb{E}[\|\bm{w}_k - \overline{\bm{w}}_{k,\tau}\|^2] \leq 2L^2\Big(2\mathbb{E}[\|\eta_k\sum_{t=0}^{\tau-1}\widehat{\bm{u}}_{k,t}\|^2] 
    + 3\mathbb{E}[\|\frac{\eta_k}{n}\sum_{i \in [n]}\{ \widetilde\nabla {f}_i(\bm{w}_k + \bm{\nu}_k^{(i)}; \mathcal{B}^{(i)}_{k, 0}) - \widetilde\nabla {f}_i(\bm{w}_k ; \mathcal{B}^{(i)}_{k, 0})\}\|^2] \\
    + 2\mathbb{E}[\|\frac{1}{n}\sum_{i \in [n]}\bm{\nu}_k^{(i)}\|^2]\Big)
\end{multline*}
Using Jensen's Inequality in the equation above, we get
\begin{multline}
    2L^2\mathbb{E}[\|\bm{w}_k - \overline{\bm{w}}_{k,\tau}\|^2] \leq 2L^2\Big(2\eta_k^2\mathbb{E}[\|\sum_{t=0}^{\tau-1}\widehat{\bm{u}}_{k,t}\|^2] 
    + 3\eta_k^2\frac{1}{n}\sum_{i \in [n]}\mathbb{E}[\|\widetilde\nabla {f}_i(\bm{w}_k + \bm{\nu}_k^{(i)}; \mathcal{B}^{(i)}_{k, 0}) - \widetilde\nabla {f}_i(\bm{w}_k ; \mathcal{B}^{(i)}_{k, 0})\|^2]\\
    \label{eq:sep26-4}
    + \frac{2}{n^2}\mathbb{E}[\|\sum_{i \in [n]}\bm{\nu}_k^{(i)}\|^2]\Big)
\end{multline}
In \cref{eq:sep26-4} using the $L$-smoothness of $f_i$, \cref{eq:feb28-3}, \cref{eq:feb28-3-2} and independence of noise, we get
\begin{align*}
    2L^2\mathbb{E}[\|\bm{w}_k - \overline{\bm{w}}_{k,\tau}\|^2] \leq 2L^2\Big(2\eta_k^2\tau(\sum_{t=0}^{\tau-1}\mathbb{E}[\|\bm{u}_{k,t}\|^2] + \frac{\sigma^2}{n}) + 3\eta_k^2L^2\frac{1}{n}\sum_{i \in [n]}\bm{N}_{k,i}^2 + \frac{2}{n^2}\sum_{i \in [n]}\bm{N}_{k,i}^2\Big)
\end{align*}
Now let's use $M_2$:
\begin{align}
    \label{eq:sep26-5}
    2\mathbb{E}[\|\frac{1}{n}\sum_{i \in [n]} (\nabla
    f_i(\overline{\bm{w}}_{k,\tau})- \nabla f_i(\bm{w}^{(i)}_{k, \tau}))\|^2]
     \leq
    2L^2\frac{1}{n}\sum_{i \in [n]}\mathbb{E}[\|\bm{w}^{(i)}_{k,
    \tau}-\overline{\bm{w}}_{k,\tau}\|^2]
\end{align}
Here \cref{eq:sep26-5} follows from Jensen's Inequality. Now using the definition of $\overline{\bm{w}}_{k,\tau}$ and the $L$-smoothness of $f_i$ in conjunction with the same simplification process as used to simplify \cref{eq:sep26-2}, we get: 
\begin{multline}
    \label{eq:sep26-7}
    2\mathbb{E}[\|\frac{1}{n}\sum_{i \in [n]} (\nabla
    f_i(\overline{\bm{w}}_{k,\tau})- \nabla f_i(\bm{w}^{(i)}_{k, \tau}))\|^2]
     \leq
    2L^2\frac{1}{n}\sum_{i \in [n]}\Big(\underbrace{2\mathbb{E}[\|(\bm{n}_{k}^{(i)} - \frac{1}{n}\sum_{i \in [n]} \bm{n}_{k}^{(i)})\|^2]}_{(X)}
    +
    \underbrace{2\mathbb{E}[\|\eta_k(\sum_{t=0}^{\tau-1} \widehat{\bm{u}}_{k,t} - \sum_{t=0}^{\tau-1} \widehat{\bm{u}}_{k,t}^{(i)})\|^2]}_{(Y)}
    \\
    \underbrace{+3\mathbb{E}[\|\eta_k(\{\frac{1}{n}\sum_{i \in [n]}(\widetilde\nabla {f}_i(\bm{w}_k + \bm{\nu}_k^{(i)}; \mathcal{B}^{(i)}_{k, 0}) - \widetilde\nabla {f}_i(\bm{w}_k ; \mathcal{B}^{(i)}_{k, 0}))\} - \{\widetilde\nabla {f}_i(\bm{w}_k + \bm{\nu}_k^{(i)}; \mathcal{B}^{(i)}_{k, 0}) 
    - \widetilde\nabla {f}_i(\bm{w}_k ; \mathcal{B}^{(i)}_{k, 0})\})\|^2]\Big)}_{(Z)}
\end{multline}
Using (X):
\begin{gather}
    \nonumber
    2\mathbb{E}[\|(\bm{n}_{k}^{(i)} - \frac{1}{n}\sum_{i \in [n]} \bm{n}_{k}^{(i)})\|^2] = 2\mathbb{E}[\|\bm{n}_{k}^{(i)}\|^2] 
    + 2\mathbb{E}[\|\frac{1}{n}\sum_{i \in [n]}\bm{n}_{k}^{(i)}\|^2] 
    - 4\mathbb{E}[\langle \bm{n}_{k}^{(i)}, \frac{1}{n}\sum_{i \in [n]} \bm{n}_{k}^{(i)} \rangle]
    \\
    \label{eq:sep26-8}
    = 2 \bm{N}^2_{k,i} + \frac{2}{n^2}\sum_{i \in [n]}\bm{N}^2_{k,i} - \frac{4}{n}\bm{N}^2_{k,i}
    \\
    \label{eq:sep26-9}
    = 2(1- \frac{2}{n})\bm{N}^2_{k,i} + \frac{2}{n^2}\sum_{i \in [n]}\bm{N}^2_{k,i}
\end{gather}
\Cref{eq:sep26-8} follows due to \cref{eq:feb28-3-2} and independence of noise. So, now moving on to (Y): 
\begin{flalign}
    \nonumber
    2\mathbb{E}[\|\eta_k(\sum_{t=0}^{\tau-1} \widehat{\bm{u}}_{k,t} - \sum_{t=0}^{\tau-1} \widehat{\bm{u}}_{k,t}^{(i)})\|^2]
    & = 2\eta_k^2 \mathbb{E}[\| \sum_{t=0}^{\tau-1} \widehat{\bm{u}}_{k,t} - \sum_{t=0}^{\tau-1} \widehat{\bm{u}}_{k,t}^{(i)}\|^2] 
    \\
    \label{eq:sept26-10}
    & \leq 2\eta_k^2 \tau \sum_{t=0}^{\tau-1} \mathbb{E}[\|\widehat{\bm{u}}_{k,t} - \widehat{\bm{u}}_{k,t}^{(i)}\|^2]
    \\
    \label{eq:sept26-11}
    & = 2\eta_k^2 \tau \sum_{t=0}^{\tau-1} \mathbb{E}[\|\widehat{\bm{u}}_{k,t}\|^2 + \|\widehat{\bm{u}}_{k,t}^{(i)}\|^2 - 2\langle \widehat{\bm{u}}_{k,t}, \widehat{\bm{u}}_{k,t}^{(i)} \rangle]
\end{flalign}
\Cref{eq:sept26-10} follows because of Jensen's Inequality and using the fact that $\widehat{\bm{u}}_{k,\tau} = \frac{1}{n}\sum_{i \in [n]} \widehat{\bm{u}}_{k,\tau}^{(i)}$, we can simplify \cref{eq:sept26-11} to:
\begin{flalign}
    \nonumber
    2\mathbb{E}[\|\eta_k(\sum_{t=0}^{\tau-1} \widehat{\bm{u}}_{k,t} - \sum_{t=0}^{\tau-1} \widehat{\bm{u}}_{k,t}^{(i)})\|^2] 
    & \leq 2\eta_k^2 \tau \sum_{t=0}^{\tau-1} (\mathbb{E}[\|\widehat{\bm{u}}_{k,\tau}^{(i)}\|^2] - \mathbb{E}[\|\widehat{\bm{u}}_{k,t}\|^2])
    \\
    & \leq 2\eta_k^2 \tau \sum_{t=0}^{\tau-1} \mathbb{E}[\|{\widehat{\bm{u}}_{k,\tau}^{(i)}}\|^2]
    \\
    \label{eq:sept26-12}
    & \leq 2\eta_k^2 \tau \sum_{t=0}^{\tau-1}(\mathbb{E}[\|\nabla {f}_i(\bm{w}^{(i)}_{k, t})\|^2] + \sigma^2)
\end{flalign}
Now using \Cref{sept26-lem1} for $\bm{n}_kLE\leq \frac{1}{2}$ in \cref{eq:sept26-12}, we get:
\begin{flalign}
    \label{eq:sept26-12-1}
    2\mathbb{E}[\|\eta_k(\sum_{t=0}^{\tau-1} \widehat{\bm{u}}_{k,t} - \sum_{t=0}^{\tau-1} \widehat{\bm{u}}_{k,t}^{(i)})\|^2]
    \leq 2\eta_k^2\tau^2(4\mathbb{E}[\|\nabla{f}_i(\bm{w}_k)\|^2]  
    + 4L^2(2+3\eta_k^2L^2)\bm{N}^2_{k,i}) + 2\eta^2_k(\tau^2+\tau)\sigma^2
\end{flalign}
Next, using (Z):
\begin{multline}
    \label{eq:sept26-13}
    Z = 3\eta_k^2\mathbb{E}\Big[\|\frac{1}{n}\sum_{i \in [n]}(\widetilde\nabla {f}_i(\bm{w}_k + \bm{\nu}_k^{(i)}; \mathcal{B}^{(i)}_{k, 0}) - \widetilde\nabla {f}_i(\bm{w}_k ; \mathcal{B}^{(i)}_{k, 0}))\|^2 
    + \|(\widetilde\nabla {f}_i(\bm{w}_k + \bm{\nu}_k^{(i)}; \mathcal{B}^{(i)}_{k, 0}) - \widetilde\nabla {f}_i(\bm{w}_k ; \mathcal{B}^{(i)}_{k, 0}))\|^2 
    \\
    - 2\big\langle\frac{1}{n}\sum_{i \in [n]}(\widetilde\nabla {f}_i(\bm{w}_k + \bm{\nu}_k^{(i)}; \mathcal{B}^{(i)}_{k, 0}) - \widetilde\nabla {f}_i(\bm{w}_k ; \mathcal{B}^{(i)}_{k, 0})), (\widetilde\nabla {f}_i(\bm{w}_k + \bm{\nu}_k^{(i)}; \mathcal{B}^{(i)}_{k, 0}) - \widetilde\nabla {f}_i(\bm{w}_k ; \mathcal{B}^{(i)}_{k, 0}))\big\rangle\Big]
\end{multline}
We simplify \cref{eq:sept26-13} using the similar fact that we used to simplify \cref{eq:sept26-11}. Subsequently, by using the $L$-smoothness of $f_i$, we get
\begin{align}
    \nonumber
    Z &= 3\eta_k^2(\mathbb{E}[\|(\widetilde\nabla {f}_i(\bm{w}_k + \bm{\nu}_k^{(i)}; \mathcal{B}^{(i)}_{k, 0}) - \widetilde\nabla {f}_i(\bm{w}_k ; \mathcal{B}^{(i)}_{k, 0}))\|^2] 
    - \mathbb{E}[\|\frac{1}{n}\sum_{i \in [n]}(\widetilde\nabla {f}_i(\bm{w}_k + \bm{\nu}_k^{(i)}; \mathcal{B}^{(i)}_{k, 0}) - \widetilde\nabla {f}_i(\bm{w}_k ; \mathcal{B}^{(i)}_{k, 0}))\|^2])
    \\
    \nonumber
    &\leq
    3\eta_k^2\mathbb{E}[\|(\widetilde\nabla {f}_i(\bm{w}_k + \bm{\nu}_k^{(i)}; \mathcal{B}^{(i)}_{k, 0}) - \widetilde\nabla {f}_i(\bm{w}_k ; \mathcal{B}^{(i)}_{k, 0}))\|^2]
    \\
    \label{eq:sept26-14}
    &\leq
    3\eta_k^2L^2\mathbb{E}[\|\bm{\nu}_k^{(i)}\|^2]
    \leq
    3\eta_k^2L^2\bm{N}^2_{k,i}
\end{align}
Now putting the results of \cref{eq:sep26-9}, \cref{eq:sept26-12-1} and \cref{eq:sept26-14} in \cref{eq:sep26-7} we get:
\begin{multline}
    \label{eq:sept26-15}
    2\mathbb{E}[\|\frac{1}{n}\sum_{i \in [n]} (\nabla
    f_i(\overline{\bm{w}}_{k,\tau})- \nabla f_i(\bm{w}^{(i)}_{k, \tau}))\|^2]
    \\
    \leq
    2L^2\frac{1}{n}\sum_{i \in [n]}\Big(2(1- \frac{2}{n})\bm{N}^2_{k,i} + \frac{2}{n^2}\sum_{i \in [n]}\bm{N}^2_{k,i} 
    + 2\eta_k^2\tau^2(4\mathbb{E}[\|\nabla{f}_i(\bm{w}_k)\|^2] + 4L^2(2+3\eta_k^2L^2)\bm{N}^2_{k,i}) + 2\bm{n}^2_k(\tau^2+\tau)\sigma^2 + 
    3\eta_k^2L^2\bm{N}^2_{k,i}
    \Big)
    \\
    \leq 16\eta_k^2L^2\tau^2 \frac{1}{n}\sum_{i \in [n]}\mathbb{E}[\|\nabla {f}_i(\bm{w}{k})\|^2]+2L^2(2- \frac{2}{n}+3\eta_k^2L^2)\frac{1}{n}\sum_{i \in [n]}\bm{N}^2_{k,i} + 16\eta_k^2L^4\tau^2(2+3\eta_k^2L^2)\frac{1}{n}\sum_{i \in [n]}\bm{N}^2_{k,i}
    \\+ 4\eta_k^2L^2(\tau^2+\tau)\sigma^2.   
\end{multline}
So, by combining $M_1$ and $M_2$ in \cref{eq:sep26-1}, we get
\begin{multline}
    \label{eq:sept26-17}
    \mathbb{E}[\|\nabla f(\bm{w}_k) - \bm{u}_{k,\tau}\|^2] \leq 4\eta_k^2L^2\tau\sum_{t=0}^{\tau-1}\mathbb{E}[\|\bm{u}_{k,t}\|^2] + 16\eta_k^2L^2\tau^2 \frac{1}{n}\sum_{i \in [n]}\mathbb{E}[\|\nabla {f}_i(\bm{w}{k})\|^2]
    + 4\eta_k^2L^2(\tau^2+\frac{\tau}{n}+\tau)\sigma^2 
    \\
    + 4L^2(1+3\eta^2_kL^2)\frac{1}{n}\sum_{i \in [n]}\bm{N}^2_{k,i}  + 16\eta_k^2L^4\tau^2(2+3\eta^2_kL^2)\frac{1}{n}\sum_{i \in [n]}\bm{N}^2_{k,i}.
\end{multline}
Now summing up \cref{eq:sept26-17} for all $\tau \in \{0,\ldots,E-1\}$, we get:
\begin{multline}
    \label{eq:sept26-18}
    \sum_{\tau=0}^{E-1}\mathbb{E}[\|\nabla f(\bm{w}_k) - \bm{u}_{k,\tau}\|^2] \leq2\eta_k^2L^2E^2\sum_{\tau=0}^{E-1}\mathbb{E}[\|\bm{u}_{k,\tau}\|^2]
    + \frac{16}{3}\eta_k^2L^2E^3\frac{1}{n}\sum_{i \in [n]}\mathbb{E}[\|\nabla {f}_i(\bm{w}_{k})\|^2] 
    + 2\eta_k^2L^2E^2(\frac{1}{n}\\+\frac{2E}{3} +1)\sigma^2
    + 4L^2E\big(1+3\eta^2_kL^2 + \frac{4}{3}\eta^2_kL^2E^2(2+3\eta^2_kL^2)\big)\frac{1}{n}\sum_{i \in [n]}\bm{N}^2_{k,i}. 
\end{multline}
\end{proof}
\begin{lemma}
\label{sept26-lem1}
For $\eta_k L E \leq \frac{1}{2}$, we have:
\begin{align*}
    \sum_{t=0}^{\tau-1} \mathbb{E}[\|\nabla {f}_i(\bm{w}^{(i)}_{k, t})\|^2] \leq 
    4\tau \mathbb{E}[\|\nabla {f}_i(\bm{w}_k)\|^2] +
    4L^2\tau(2+3\eta^2_kL^2)\bm{N}_{k,i}^{2}
    + \sigma^2.
\end{align*}
\end{lemma}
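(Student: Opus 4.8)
The plan is to control each local-iterate gradient norm $\|\nabla f_i(\bm{w}_{k,t}^{(i)})\|$ by the global-iterate gradient norm $\|\nabla f_i(\bm{w}_k)\|$ plus a local-drift term, and then bound the drift through a self-referential (Gr\"onwall-type) inequality. First I would use $L$-smoothness (\cref{as1}) and Young's inequality to write, for each $t$,
\[
\mathbb{E}[\|\nabla f_i(\bm{w}_{k,t}^{(i)})\|^2] \le 2\,\mathbb{E}[\|\nabla f_i(\bm{w}_k)\|^2] + 2L^2\,\mathbb{E}[\|\bm{w}_{k,t}^{(i)} - \bm{w}_k\|^2],
\]
which reduces the claim to a bound on the drift $\mathbb{E}[\|\bm{w}_{k,t}^{(i)} - \bm{w}_k\|^2]$.

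Next I would expand $\bm{w}_{k,t}^{(i)} - \bm{w}_k$ via the local-update identity \eqref{eq:feb28-102}, which decomposes it into three pieces: the downlink perturbation $\bm{\nu}_k^{(i)}$, the accumulated stochastic steps $-\eta_k\sum_{s=0}^{t-1}\widehat{\bm{u}}_{k,s}^{(i)}$, and the first-step downlink correction $-\eta_k\big(\widetilde\nabla f_i(\bm{w}_k+\bm{\nu}_k^{(i)};\mathcal{B}_{k,0}^{(i)}) - \widetilde\nabla f_i(\bm{w}_k;\mathcal{B}_{k,0}^{(i)})\big)$. After splitting the squared norm (Jensen/Young), I would bound the accumulated-step term with \eqref{eq:feb28-3-1} — which re-introduces $\sum_{s=0}^{t-1}\mathbb{E}[\|\nabla f_i(\bm{w}_{k,s}^{(i)})\|^2]$ and an additive term of order $\tau\sigma^2$ — bound the correction term by $L^2\|\bm{\nu}_k^{(i)}\|^2$ using the $L$-Lipschitzness of the per-batch stochastic gradient (the same usage as in the proof of \cref{sep26-lem3}, e.g.\ in \eqref{eq:feb28-117}), and use \eqref{eq:feb28-3-2}, i.e.\ $\mathbb{E}[\|\bm{\nu}_k^{(i)}\|^2] = \bm{N}_{k,i}^2$. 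Using $t\le\tau-1$ and summing over $t=0,\dots,\tau-1$, this produces a self-referential inequality for $S_\tau := \sum_{t=0}^{\tau-1}\mathbb{E}[\|\nabla f_i(\bm{w}_{k,t}^{(i)})\|^2]$ of the (schematic) form
\[
S_\tau \le 2\tau\,\mathbb{E}[\|\nabla f_i(\bm{w}_k)\|^2] + L^2\tau\,(c_1 + c_2\eta_k^2 L^2)\,\bm{N}_{k,i}^2 + c_3\,L^2\eta_k^2 E^2\,\sigma^2 + c_4\,\eta_k^2 L^2 E^2\,S_\tau,
\]
with $c_1,\dots,c_4$ absolute constants.

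Finally I would invoke the hypothesis $\eta_k L E \le \tfrac12$, hence $\eta_k^2 L^2 E^2 \le \tfrac14$, twice: once to force $c_4\,\eta_k^2 L^2 E^2 \le \tfrac12$ so the $S_\tau$ term can be absorbed into the left-hand side (costing at most a factor $2$), and once more to collapse the $E^2$ multiplying $\sigma^2$, shrinking that contribution down to the bare constant $\sigma^2$ appearing in the statement. I expect the only genuine difficulty to be the constant bookkeeping: to land exactly on $4\tau$, $4L^2\tau(2+3\eta_k^2 L^2)$, and a clean $\sigma^2$ — rather than looser versions — one must pick the Young's-inequality weights separating $\bm{\nu}_k^{(i)}$, the accumulated steps, and the first-step correction non-uniformly, and track carefully which terms scale as $\tau$, $\tau^2$, $E$, or $E^2$ so that the $\eta_k L E \le \tfrac12$ budget is spent precisely where it is needed. (An equivalent route is to prove the bound directly by induction on $\tau$, using the same per-step estimate; the inductive step hides the same absorption and the same constant juggling.)
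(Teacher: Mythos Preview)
Your proposal is correct and follows essentially the same route as the paper's proof: the paper starts from the same smoothness-plus-Young splitting, expands $\bm{w}_{k,t}^{(i)}-\bm{w}_k$ via \eqref{eq:feb28-102} into the three pieces you list, applies \eqref{eq:feb28-3-1} to the accumulated steps and $L$-Lipschitzness of the per-batch gradient to the first-step correction, sums over $t$ to reach exactly the self-referential form you wrote (with $c_1=2$, $c_2=3$, $c_3=c_4=2$, using $\tau^2\le E^2$), and then uses $\eta_k L E\le\tfrac12$ once to absorb the $S_\tau$ term (factor $2$) and once to shrink the $\sigma^2$ coefficient to $\tfrac12\cdot 2=1$. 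Your remark about non-uniform Young weights is also on point: the paper exploits $\mathbb{E}[\bm{\nu}_k^{(i)}]=0$ to kill some cross terms and arrives at the asymmetric coefficients $2,2,3$ rather than a uniform $3,3,3$.
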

\begin{proof}
\begin{align}
    \nonumber
    \mathbb{E}[\|\nabla {f}_i(\bm{w}^{(i)}_{k, t})\|^2] &= \mathbb{E}[\|\nabla {f}_i(\bm{w}^{(i)}_{k, t}) - \nabla {f}_i(\bm{w}_k) + \nabla {f}_i(\bm{w}_k)\|^2]
    \\
    \nonumber
    &\leq 2 \mathbb{E}[\|\nabla {f}_i(\bm{w}_k)\|^2] + 2 \mathbb{E}[\|\nabla {f}_i(\bm{w}^{(i)}_{k, t}) - \nabla {f}_i(\bm{w}_k)\|^2]
    \\
    \label{eq:feb28-7}
    &\leq 2 \mathbb{E}[\|\nabla {f}_i(\bm{w}_k)\|^2] + 2L^2 \mathbb{E}[\|\bm{w}^{(i)}_{k, t} - \bm{w}_k\|^2].
\end{align}
But: 
\begin{multline}
    \label{eq:feb28-8}
    \mathbb{E}[\|\bm{w}^{(i)}_{k, t} - \bm{w}_k\|^2]  = \mathbb{E}\Big[\Big\|\bm{\nu}_k^{(i)} - \eta_k\Big( \sum_{t'=0}^{t-1}\widehat{\bm{u}}_{k,t'}^{(i)} + \widetilde\nabla {f}_i(\bm{w}_k + \bm{\nu}_k^{(i)}; \mathcal{B}^{(i)}_{k, 0}) - \widetilde\nabla {f}_i(\bm{w}_k ; \mathcal{B}^{(i)}_{k, 0})\Big)\Big\|^2\Big]
    \\
    =  \mathbb{E}[\|\bm{\nu}_k^{(i)}\|^2] + \eta_k^2\mathbb{E}[\|\sum_{t'=0}^{t-1}\widehat{\bm{u}}_{k,t'}^{(i)}\|^2] 
    + \eta_k^2\mathbb{E}[\|\nabla\widetilde{f}_i(\bm{w}_k + \bm{\nu}_k^{(i)}; \mathcal{B}^{(i)}_{k, 0}) - \widetilde\nabla {f}_i(\bm{w}_k ; \mathcal{B}^{(i)}_{k, 0})\|^2]
    \\+ 2 \mathbb{E}[\langle \bm{\nu}_k^{(i)}, -\eta_k\sum_{t'=0}^{t-1}\widehat{\bm{u}}_{k,t'}^{(i)}\rangle]
    + 2 \mathbb{E}[\langle \bm{\nu}_k^{(i)}, -\eta_k\{\nabla\widetilde{f}_i(\bm{w}_k+ \bm{\nu}_k^{(i)} ; \mathcal{B}^{(i)}_{k, 0})- \widetilde\nabla {f}_i(\bm{w}_k ; \mathcal{B}^{(i)}_{k, 0})\}\rangle]
    \\+ 2 \mathbb{E}[\langle -\eta_k\sum_{t'=0}^{t-1}\widehat{\bm{u}}_{k,t'}^{(i)}, -\eta_k\{\nabla\widetilde{f}_i(\bm{w}_k+ \bm{\nu}_k^{(i)} ; \mathcal{B}^{(i)}_{k, 0})- \widetilde\nabla {f}_i(\bm{w}_k  ; \mathcal{B}^{(i)}_{k, 0})\}\rangle] 
\end{multline} 
Now using the fact that $\mathbb{E}[\bm{n}^{i}_{k}] = 0$ and Young's Inequality in \cref{eq:feb28-8}, we get:
\begin{align}
    \nonumber
    \mathbb{E}[\|\bm{w}^{(i)}_{k, t} - \bm{w}_k\|^2] &\leq 
    2\mathbb{E}[\|\bm{\nu}_k^{(i)}\|^2] 
    + 2\eta_k^2\mathbb{E}[\|\sum_{t'=0}^{t-1}\widehat{\bm{u}}_{k,t'}^{(i)}\|^2] 
     + 3\eta_k^2\mathbb{E}[\|\nabla\widetilde{f}_i(\bm{w}_k + \bm{\nu}_k^{(i)};
    \mathcal{B}^{(i)}_{k, 0}) - \widetilde\nabla {f}_i(\bm{w}_k ;
    \mathcal{B}^{(i)}_{k, 0})\|^2]
    \\
    \label{eq:feb28-9}
    &\leq 2\bm{N}_{k,i}^{2} 
    + 2\eta^2_k t(\sum_{t'=0}^{t-1}\mathbb{E}[\|\nabla{f}_{i}(\bm{w}^{(i)}_{k,t'}\|^2]) + \sigma^2) + 3\eta^2_kL^2\bm{N}_{k,i}^2
\end{align}
Putting \cref{eq:feb28-9} in \cref{eq:feb28-7}, we get:
\begin{align}
    \label{eq:feb28-10}
    \mathbb{E}[\|\nabla {f}_i(\bm{w}^{(i)}_{k, t})\|^2] \leq 
    2 \mathbb{E}[\|\nabla {f}_i(\bm{w}_k)\|^2] +
    2L^2(2+3\eta^2_kL^2)\bm{N}_{k,i}^{2} 
    + 4\eta^2_kL^2t(\sum_{t'=0}^{t-1}\mathbb{E}[\|\nabla{f}_{i}(\bm{w}^{(i)}_{k,t'})\|^2] + \sigma^2) 
\end{align}
Summing up \cref{eq:feb28-10} for all $t \in \{0,\ldots,\tau-1\}$, we get:
\begin{align}
    \sum_{t=0}^{\tau-1} \mathbb{E}[\|\nabla {f}_i(\bm{w}^{(i)}_{k, t})\|^2]  \leq 2\tau \mathbb{E}[\|\nabla {f}_i(\bm{w}_k)\|^2] +
    2L^2\tau(2+3\eta^2_kL^2)\bm{N}_{k,i}^{2} 
    + 2\eta^2_kL^2\tau^2(\sum_{t=0}^{\tau-1}\mathbb{E}[\|\nabla{f}_{i}(\bm{w}^{(i)}_{k,t})\|^2]+ \sigma^2)
\end{align}
Let us set $\eta_k L E \leq 1/2$. Then:
\begin{align}
    \label{eq:feb28-12}
    \sum_{t=0}^{\tau-1} \mathbb{E}[\|\nabla {f}_i(\bm{w}^{(i)}_{k, t})\|^2] \leq 2\tau \mathbb{E}[\|\nabla {f}_i(\bm{w}_k)\|^2] +
    2L^2\tau(2+3\eta^2_kL^2)\bm{N}_{k,i}^{2} 
    + \frac{1}{2}\sum_{t=0}^{\tau-1}\mathbb{E}[\|\nabla{f}_{i}(\bm{w}^{(i)}_{k,t})\|^2] + \frac{1}{2}\sigma^2
\end{align}
Simplifying, we get:
\begin{align}
    \label{eq:feb28-13}
    \sum_{t=0}^{\tau-1} \mathbb{E}[\|\nabla {f}_i(\bm{w}^{(i)}_{k, t})\|^2] \leq 
    4\tau \mathbb{E}[\|\nabla {f}_i(\bm{w}_k)\|^2] +
    4L^2\tau(2+3\eta^2_kL^2)\bm{N}_{k,i}^{2} 
    + \sigma^2
\end{align}
\end{proof}

\bibliographystyle{ieeetr}
\nocite{*}
\bibliography{Arxic/root_one_column}
\end{document}